\title{Sliced Gromov-Wasserstein}
\author{
  Titouan Vayer \\
  Univ. Bretagne-Sud, CNRS, IRISA \\
  F-56000 Vannes\\
  \texttt{titouan.vayer@irisa.fr} \\
  \And
  R\'emi Flamary \\
  Univ. C\^ote d'Azur, CNRS, OCA Lagrange \\
  F-06000 Nice \\
  \texttt{remi.flamary@unice.fr} \\
  \AND
  Romain Tavenard \\
  Univ. Rennes, CNRS, LETG \\
  F-35000 Rennes \\
  \texttt{romain.tavenard@univ-rennes2.fr} \\
  \And
  Laetitia Chapel \\
  Univ. Bretagne-Sud, CNRS, IRISA \\
  F-56000 Vannes \\
  \texttt{laetitia.chapel@irisa.fr} \\
  \And
  Nicolas Courty \\
  Univ. Bretagne-Sud, CNRS, IRISA \\
  F-56000 Vannes \\
  \texttt{nicolas.courty@irisa.fr} \\
}
\newcommand{\couplingset}{\Pi}
\newcommand{\Pm}{\mathcal{P}}
\newcommand{\R}{\mathbb{R}}
\newcommand{\E}{\mathbb{E}}
\newcommand{\Sp}{\mathbf{S}}
\newcommand{\gw}{GW}
\newcommand{\sgw}{SGW}
\newcommand{\risgw}{RISGW}
\newcommand{\gm}{GM}
\newcommand{\D}{\Delta}
\newcommand{\Sn}{S_{n}}
\newcommand{\insided}{c}
\newcommand{\supp}{\text{supp}}
\newcommand{\Stief}{\mathbb{V}_{p}(\R^{q})}
\newcommand{\xbf}{\mathbf{x}}
\newcommand{\ebf}{\mathbf{e}}
\newcommand{\ybf}{\mathbf{y}}
\newcommand{\sbf}{\mathbf{s}}
\def\G{\pi}
\def\GG{\boldsymbol\G}
\newcommand{\integ}[1]{{\{1,\dots,#1\}}}
\newcommand{\kron}{\otimes}
\newtheorem{prop}{Proposition}[section]
\newcommand{\ie}{\textit{i.e.}}
\def\epsilonb{\boldsymbol\epsilon}
\def\thetab{\boldsymbol\theta}
\newtheorem{corr}{Corollary}[section]
\def\X{{\bf X}}
\def\R{{\mathbb{R}}}
\def\tr{{\text{tr}}}
\def\one{{\mathbf{1}}}
\newcommand{\Qbf}{\mathbf{Q}}
\newcommand{\dr}{\mathrm{d}}
\theoremstyle{definition}
\theoremstyle{theorem}
\newtheorem{theorem}{Theorem}[section]
\theoremstyle{lemma}
\newtheorem{lemma}[theorem]{Lemma}
\definecolor{darkgreen}{rgb}{0,0.5,0}
\DeclareMathOperator*{\argmin}{arg\,min}
\DeclareMathOperator*{\argmax}{arg\,max}
\begin{document}
\begin{titlepage}
\topskip0pt
\vspace*{\fill} 
\begin{center}
{\LARGE \bf To the reader's attention} \\
\vspace{1cm}
\justifying
{\large This paper contains an error in the proof of Theorem \ref{qap_main}. Indeed, as shown in \cite{Beinert}, this result is false and counterexamples exist, even if, in practice, numerical simulations suggest that Theorem \ref{qap_main} is ‘‘often'' true. More discussions can be found in \cite{Dumont}. The other theoretical and numerical results of this paper remain valid. We have decided to leave the paper as it is because we think it can contribute to the scientific discussion, which is one of trial and error.}
\vspace{1cm}
\begin{flushright}
\hfill The authors \\
20/10/2022
\end{flushright}
\end{center}
\vspace*{\fill}

\bibliographystyle{unsrt}

\end{titlepage}

\newpage

\maketitle

\begin{abstract}

Recently used in various machine learning contexts, the Gromov-Wasserstein distance ($\gw$) allows for comparing distributions whose supports do not necessarily lie in the same metric space. 
However, this Optimal Transport (OT) distance requires solving a complex non convex quadratic program which is most of the time very costly both in time and memory. 
Contrary to $\gw$, the Wasserstein distance ($W$) enjoys several properties ({\em e.g.} duality) that permit large scale optimization. Among those, the solution of $W$ on the real line, that only requires sorting
discrete samples in 1D, allows defining the Sliced Wasserstein ($SW$) distance. This paper proposes a new divergence based on $\gw$ akin to $SW$. 
We first derive a closed form for $\gw$ when dealing with 1D distributions, based on a
 new result for the related quadratic assignment problem. 
We then define a novel OT discrepancy that can deal with large scale distributions via a slicing approach and we show how it relates to the $\gw$ distance while being $O(n\log(n))$ to compute. We illustrate the behavior of this 
so called Sliced Gromov-Wasserstein ($\sgw$) discrepancy in experiments where we demonstrate its ability to tackle similar problems as $\gw$ while being several order of magnitudes
faster to compute. 
\end{abstract}

\section{Introduction}

Optimal Transport (OT) aims at defining ways to compare probability distributions.
One typical example is the Wasserstein distance ($W$) that has been used for varied tasks ranging from computer graphics~\cite{bonneel2016wasserstein} to signal processing~\cite{Thorpe2017}.
It has proved to be very useful for a wide range of machine learning tasks including generative modelling (Wasserstein GANs~\cite{arjovsky17a}), domain adaptation~\cite{courty2017optimal} or supervised embeddings for classification purposes~\cite{huang2016}. However one limitation of this approach is that it implicitly assumes \emph{aligned} distributions, \textit{i.e.} that lie in the same metric space or at least between spaces where a meaningful distance \emph{across} domains can be computed. From another perspective, the Gromov-Wasserstein ($\gw$) distance benefits from more flexibility when it comes to the more challenging scenario where heterogeneous distributions are involved, \emph{i.e.} distributions whose supports do not necessarily lie on the same metric space. It only requires modelling the topological or relational aspects of the distributions \emph{within} each domain in order to compare them. As such, it has recently received a high interest in the machine learning community, solving learning tasks such as heterogenous domain adaptation~\cite{ijcai2018-412}, deep metric alignment~\cite{gwcnn}, graph classification~\cite{vay_struc} or generative modelling~\cite{bunne_gan}.

OT is known to be a computationally difficult problem: the Wasserstein distance involves a linear program that most of the time prevents its use to settings with more than a few tens of thousands of points. For medium to large scale problems, some methods relying  \textit{e.g.} on entropic regularization \cite{cuturi2013sinkhorn} or dual formulation \cite{genevay_stochastic} have been investigated in the past years. Among them, one builds upon the mono-dimensional case where computing the Wasserstein distance can be trivially solved in $O(n \log n)$ by sorting points in order and pairing them from left to right. While this 1D case has a limited interest \emph{per se}, it is one of the main ingredients of the \emph{sliced} Wasserstein distance ($SW$)~\cite{rabin2011wasserstein}: high-dimensional data are linearly projected into sets of mono-dimensional distributions, the sliced Wasserstein distance being the average of the Wasserstein distances between all projected measures. 
This framework provides an efficient algorithm that can handle millions of
points and has similar properties to the Wasserstein distance \cite{bonotte_phd}. As
such, it has attracted attention and has been successfully used in various tasks
such as barycenter computation~\cite{bonneel:hal-00881872}, classification~\cite{Kolouri_2016_CVPR}
or generative modeling \cite{kolouri2018sliced,cvpr_sliced_gan,sliced_wass_flow_liutkus_2019,Wu_2019_CVPR}.

Regarding $\gw$, the optimization problem is a non-convex quadratic program, with
a prohibitive computational cost for problems with more than a few thousands of
points: the number of terms grows quadratically with the number of samples
and one cannot rely on a dual formulation as for Wasserstein. However several
approaches have been proposed to tackle its computation. Initially approximated
by a linear lower bound \cite{Chowdhury_memoli}, $\gw$ was thereafter estimated
through an entropy regularized version that can be efficiently computed by
iterating Sinkhorn
projections~\cite{Solomon:2016:EMA:2897824.2925903,peyre:hal-01322992}. More
recently a conditional gradient scheme relying on linear program OT solvers
was proposed in \cite{vay_struc}. However, as discussed more in detail in
Sec.~\ref{sec:gw:comp}, all these methods are still too
costly for large scale scenarii. 
In this paper,  we propose a new formulation related to $\gw$ that lowers
its computational cost. To that extent, we derive a novel OT discrepancy
called Sliced Gromov-Wasserstein ($\sgw$). It is similar in spirit to the Sliced
Wasserstein distance as it relies on the exact computation of 1D $\gw$ distances of
distributions projected onto random directions. We notably provide the first 1D
closed form solution of the $\gw$ problem by proving a new result about the
Quadratic Assignment Problem (QAP) for matrices that are squared euclidean
distances of real numbers.
Computation of $\sgw$ for discrete distributions of $n$ points is
$O(L \, n\log(n))$, where $L$ is the number of sampled directions. This
complexity is the same as the Sliced-Wasserstein distance and is even lower than computing the value of $\gw$ which is $O(n^3)$ for a known coupling (once the optimization
problem solved) in the general case~\cite{peyre:hal-01322992}. Experimental validation shows that $\sgw$
retains various properties of $\gw$ while being much cheaper to compute,
allowing its use in difficult large scale settings such as large mesh matching
or generative adversarial networks.

\paragraph{Notations}

The simplex histogram with $n$ bins will be denoted as  $\Sigma_{n}=\{a \in (\mathbb{R}_{+})^{n},\sum_{i} a_{i}=1\}$. 
For two histograms $a \in \Sigma_{n}$ and $b \in \Sigma_{m}$  we note $\couplingset(a,b)$ the set of all couplings of $a$ and $b$, \emph{i.e.} the set $\couplingset(a,b)=\{ \pi \in \mathbb{R}_{+}^{n \times m} \| \sum_{i}\pi_{i,j}=b_j ; \sum_{j}\pi_{i,j}=a_i \}$.  $\Sn$ is the set of all permutations of $\{1,...,n\}$.

We note $\|.\|_{k,p}$ the $\ell_k$ norm on $\R^{p}$. For any norm $\|.\|$ we note $d_{\|.\|}$ the distance induced by this norm.

$\delta_{x}$ is the dirac measure in $x$ \textit{s.t.} a discrete measure $\mu \in \Pm(\R^{p})$ can be written $\mu= \sum_{i=1}^{n} a_{i} \delta_{x_{i}}$ with $x_{i}\in\R^{p}$. For a continuous map $f : \R^{p} \rightarrow \R^{q}$ we note $f\#$ its push-forward operator. $f \#$ moves the positions of all the points in the support of the measure to define a new measure $f \# \mu \in \Pm(\R^{q})$ \textit{s.t.} $f \# \mu \stackrel{\text { def. }}{=} \sum_{i} a_{i} \delta_{f(x_{i})}$. We note $\mathcal{O}(p)$ the subset of $\R^{p\times p}$ of all orthogonal matrices. Finally $\Stief$ is the Stiefel manifold, \textit{i.e.} the set of all orthonormal $p$-frames in $\R^{q}$ or equivalently $\Stief=\{\D \in \R^{q\times p} | \D^{T}\D=I_{p} \}$.

\section{Gromov-Wasserstein distance}
\label{sec:back}

OT provides a way of inferring correspondences between two distributions by leveraging their
intrinsic geometries. If one has measures $\mu$ and $\nu$ on two spaces $X$ and
$Y$, OT aims at finding a correspondence (or \emph{transport}) map $\pi \in
\mathcal{P}(X\times Y)$ such that the marginals of $\pi$ are respectively $\mu$ and
$\nu$. When a meaningful distance or cost $c : X\times Y \mapsto \R_{+}$
\emph{across} the two domains can be computed, classical OT relies on minimizing
the total transportation cost between the two distributions $\int_{X \times Y}
c(x,y)d\pi(x,y)$ \textit{w.r.t.} $\pi$. The minimum total cost is often called
the Wasserstein distance between $\mu$ and $\nu$~\cite{Villani}.

However, this approach fails when a meaningful cost \emph{across} the
distributions cannot be defined, which is the case when $\mu$ and $\nu$ live
for instance in
Euclidean spaces of different dimensions or more generally when $X$ and
$Y$ are \emph{unaligned}, \textit{i.e.} when their features are not in
correspondence. This is particularly the case for features learned with deep learning as they
can usually be arbitrarily rotated or permuted. In this context, the $W$ distance with the naive cost $c(x,y)=\|x-y\|$ fails at capturing the similarity between the distributions. Some works address this issue by realigning spaces $X$ and
$Y$ using a global transformation before using the classical $W$
distance~\cite{pmlr-v89-alvarez-melis19a}. 
From another perspective, the so-called $\gw$ distance~\cite{memoli_gw} has been investigated in the past few years and rather relies on comparing \emph{intra}-domain distances $c_{X}$ and $c_{Y}$. 
\paragraph{Definition and basic properties}
	
Let $\mu \in \Pm(\R^{p})$ and $\nu \in \Pm(\R^{q})$ with $p\leq q$ be discrete measures on Euclidean spaces with $\mu= \sum_{i=1}^{n} a_{i} \delta_{x_{i}}$ and $\nu= \sum_{i=1}^{m} b_{j} \delta_{y_{j}}$ of supports $X$ and $Y$, where  $a \in \Sigma_{n}$ and $b \in \Sigma_{m}$ are histograms. 

Let $c_{X} : \R^{p} \times \R^{p} \rightarrow \R_{+}$ (\textit{resp.} $c_{Y}: \R^{q} \times \R^{q} \rightarrow \R_{+}$) measures the similarity between the samples in $\mu$ (\textit{resp.} $\nu$). 
The Gromov-Wasserstein ($\gw$) distance is defined as:
\begin{equation}
\label{gw}
\gw_{2}^{2}(c_{X},c_{Y},\mu,\nu)= \underset{\pi \in \couplingset(a,b)}{\min} J(c_{X},c_{Y},\pi)  \\
\end{equation}
where
\begin{equation*}
J(c_{X},c_{Y},\pi) =\sum_{i,j,k,l} \big| c_{X}(x_{i},x_{k})-c_{Y}(y_{j},y_{l}) \big|^{2} \pi_{i,j}\pi_{k,l}.
\end{equation*}
When $p=q$ and $c_X=c_Y$ we will write $\gw_{2}^{2}(c_{X},\mu,\nu)$ instead of $\gw_{2}^{2}(c_{X},c_{Y},\mu,\nu)$. The resulting coupling $\pi$ is a fuzzy correspondance map between the points of the distributions which tends to associate pairs of points with similar distances within each pair: the more similar $\insided_{X}(x_{i},x_{k})$ is to $\insided_{Y}(y_{j},y_{l})$, the stronger the transport coefficients $\pi_{i,j}$ and $\pi_{k,l}$ are. 
The $\gw$ distance enjoys many desirable properties when $c_{X}$ and $c_{Y}$ are distances so that $(X,c_{X},\mu)$ and $(Y,c_{Y},\nu)$ are called \emph{measurable metric spaces} (mm-spaces)~\cite{memoli_gw}. 
In this case, $\gw$ is a metric \textit{w.r.t.} the measure preserving isometries. 
More precisely, it is symmetric, satisfies the triangle inequality when considering three mm-spaces, and vanishes \textit{iff} the mm-spaces are \emph{isomorphic}, \textit{i.e.} when there exists a surjective function $f:X \rightarrow Y$ such that $f\#\mu=\nu$ ($f$ preserves the measures) and $\forall x,x' \in X^{2}, c_{Y}(f(x),f(x'))=c_{X}(x,x')$ ($f$ is an isometry). 
With a slight abuse of notations we will say that $\mu$ and $\nu$ are \emph{isomorphic} when
this occurs.  The $GW$ distance has several interesting properties, especially
in terms of invariances. It is clear from its formulation in eq. \eqref{gw} that it is
invariant to translations, permutations or rotations on both distributions when Euclidean distances are used. This last property
allows finding correspondences between complex word embeddings 
between different languages \cite{alvarez2018gromov}. Interestingly enough, when spaces have the same dimension, it has been proven that computing $\gw$ is equivalent to realigning both spaces using some linear transformation and then computing the $W$ distance on the realigned measures (Lemma 4.3 in \cite{pmlr-v89-alvarez-melis19a}).

$\gw$ can also be used with other similarity functions for $c_{X}$ and $c_{Y}$ (\textit{e.g.} kernels~\cite{peyre:hal-01322992} or squared integrable functions~\cite{Sturm2012}). 
In this work, we focus on squared euclidean distances, \textit{i.e.}  $c_{X}(x,x')=\|x-x'\|_{2,p}^{2}$, $c_{Y}(y,y')=\|y-y'\|_{2,q}^{2}$. This particular case is tackled by the theory of \emph{gauged measure spaces} \cite{Sturm2012,Chowdhury_memoli} where authors generalize mm-spaces with weaker assumptions on $c_{X},c_{Y}$ than the distance assumptions. More importantly in our context, invariants are the same as for distances since $\gw$ still vanishes $\textit{iff}$ there exists a measure preserving isometry (\emph{cf.} supplementary material) and the symmetry and triangle inequality are also preserved (see \cite{Chowdhury_memoli}).

\paragraph{Computational aspects}
\label{sec:gw:comp} The optimization problem \eqref{gw} is a non-convex Quadratic Program (QP).
Those problems are notoriously hard to solve since one cannot rely on convexity
and only descent methods converging to local minima are available. The problem
can be tackled by solving iterative linearizations of the quadratic function with a
conditional gradient as done in \cite{vay_struc}. In this case, each iteration
requires the optimization of a classical OT problem, that is $O(n^3)$. Another
approach consists in solving an approximation of problem \eqref{gw} by adding an
entropic regularization as proposed in \cite{peyre:hal-01322992}. This leads to
an efficient projected gradient algorithm where each iteration requires solving a regularized OT with the Sinkhorn algorithm that has be shown to
be nearly $O(n^2)$ and implemented efficiently on GPU. Still note that even though iterations for regularized $GW$
are faster, the computation of the final cost is $O(n^3)$ \cite[Remark 1]{peyre:hal-01322992}.

\section{From 1D GW to Sliced Gromov-Wasserstein}
\label{sec:sgw}

In this section, we first provide and prove a solution for an 1D
Quadratic Assignement Problem (QAP) with a quasilinear time complexity of $O(n\log(n))$. This new special case of the QAP is shown to be equivalent to the \emph{hard assignment} version of $\gw$, called the Gromov-Monge ($GM$) problem, with squared Euclidean cost for distributions lying on the real line. We also show that, in this context, solving $\gm$ is equivalent to solving $\gw$. We derive a new discrepancy named Sliced Gromov-Wasserstein ($\sgw$) that relies on these findings for efficient computation.

\paragraph{Solving a Quadratic Assignement Problem in 1D}

In Koopmans-Beckmann form~\cite{koopmans} a QAP takes as input two $n \times n$ matrices $A=(a_{ij})$, $B=(b_{ij})$. 
The goal is to find a permutation $\sigma \in \Sn$, the set of all permutations of $\{1, \cdots, n\}$, which minimizes the objective function {\small$ \sum_{i,j=1}^{n} a_{i,j} b_{\sigma(i),\sigma(j)}$}. 
In full generality this problem is NP-hard. 
However when matrices $A$ and $B$ have simple known structures, solutions can still be found (\textit{e.g.} diagonal structure such as Toeplitz matrix or separability properties such as $a_{i,j}=\alpha_{i}\alpha_{j}$ \cite{cela_new_2016,Cela-Schmuck-Wimer-Woeginger:2011,cela-2013}). 
We refer the reader to \cite{cela_all,Loiola_survey_qap} for comprehensive surveys on the QAP. 
The following theorem is a new result about QAP and states that it can be solved when $A$ and $B$ are squared Euclidean distance matrices of sorted real numbers: 

\begin{theorem}[A new special case for the Quadratic Assignment Problem]
\label{qap_main}
For real numbers $x_{1} < \dots < x_{n}$ and $y_{1} < \dots < y_{n}$, 
\begin{equation}
\label{eq:qap_main}
\underset{\sigma \in \Sn}{\min} \sum_{i,j}  - (x_{i}-x_{j})^{2}(y_{\sigma(i)}-y_{\sigma(j)})^{2}
\end{equation}
is achieved either by the identity permutation $\sigma(i)=i$ ($Id$) or the anti-identity permutation $\sigma(i)=n+1-i$ ($anti-Id$). In other words:
\begin{equation}
\exists \sigma \in \{Id,anti-Id\}, \ \sigma \in \underset{\sigma \in \Sn}{\argmin} \sum_{i,j}  - (x_{i}-x_{j})^{2}(y_{\sigma(i)}-y_{\sigma(j)})^{2} 
\end{equation}
\end{theorem}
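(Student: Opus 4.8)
The plan is to expand the objective as a function of the pairings and reduce the minimization to understanding which monotone structure on the $y$'s pairs best against the fixed sorted $x$'s. Writing $u_i = x_i$ and $v_i = y_{\sigma(i)}$, the objective is $-\sum_{i,j}(u_i-u_j)^2(v_i-v_j)^2$, so we must \emph{maximize} $F(\sigma) = \sum_{i,j}(x_i-x_j)^2(y_{\sigma(i)}-y_{\sigma(j)})^2$ over $\sigma \in \Sn$. First I would rewrite $F$ by expanding the squares: $(x_i-x_j)^2(y_{\sigma(i)}-y_{\sigma(j)})^2 = (x_i^2 - 2x_ix_j + x_j^2)(y_{\sigma(i)}^2 - 2y_{\sigma(i)}y_{\sigma(j)} + y_{\sigma(j)}^2)$. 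Summing over all $i,j$ and using that $\sum_i 1 = n$, $\sum_i x_i = $ const, etc., the cross terms collapse: the only term that actually depends on $\sigma$ through a nontrivial pairing is $\sum_{i,j} x_i x_j\, y_{\sigma(i)} y_{\sigma(j)} = \big(\sum_i x_i y_{\sigma(i)}\big)^2$, while terms like $\sum_{i,j} x_i^2 y_{\sigma(i)}^2 = \big(\sum_i x_i^2\big)\big(\sum_j y_j^2\big)$ are permutation-invariant. So $F(\sigma) = \text{const} + 8\big(\sum_i x_i y_{\sigma(i)}\big)^2 - (\text{lower-order }\sigma\text{-dependent cross terms})$; I would carefully track these to show that, up to a $\sigma$-independent additive constant, maximizing $F$ is equivalent to maximizing $\big(\sum_i x_i y_{\sigma(i)} - \bar x\, \textstyle\sum_j y_j\big)^2$ where $\bar x = \frac1n\sum_i x_i$ — i.e.\ maximizing the square of a centered inner product $\langle \tilde x, y_\sigma\rangle$ with $\tilde x = x - \bar x \mathbf 1$.

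Once the problem is reduced to maximizing $|\langle \tilde x, y_{\sigma}\rangle|$ over permutations, I would invoke the rearrangement inequality: among all permutations, $\sum_i \tilde x_i y_{\sigma(i)}$ is maximized when $\sigma$ sorts $y$ in the same order as $\tilde x$ (hence as $x$), namely $\sigma = Id$, and minimized when $\sigma$ sorts $y$ in the opposite order, namely $\sigma = anti\text{-}Id$. Since we want the maximum of the \emph{absolute value} (equivalently the square), the optimizer is whichever of these two extreme values has the larger magnitude, so the maximum of $F$, and therefore the minimum in \eqref{eq:qap_main}, is attained at $\sigma = Id$ or $\sigma = anti\text{-}Id$. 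This yields exactly the claimed statement.

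The step I expect to be the main obstacle is the bookkeeping in the expansion: verifying that \emph{every} $\sigma$-dependent contribution other than $\big(\sum_i x_i y_{\sigma(i)}\big)^2$ either cancels or recombines into something of the form (constant) $+$ (linear in $\sum_i x_i y_{\sigma(i)}$), so that the whole objective is a quadratic polynomial in the single scalar $t(\sigma) := \sum_i x_i y_{\sigma(i)}$. Concretely, after expansion one gets terms $\sum_{i,j} x_i^2 y_{\sigma(i)}^2$, $\sum_{i,j} x_i^2 y_{\sigma(j)}^2$, $\sum_{i,j} x_i^2 y_{\sigma(i)} y_{\sigma(j)}$, $\sum_{i,j} x_i x_j y_{\sigma(i)}^2$, $\sum_{i,j} x_i x_j y_{\sigma(i)} y_{\sigma(j)}$, and their symmetric partners; the first two are manifestly $\sigma$-invariant, the middle ones factor as $\big(\sum_i x_i^2\big)\big(\sum_j y_j^2\big)$-type products or as $\big(\sum_i y_{\sigma(i)}^2\big) = \sum_i y_i^2$ times something $\sigma$-invariant (again using that $\{y_{\sigma(i)}\} = \{y_i\}$ as a multiset), and only the fully-coupled term survives as $t(\sigma)^2$. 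Care is needed because some of these "invariant" sums still contain a factor like $\sum_i x_i y_{\sigma(i)}$ hidden inside — e.g.\ $\sum_{i,j} x_i x_j y_{\sigma(i)}^2$ is \emph{not} obviously $\sigma$-invariant — so I would either argue these out term by term, or, more cleanly, observe that $F(\sigma)$ as a function of the vector $y_\sigma$ depends only on its multiset of entries plus the inner product $t(\sigma)$, then finish with rearrangement. The rest (invoking the rearrangement inequality and taking the better of the two endpoints) is standard.
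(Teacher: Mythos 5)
There is a genuine gap, and it sits exactly at the step you flag as "bookkeeping". In the expansion, the term $\sum_{i,j} x_i^2 y_{\sigma(i)}^2$ equals $n\sum_i x_i^2 y_{\sigma(i)}^2$ (the sum over $j$ only contributes a factor $n$); it is \emph{not} $\bigl(\sum_i x_i^2\bigr)\bigl(\sum_j y_j^2\bigr)$ and it is \emph{not} permutation-invariant — you have conflated it with $\sum_{i,j} x_i^2 y_{\sigma(j)}^2$, which is the invariant one. Consequently your fallback claim, that $F(\sigma)$ depends only on the multiset $\{y_{\sigma(i)}\}$ together with the scalar $t(\sigma)=\sum_i x_i y_{\sigma(i)}$, is false as well: after translating so that $\sum_i x_i=\sum_i y_i=0$, the correct reduction of the objective is
\begin{equation*}
F(\sigma)=\mathrm{const}+2\Bigl(n\sum_i x_i^2 y_{\sigma(i)}^2+2\bigl(\textstyle\sum_i x_i y_{\sigma(i)}\bigr)^2\Bigr),
\end{equation*}
and the first summand is a genuine assignment term coupling $x_i^2$ with $y_{\sigma(i)}^2$, which is an independent statistic of $\sigma$ (two permutations with the same $|t(\sigma)|$ can give different values of $\sum_i x_i^2 y_{\sigma(i)}^2$). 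So the problem does not collapse to maximizing $|\langle \tilde x, y_\sigma\rangle|$, and the single application of the rearrangement inequality does not prove the theorem. Note also that after centering, $x_i^2$ is not monotone in $i$, so rearrangement applied to the quadratic term alone does not single out $Id$ or $anti$-$Id$ either; it is precisely the interaction of the two terms that makes \eqref{eq:qap_main} nontrivial.

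For comparison, the paper's proof arrives at the same reduced objective $n\sum_i x_i^2 y_{\sigma(i)}^2+2\bigl(\sum_i x_i y_{\sigma(i)}\bigr)^2$ and then spends all of its effort on the coupled term you discard: it relaxes the assignment to doubly stochastic matrices, observes the relaxed problem is a maximization of a convex quadratic (Hessian $\xbf\xbf^T\kron\ybf\ybf^T$) so extreme-point optima are permutations, applies a first-order optimality result for quadratic programs (Murty) to show any optimal permutation must also optimize a linearized problem whose $\sigma$-dependent part is $C(\sigma^*)\sum_{k,l}x_k y_l \pi_{kl}$ with $C(\sigma^*)=\sum_i x_i y_{\sigma^*(i)}$, and only then invokes the rearrangement inequality — concluding $Id$ if $C(\sigma^*)>0$ and $anti$-$Id$ if $C(\sigma^*)<0$, with a separate perturbation/continuity argument to handle the degenerate case $C(\sigma^*)=0$. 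Your proposal contains none of this machinery, and the elementary route you propose cannot work as stated because the identity it rests on is false. To repair it you would need an argument that controls $\sum_i x_i^2 y_{\sigma(i)}^2$ jointly with $t(\sigma)$, which is essentially what the paper's QP-relaxation argument provides.
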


To the best of our knowledge, this result is new. 
It states that if one wants to find the best one-to-one correspondence of real numbers such that their pairwise distances are best conserved, it suffices to sort the points and check whether the identity has a better cost than the anti-identity. 
Proof of this theorem can be found in the supplementary material. 
We postulate that this result also holds for $a_{ij}=|x_{i}-x_{j}|^{k}$ and $b_{ij}=-|y_{i}-y_{j}|^{k}$ with any $k\geq 1$ but leave this study for future works.

\paragraph {Gromov-Wasserstein distance on the real line}

When $n=m$ and $a_{i}=b_{j}=\frac{1}{n}$, one can look for the \emph{hard assignment} version of the $\gw$ distance resulting in the Gromov-Monge problem~\cite{memoli_gromov_monge_2018} associated with the following $\gm$ distance:
\begin{equation}
\gm_{2}(\insided_{X},\insided_{Y},\mu,\nu) = \min_{\sigma \in \Sn} \frac{1}{n^{2}} \sum_{i,j} \big| \insided_{X}(x_{i},x_{j})-\insided_{Y}(y_{\sigma(i)},y_{\sigma(j)}) \big|^{2} \label{eq:qapgw}
\end{equation}
where  $\sigma \in \Sn$ is a one-to-one mapping $\{1, \cdots, n\}  \rightarrow
\{1, \cdots, n\}$. Interestingly when the permutation $\sigma$ is known, the computation of the
cost is $O(n^2)$ which is far better than $O(n^3)$ for the general $\gw$ case. It is easy to see that this problem is equivalent to minimizing $ \sum_{i,j=1}^{n} a_{i,j} b_{\sigma(i),\sigma(j)}$ with $a_{ij}=\insided_{X}(x_{i},x_{j})$ and $b_{ij}=-\insided_{Y}(y_{\sigma(i)},y_{\sigma(j)})$. Thus, when a squared Euclidean cost is used for distributions lying on the real line, we exactly recover the solution of the $\gm$ problem defined in eq.~\eqref{eq:qap_main}. As matter of consequence, Theorem~\ref{qap_main} provides an efficient way of solving the Gromov-Monge problem.

Moreover, this theorem also allows finding a closed form for the $\gw$ distance. 
Indeed, some recent advances in graph matching state that, under some conditions on $A$ and $B$, the assignment problem is equivalent to its \emph{soft-assignment} counterpart~\cite{NIPS2018_7323}. 
This way, using both Theorem~\ref{qap_main} and~\cite{NIPS2018_7323}, one can find a solvable case for the $\gw$ distance when $p,q=1$ as stated in the following theorem: 
\begin{theorem}[Closed form for $\gw$ and $\gm$ in 1D for $n=m$ and uniform weights]
\label{sovable_gw_main}

Let $\mu= \frac{1}{n} \sum_{i=1}^{n} \delta_{x_{i}} \in \Pm(\R)$ and $\nu= \frac{1}{n} \sum_{i=1}^{n} \delta_{y_{j}} \in \Pm(\R)$ with $\R$ equipped with the Euclidean distance $d(x,x')=|x-x'|$. 
Then $\gw_{2}(d^{2},\mu,\nu)=\gm_{2}(d^{2},\mu,\nu)$. 

Moreover, if $x_{1} < \dots < x_{n}$ and $y_{1} < \dots < y_{n}$ this result is achieved either by the identity or the anti-identity permutation.
\end{theorem}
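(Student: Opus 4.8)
The plan is to prove that the infimum defining $\gw_{2}^{2}(d^{2},\mu,\nu)$ over the polytope $\couplingset(\tfrac{1}{n}\one,\tfrac{1}{n}\one)$ is actually attained at a \emph{scaled permutation matrix} $\tfrac{1}{n}P_{\sigma}$, $\sigma\in\Sn$; this immediately gives $\gw_{2}=\gm_{2}$, and Theorem~\ref{qap_main} then pins down the optimal $\sigma$. So the whole argument reduces to a "soft-assignment equals hard-assignment" statement for this specific objective, in the spirit of~\cite{NIPS2018_7323}, which here can be obtained by a direct concavity argument.

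First I would expand $J(d^{2},d^{2},\pi)=\sum_{i,j,k,l}\big((x_{i}-x_{k})^{2}-(y_{j}-y_{l})^{2}\big)^{2}\pi_{i,j}\pi_{k,l}$ into its three pieces. The two "pure" pieces $\sum_{i,j,k,l}(x_{i}-x_{k})^{4}\pi_{i,j}\pi_{k,l}$ and $\sum_{i,j,k,l}(y_{j}-y_{l})^{4}\pi_{i,j}\pi_{k,l}$ only involve $\pi$ through its marginals, so using $\sum_{j}\pi_{i,j}=\sum_{i}\pi_{i,j}=\tfrac{1}{n}$ they collapse to the constants $\tfrac{1}{n^{2}}\sum_{i,k}(x_{i}-x_{k})^{4}$ and $\tfrac{1}{n^{2}}\sum_{j,l}(y_{j}-y_{l})^{4}$. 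Hence minimizing $J$ is the same as maximizing the bilinear piece $S(\pi):=\sum_{i,j,k,l}(x_{i}-x_{k})^{2}(y_{j}-y_{l})^{2}\pi_{i,j}\pi_{k,l}$.

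The key step is then to substitute $(x_{i}-x_{k})^{2}=x_{i}^{2}-2x_{i}x_{k}+x_{k}^{2}$ and similarly for $y$, and to observe that in every resulting monomial except the product of the two cross terms, some index is "free" on the $x$-side or the $y$-side, so summation against a fixed marginal collapses that monomial to a term that is \emph{affine} in $\pi$. The one genuinely quadratic survivor is the contribution of $(-2x_{i}x_{k})(-2y_{j}y_{l})$, namely $4\big(\sum_{i,j}x_{i}y_{j}\pi_{i,j}\big)^{2}=4\,(x^{T}\pi y)^{2}$ with $x=(x_{i})_{i}$, $y=(y_{j})_{j}$. Therefore $J(d^{2},d^{2},\pi)=c+\ell(\pi)-8\,(x^{T}\pi y)^{2}$ for a constant $c$ and an affine map $\ell$. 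Since $\pi\mapsto -8\,(x^{T}\pi y)^{2}$ is concave (minus the square of a linear form), $J(d^{2},d^{2},\cdot)$ is concave on $\couplingset(\tfrac{1}{n}\one,\tfrac{1}{n}\one)$, so its minimum over this polytope is attained at an extreme point; by Birkhoff--von Neumann the extreme points are exactly the matrices $\tfrac{1}{n}P_{\sigma}$, $\sigma\in\Sn$. Hence $\gw_{2}^{2}(d^{2},\mu,\nu)=\min_{\sigma\in\Sn}J(d^{2},d^{2},\tfrac{1}{n}P_{\sigma})=\gm_{2}(d^{2},\mu,\nu)$. Finally, as already noted below~\eqref{eq:qapgw}, $\gm_{2}$ with a squared Euclidean cost coincides, up to the constant $c$ and the $\tfrac{1}{n^{2}}$ factor, with the QAP~\eqref{eq:qap_main}; when $x_{1}<\dots<x_{n}$ and $y_{1}<\dots<y_{n}$, Theorem~\ref{qap_main} gives that this minimum is reached by $Id$ or $anti\text{-}Id$, which finishes the proof.

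The main obstacle is the bookkeeping in the third paragraph: one must verify carefully that, after using the marginal constraints, every term other than $4\,(x^{T}\pi y)^{2}$ is affine in $\pi$, and that the sign of the surviving quadratic term is negative — it is precisely the resulting \emph{concavity} of $J$ (rather than convexity) that forces the optimum to a vertex of the transport polytope, and hence to a permutation.
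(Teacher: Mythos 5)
Your proof is correct, and it reaches the equality $\gw_{2}=\gm_{2}$ by a genuinely different route than the paper. The paper's proof invokes the result of Maron and Lipman \cite{NIPS2018_7323} as a black box: since the squared Euclidean distance matrices are conditionally negative definite of order $1$, their Theorem~1 guarantees that relaxing the permutation problem over $\Sn$ to the set of doubly stochastic matrices does not change the optimal value, and the sorted-case conclusion then follows from Theorem~\ref{qap_main} exactly as in your last step. You instead establish this hard/soft equivalence directly: after using the marginal constraints, $J(d^{2},d^{2},\pi)=\mathrm{const}+\ell(\pi)-8\,(x^{T}\pi y)^{2}$ with $\ell$ affine, so $J$ is concave on $\couplingset(\tfrac{1}{n}\one,\tfrac{1}{n}\one)$, its minimum is attained at an extreme point, and Birkhoff--von Neumann identifies the extreme points with the matrices $\tfrac{1}{n}P_{\sigma}$, $\sigma\in\Sn$. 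Your bookkeeping is right: the only genuinely quadratic monomial in the expansion of $\sum_{i,j,k,l}(x_{i}-x_{k})^{2}(y_{j}-y_{l})^{2}\pi_{i,j}\pi_{k,l}$ is the product of the two cross terms, giving $+4(x^{T}\pi y)^{2}$ there and hence $-8(x^{T}\pi y)^{2}$ in $J$, and the concavity/vertex argument is standard. In terms of what each approach buys: yours is self-contained and elementary, and in fact mirrors the algebra the paper already performs inside its proof of Theorem~\ref{qap_main} (where the relaxed problem is recognized as maximization of a convex quadratic with positive semi-definite Hessian $\xbf\xbf^{T}\kron\ybf\ybf^{T}$), so it removes the external dependency; the paper's citation-based route is shorter and is stated to cover squared Euclidean costs in arbitrary dimensions $p,q$, although your expansion generalizes as well, since in higher dimension the surviving quadratic term becomes a sum of squares of linear forms in $\pi$ and concavity is preserved. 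A purely notational remark: as in the paper's statement, the quantity you compute at a vertex is $\gw_{2}^{2}$ versus the value $\gm_{2}$ as defined in eq.~\eqref{eq:qapgw}; the equality of the two optimization problems' values is what is meant, and your chain of equalities is consistent with those definitions.
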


\begin{proof}[Sketch of the proof]
Since $d^{2}$ is conditionally negative definite of order 1 (see \textit{e.g.} Prop 3 and 4 in~\cite{NIPS2000_1862}), one can use the theory developed in~\cite{NIPS2018_7323} to prove that the assignment problem of $\gm$ is equivalent to $\gw$. 
Note that this result is true also for $c_{X}(x,x')=\|x-x'\|_{2,p}^{2}$ , $c_{Y}(y,y')=\|y-y'\|_{2,q}^{2}$ for any $p$ and $q$. Using Theorem~\ref{qap_main} for the $\gm$ distance concludes the proof.
\end{proof}

A more detailed proof is provided as supplementary material. In the following, we
only consider the case where $\mu$ and $\nu$ are discrete measures with the same
number of atoms $n=m$, uniform weights and $p\leq q$. Note also that, while both possible
solutions for problem \eqref{eq:qapgw} can be computed in $O(n\log(n))$, finding
the best one requires the computation of the cost which seems, at first sight, to have a $O(n^2)$ complexity. However, under the hypotheses of Theorem \ref{sovable_gw_main}, the cost can be computed in $O(n)$. Indeed, in this case, one can develop the sum in eq \eqref{eq:qapgw} to compute it in $O(n)$ operations using binomial expansion (see details in the supplementary materials) so that the overall complexity of finding the best assignment and computing the cost is $O(n\log(n))$ which is the same complexity as the Sliced Wasserstein distance.

\paragraph{Sliced Gromov-Wasserstein discrepancy} 
 Theorem \ref{sovable_gw_main} can be put in perspective with the Wasserstein distance for 1D distributions which is achieved by the identity permutation when points are sorted~\cite{OTFNT}. As explained in the introduction, this result was used to approximate the Wasserstein distance between measures of $\R^{p}$ using the so called Sliced Wasserstein (SW) distance~\cite{bonneel:hal-00881872}.{} 
The main idea is to project the points of the measures on lines of $\R^{p}$
where computing a Wasserstein distance is easy since it only involves a simple
sort and to average these distances. It has been proven that $SW$ and $W$ are
equivalent in terms of metric on compact domains~\cite{bonotte_phd}. 
In the same philosophy we build upon Theorem \ref{sovable_gw_main} to define a "sliced" version of the $\gw$ distance. In the following, we consider $\mu \in \Pm(\R^{p}),\nu \in \Pm(\R^{q})$ be probability distributions (not necessarily discrete).

Let $\mathbf{S}^{q-1}=\left\{\theta \in \mathbb{R}^{q} :\|\theta\|_{2,q}=1\right\}$ be
the $q$-dimensional hypersphere and $\lambda_{q-1}$ the uniform measure on
$\mathbf{S}^{q-1}$ . For $\theta$ we note $P_{\theta}$ the projection on $\theta$, \textit{i.e.} $P_{\theta}(x)=\langle x,
\theta \rangle$. For a linear map $\D \in \mathbb{R}^{q\times p}$ (identified with slight abuses of notation by its corresponding matrix),
we define the Sliced Gromov-Wasserstein (SGW) as follows:

\begin{equation}{}
\label{sgw}
\sgw_{\D}(\mu,\nu)= \underset{\theta \sim \lambda_{q-1}}{\E}[\gw^{2}_{2}(d^{2},P_{\theta}\#\mu_{\D},P_{\theta}\#\nu)]=  \fint_{\mathbf{S}^{q-1}}\gw^{2}_{2}(d^{2},P_{\theta}\#\mu_{\D},P_{\theta}\#\nu) d\theta 
\end{equation}

 where $\mu_{\D}=\D\#\mu \in \mathcal{P}(\R^{q})$ and $\fint_{\mathbf{S}^{q-1}}=\frac{1}{\text{vol}(\mathbf{S}^{q-1})}
 \int_{\mathbf{S}^{q-1}}$ is the normalized integral and can be seen as the expectation for $\theta$
 following a uniform distribution of support $\mathbf{S}^{q-1}$. The function $\D$ acts as a mapping for
 a point in $\R^{p}$ of the measure $\mu$ onto $\R^{q}$. When $p=q$ and when we consider $\D$ as the identity map we simply
 write $\sgw(\mu,\nu)$ instead of $\sgw_{I_{p}}(\mu,\nu)$. When $p < q$, one straightforward choice is $\D=\D_{pad}$ the "uplifting" operator which pads each point of the measure with zeros: $\D_{pad}(x)=(x_{1},\dots,x_{p},\underbrace{0,\dots,0}_{q-p})$. The procedure is illustrated in Fig \ref{sgw_figure}.

 In general fixing $\D$ implies that some properties of  $\gw$, such as the rotational invariance, are lost. Consequently, we also propose a variant of SGW that does not depends on the choice
 of $\D$ called Rotation Invariant SGW ($\risgw$) and expressed for $p\geq q$ as the following:
 \begin{equation}
  \label{risgw}
  \risgw(\mu,\nu)= \underset{\D \in \Stief}{\min}\sgw_{\D}(\mu,\nu).
  \end{equation}
 We propose to minimize $\sgw_{\D}$ with respect to $\D$ in the Stiefel
 manifold \cite{absil2009optimization} which can be seen as finding an optimal projector of the measure $\mu$ \cite{subspace_robust_wass_patty_2019,Deshpande_2019_CVPR}. This formulation comes at the cost of an
additional optimization step but allows recovering one key property of GW.
 When $p=q$ this encompasses for
 \textit{e.g.} all rotations of the space, making $\risgw$ invariant by rotation
 (see Theorem~\ref{propertiessgw_main}).

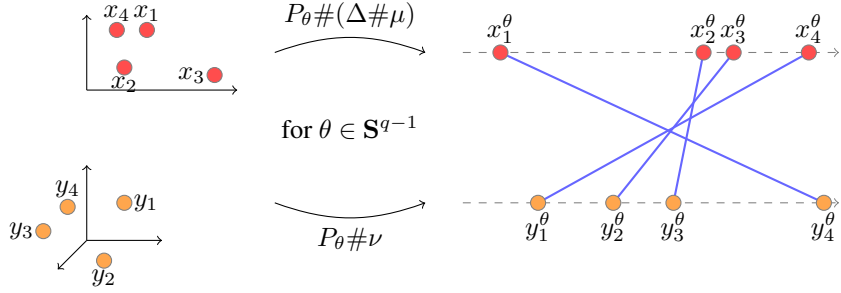
\begin{figure}
\begin{center}
\begin{tikzpicture}
\usetikzlibrary{decorations}
\usetikzlibrary{decorations.pathreplacing}

\draw[->] (0,0,0)--(1,0,0) ;
\draw[->] (0,0,0)--(0,1,0);
\draw[->] (0,0,0)--(0,0,1);

\fill[orange!70] (0.5,0.5,0) circle (0.1cm);
\fill[orange!70] (0.5,0,0.7) circle (0.1cm);
\fill[orange!70] (-0.5,0.2,0.2) circle (0.1cm);
\fill[orange!70] (-0.1,0.6,0.4) circle (0.1cm);
\draw[gray] (0.5,0.5,0) circle (0.1cm);
\draw[gray] (0.5,0,0.7) circle (0.1cm);
\draw[gray] (-0.5,0.2,0.2) circle (0.1cm);
\draw[gray] (-0.1,0.6,0.4) circle (0.1cm);
\draw (0.5,0.5,0) node[right]{$y_1$};
\draw (0.5,0,0.7) node[below]{$y_2$};
\draw (-0.5,0.2,0.2) node[left]{$y_3$};
\draw (-0.1,0.6,0.4) node[above]{$y_4$};

\draw[->] (0,2)--(2,2) ;
\draw[->] (0,2)--(0,3);
\fill[red!70] (0.4,2.8) circle (0.1cm);
\fill[red!70] (0.8,2.8) circle (0.1cm);
\fill[red!70] (0.5,2.3) circle (0.1cm);
\fill[red!70] (1.7,2.2) circle (0.1cm);
\draw[gray] (0.4,2.8) circle (0.1cm);
\draw[gray] (0.8,2.8) circle (0.1cm);
\draw[gray] (0.5,2.3) circle (0.1cm);
\draw[gray] (1.7,2.2) circle (0.1cm);
\draw (0.8,2.8) node[above]{$x_1$};
\draw (0.4,2.8) node[above]{$x_4$};
\draw (0.5,2.3) node[below]{$x_2$};
\draw (1.7,2.2) node[left]{$x_3$};

\draw[->] (2.5,0.5) to[out=340,in=200] (4.5,0.5);
\draw (3.5,0.3) node[below]{$P_{\theta}\#\nu$};

\draw[->] (2.5,2.5) to[out=20,in=160] (4.5,2.5);
\draw (3.5,2.7) node[above]{$P_{\theta}\#(\D\#\mu)$};

\draw (3.5,1.5) node{for $\theta$ $\in$ $\Sp^{q-1}$};
\foreach \x/\y in {2+4/5.6+4,3+4/4.6+4,3.8+4/4.2+4, 5.8+4/1.5+4} \draw[thick,blue!60](\y,2.5) -- (\x,0.5);

\draw[gray,dashed, ->]  (1+4,0.5) -- (6+4,0.5);
\foreach \y in {2+4,3+4,3.8+4, 5.8+4} \fill[orange!70] (\y,0.5) circle (0.1cm);
\foreach \y in {2+4,3+4,3.8+4, 5.8+4} \draw[gray] (\y,0.5) circle (0.1cm);
\foreach \y/\z in {2+4/1,3+4/2,3.8+4/3, 5.8+4/4} \draw(\y,0.5)node[below]{$y^{\theta}_\z$};

\draw[gray,dashed, ->]   (1+4,2.5) -- (6+4,2.5);
\foreach \x in {1.5+4,4.2+4,4.6+4, 5.6+4} \fill[red!70] (\x,2.5) circle (0.1cm);
\foreach \x in {1.5+4,4.2+4,4.6+4, 5.6+4} \draw[gray] (\x,2.5) circle (0.1cm);
\foreach \x/\z in {1.5+4/1,4.2+4/2,4.6+4/3, 5.6+4/4} \draw(\x,2.5)node[above]{$x^{\theta}_\z$};
\end{tikzpicture}
\end{center}
\caption{Example in dimension $p = 2$ and $q= 3$ (left) that are projected on the line (right). The solution for this projection is the anti-diagonal coupling. \label{sgw_figure}}
\end{figure}

Interestingly enough, $\sgw$ holds various properties of the $\gw$ distance as summarized in the following theorem:
\begin{theorem}{Properties of $\sgw$}
\label{propertiessgw_main}
\begin{itemize}
\item For all $\D$, $\sgw_{\D}$ and $\risgw$ are translation invariant. $\risgw$ is also rotational invariant when $p=q$, more precisely if $Q \in \mathcal{O}(p)$ is an orthogonal matrix, $\risgw(Q\#\mu,\nu)=\risgw(\mu,\nu)$ (same for any $Q' \in \mathcal{O}(p)$ applied on $\nu$).
\item $\sgw$ and $\risgw$ are pseudo-distances on $\Pm(\R^{p})$, \textit{i.e.} they are symmetric, satisfy the triangle inequality and $\sgw(\mu,\mu)=\risgw(\mu,\mu)=0$ .
\item Let $\mu,\nu \in \Pm(\R^{p})\times \Pm(\R^{p})$ be probability distributions with \emph{compact supports}. If $\sgw(\mu,\nu)=0$ then $\mu$ and $\nu$ are isomorphic for the distance induced by the $\ell_{1}$ norm on $\R^{p}$, \textit{i.e.} $d(x,x')=\sum_{i=1}^{p} |x_{i}-x_{i}'|$ for $(x,x') \in \R^{p} \times \R^{p}$. In particular this implies:
\begin{equation}
\sgw(\mu,\nu)=0 \implies \gw_{2}(d,\mu,\nu)=0
\end{equation} 
\end{itemize}
\end{theorem}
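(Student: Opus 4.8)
The plan is to reduce each claim to properties of $\gw_2(d^2,\cdot,\cdot)$ recalled in Section~\ref{sec:back} --- symmetry, invariance under translating either argument, vanishing of $\gw$ on a space paired with itself, and the triangle inequality for the squared Euclidean cost --- together with the two pushforward identities $P_\theta\#(\D\#\rho)=P_{\D^{T}\theta}\#\rho$ and $P_\theta\#\big((x\mapsto x+t)\#\rho\big)=\big(s\mapsto s+\langle t,\theta\rangle\big)\#(P_\theta\#\rho)$. For the invariance claims, the first identity shows a translation of $\mu$ (resp.\ $\nu$) only shifts each $P_\theta\#\mu_{\D}$ (resp.\ $P_\theta\#\nu$) by a scalar, hence leaves every integrand in \eqref{sgw} unchanged, and $\risgw$ inherits this as an infimum. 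When $p=q$ we have $\Stief=\mathcal{O}(p)$; then $\sgw_{\D}(Q\#\mu,\nu)=\sgw_{\D Q}(\mu,\nu)$ together with the fact that $\D\mapsto\D Q$ is a bijection of $\mathcal{O}(p)$ gives $\risgw(Q\#\mu,\nu)=\risgw(\mu,\nu)$, and the statement for $Q'$ on the $\nu$ side follows from symmetry of $\risgw$. Symmetry of $\sgw$ is immediate; for $\risgw$ one checks $\sgw_{\D}(\nu,\mu)=\sgw_{\D^{T}}(\mu,\nu)$ (symmetry of $\gw_2$, the identity $P_\theta\#(\D\#\nu)=P_{\D^{T}\theta}\#\nu$, and the $\lambda_{q-1}$-preserving change of variables $\theta\mapsto\D^{T}\theta$) and that transposition is a bijection of $\mathcal{O}(p)$. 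Vanishing on the diagonal is clear since $\gw_2(d^2,P_\theta\#\mu,P_\theta\#\mu)=0$. For the triangle inequality, the key point is that $\theta\mapsto\gw_2(d^2,P_\theta\#\mu,P_\theta\#\nu)$ is pointwise subadditive in the middle measure, so Minkowski's inequality in $L^{2}(\lambda_{q-1})$ gives the triangle inequality for $\sqrt{\sgw}$ (just as $SW$ relates to $W$); for $\risgw$ one additionally uses the composition $\D_2\D_1$ of the two optimal Stiefel maps and changes variables under the integral.

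For the nullity statement, nonnegativity of the integrand in \eqref{sgw} forces $\gw_2(d^2,P_\theta\#\mu,P_\theta\#\nu)=0$ for $\lambda_{p-1}$-almost every $\theta$. For such $\theta$, vanishing of the one-dimensional $\gw$ with squared Euclidean cost yields a measure-preserving map that is an isometry of $(\R,|\cdot|)$ from $P_\theta\#\mu$ onto $P_\theta\#\nu$; as every isometry of the line is $s\mapsto\pm s+c$, we get $P_\theta\#\nu=(s\mapsto\epsilon_\theta s+c_\theta)\#(P_\theta\#\mu)$ with $\epsilon_\theta\in\{-1,1\}$. Using translation invariance and the fact that the target conclusion is itself translation invariant, I would first recenter $\mu$ and $\nu$ at their barycenters (which exist by compactness); matching of means then forces $c_\theta=0$, so $\widehat{\nu}(t\theta)=\widehat{\mu}(\epsilon_\theta t\theta)$ for all $t$, and since $\widehat{\mu}(-\xi)=\overline{\widehat{\mu}(\xi)}$ this means that on $\lambda_{p-1}$-almost every ray through the origin one has either $\widehat{\nu}=\widehat{\mu}$ or $\widehat{\nu}=\overline{\widehat{\mu}}$. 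Passing to polar coordinates, the set where $\widehat{\nu}\in\{\widehat{\mu},\overline{\widehat{\mu}}\}$ has full Lebesgue measure in $\R^{p}$, so at least one of $\{\widehat{\nu}=\widehat{\mu}\}$ and $\{\widehat{\nu}=\overline{\widehat{\mu}}\}$ is non-null.

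Here the compactness hypothesis is decisive: by Paley--Wiener $\widehat{\mu}$ and $\widehat{\nu}$ extend to entire functions on $\mathbb{C}^{p}$, hence restrict to real-analytic functions on the connected set $\R^{p}$, and the zero set of a nonzero real-analytic function is Lebesgue-null; therefore whichever of $\widehat{\nu}-\widehat{\mu}$ and $\widehat{\nu}-\overline{\widehat{\mu}}$ vanishes on a non-null set must vanish identically, giving $\nu=\mu$ or $\nu=(x\mapsto-x)\#\mu$. Both relate $\mu$ and $\nu$ by a linear isometry of $(\R^{p},\|\cdot\|_{1})$ ($\mathrm{id}$ or $-\mathrm{id}$), and composing with the two centering translations --- also $\ell_{1}$-isometries --- produces a measure-preserving $\ell_{1}$-isometry from $\mu$ onto $\nu$, i.e.\ $\mu$ and $\nu$ are isomorphic for $d_{\|\cdot\|_{1}}$; in particular $\gw_2(d,\mu,\nu)=0$, since $\gw$ vanishes on isomorphic metric measure spaces. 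I expect the main obstacle to be precisely this last lifting --- from the family of per-direction one-dimensional isometries to a single isometry of $\R^{p}$: without compact support the projected characteristic functions need not be analytic and the per-ray alternative $\widehat{\nu}=\widehat{\mu}$ versus $\widehat{\nu}=\overline{\widehat{\mu}}$ could a priori hold on incompatible cones, so the argument truly hinges on Paley--Wiener together with propagation of equality from a positive-measure set.
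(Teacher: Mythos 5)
Your plan is correct, and on the first two bullets it follows essentially the paper's (very terse) route: reduce everything to the invariances of $\gw$, use $P_{\theta}\#(\D\#\rho)=P_{\D^{T}\theta}\#\rho$ and the rotation invariance of $\lambda_{q-1}$ for the $\mathcal{O}(p)$-bijection argument; your remark that the triangle inequality is really for $\sqrt{\sgw}$ (Minkowski in $L^{2}(\lambda_{q-1})$) is if anything more precise than the paper's one-line claim. The genuine divergence is the third bullet, where the paper never touches characteristic-function rigidity: it chooses, for each $n$, an ``almost orthogonal'' frame $(\ebf_{1}(n),\dots,\ebf_{p}(n))$ of good directions (possible because the bad directions form a null set while near-orthogonal frames have positive measure), assembles the per-direction isometries coordinatewise into maps $f_{n}$ that preserve $\ell_{1}$-distances and match Fourier transforms up to $o(1/n)$, and extracts a limiting measure-preserving $\ell_{1}$-isometry by Arzel\`a--Ascoli on the compact support. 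You instead exploit the rigidity of one-dimensional isometries --- $T_{\theta}$ is the restriction of $s\mapsto\epsilon_{\theta}s+c_{\theta}$, a step worth one line of justification since $T_{\theta}$ is only defined on $\supp(P_{\theta}\#\mu)$ --- recenter to force $c_{\theta}=0$, obtain the per-ray alternative $\widehat{\nu}=\widehat{\mu}$ or $\widehat{\nu}=\overline{\widehat{\mu}}$, and use Paley--Wiener analyticity (this is where you spend the compact-support hypothesis, which the paper instead spends on Arzel\`a--Ascoli) to propagate one alternative from a non-null set to all of $\R^{p}$; note you only need that one of the two closed sets $\{\widehat{\nu}=\widehat{\mu}\}$, $\{\widehat{\nu}=\overline{\widehat{\mu}}\}$ is non-null, which already follows since a union of two null sets is null. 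Both proofs start from the same input (vanishing of the projected $\gw$ with squared cost yields a measure-preserving isometry of the supports, via the gauged-measure-space results the paper cites), but your route buys a strictly stronger conclusion: after centering, $\nu=\mu$ or $\nu=(x\mapsto -x)\#\mu$, hence an affine measure-preserving isometry for every norm (in particular $\ell_{1}$, giving $\gw_{2}(d,\mu,\nu)=0$), whereas the paper's construction is intrinsically tied to $\ell_{1}$ because its candidate isometry is built coordinate-by-coordinate in the almost-orthogonal frame. So this is a valid, and arguably sharper, alternative proof rather than a reconstruction of the paper's argument.
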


(with a slight abuse of notation we identify the matrix $Q$ by its linear application). A proof of this theorem can be found in the supplementary material. This theorem states that if $\sgw$ vanishes then measures must be isometric, as it is the case for $\gw$. It states also that $\risgw$ holds most of the properties of $\gw$ in term of invariances. 

\paragraph{Remark} The $\D$ map can also be used in the context of the Sliced Wasserstein distance so as to define $SW_{\D}(\mu,\nu)$, $RISW(\mu,\nu)$ for $\mu,\nu \in \Pm(\R^{p})\times \Pm(\R^{q})$ with $p\neq q$. Please note that from a purely computational point of view, complexities of these discrepancies are the same as $SGW$ and $RISGW$. Also, unlike $SGW$ and $RISGW$, these discrepancies are not translation invariant. More details are given in the supplementary material.

\paragraph{Computational aspects} In the following $\mu,\nu$ are \emph{discrete measures} with the \emph{same}
number of atoms $n=m$, and \emph{uniform weights}, \textit{i.e.} $\mu=\frac{1}{n}\sum_{i=1}^{n}\delta_{x_i},\nu=\frac{1}{n}\sum_{i=1}^{n}\delta_{y_i}$ with $x_i \in \R^{p},y_i \in \R^{q}$ so that we can apply Theorem \ref{sovable_gw_main}. Similarly to Sliced Wasserstein,
$\sgw$ can be approximated by replacing the integral by a finite sum over randomly
drawn directions. In practice we compute $\sgw$ as the average of $GW_2^2$
projected on $L$ directions $\theta$. While the sum in \eqref{sgw} can be implemented with libraries such as Pykeops \cite{charlier2018keops}, Theorem~\ref{sovable_gw} shows that computing~\eqref{sgw} is achieved by an
$O(n\log(n))$ sorting of the projected samples and by finding the optimal
permutation which is either the identity or the anti identity. Moreover computing the cost is $O(n)$ for each projection as explained previously. Thus the overall complexity of computing $\sgw$ with $L$ projections is
$O(Ln(p+q)+Ln\log(n)+Ln)=O(Ln(p+q+\log(n)))$ when taking into account the cost of projections. Note that these computations can be efficiently implemented in parallel on GPUs with modern toolkits such as Pytorch \cite{paszke2017automatic}.

The complexity of solving $\risgw$ is higher but one can rely on efficient algorithms for optimizing on the Stiefel manifold \cite{absil2009optimization} that have been implemented in several toolboxes \cite{townsend2016pymanopt,meghwanshi2018mctorch}. Note that each iteration in a manifold gradient decent requires the solution of $\sgw$, that can be computed and differentiated efficiently with the frameworks described above. Moreover, the optimization over the Stiefel manifold does not depend on the number of points but only on the dimension $d$ of the problem so that overall complexity is $n_{\text{iter}}(Ln(d+\log(n))+d^{3})$, which is affordable for small $d$. In practice, we observed in the numerical experiments that RISGW converges in few iterations (the order of $10$).

\section{Experimental results}
\label{sec:expe}

{The goal of this section is to validate $\sgw$ and its rotational invariant on both  quantitative (execution time) and qualitative sides. All the experiments were conducted on a standard  computer equipped with a NVIDIA Titan X GPU.}

\begin{figure}[t]
  \centering
  \includegraphics[width=.5\linewidth]{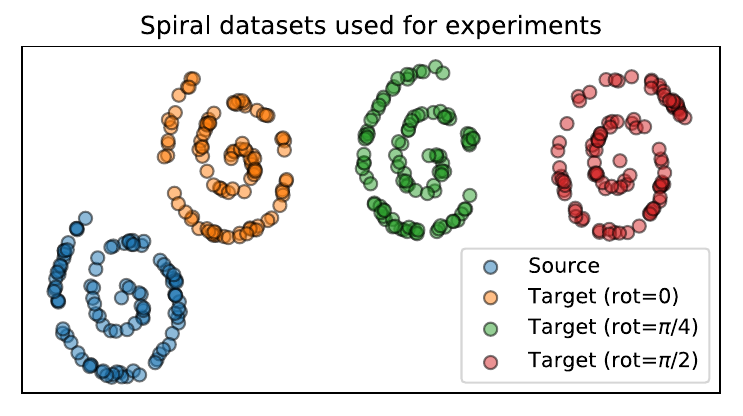}
  \includegraphics[width=.35\linewidth]{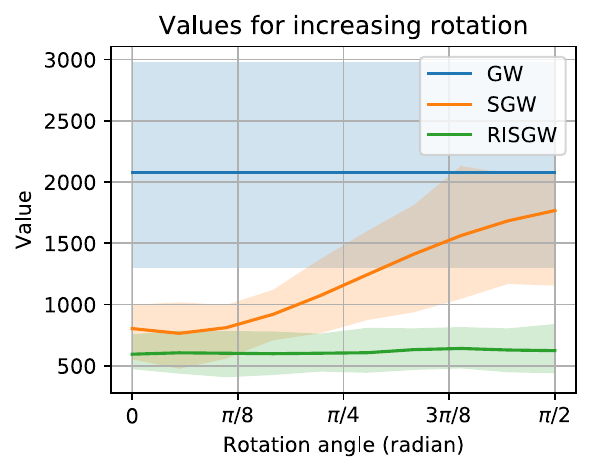}
  \caption{Illustration of $SGW$, $RISGW$ and $GW$ on spiral dataset for varying rotations on discrete 2D spiral dataset. (Left) Examples of spiral distributions for source and target with different rotations. (Right) Average value of $SGW$, $GW$ and $RISGW$ with $L=20$ as a function of rotation angle of the target. Colored areas correspond to the 20\% and 80\% percentiles. }
  \label{fig:spiral_example}
\end{figure}

\paragraph{SGW and RISGW on spiral dataset} As a first example, we use the spiral dataset from sklearn toolbox and compute $\gw$, $\sgw$ and $\risgw$ on $n=100$ samples with $L=20$  sampled lines for different rotations of the target distribution. The optimization of $\D$ on the Stiefel manifold is performed using Pymanopt \cite{townsend2016pymanopt} with automatic differentiation with autograd \cite{maclaurin2015autograd}.
Some examples of empirical distributions are available in Figure \ref{fig:spiral_example} (left). The mean value of $\gw$, $\sgw$ and $\risgw$ are reported on Figure \ref{fig:spiral_example} (right) where we can see that $\risgw$ is invariant to rotation as $\gw$ whereas $\sgw$ with $\D=I_d$ is clearly not.

\paragraph{Runtimes comparison \label{runtimes_comparaison}}

We perform a comparison between runtimes of $\sgw$, $\gw$ and its entropic
counterpart \cite{Solomon:2016:EMA:2897824.2925903}. We calculate these
distances between two 2D random measures of $n \in \{1e2,...,1e6\}$ points.
For $\sgw$, the number of projections $L$ is taken from $\{50,200\}$. We use the Python
Optimal Transport (POT) toolbox~\cite{flamary2017pot} to compute $\gw$ distance on CPU. For
entropic-$\gw$ we use the Pytorch GPU implementation from \cite{bunne_gan} that uses the log-stabilized Sinkhorn algorithm
\cite{Schmitzer_stab_sinkhorn} with a 
regularization parameter $\varepsilon=100$.
For $\sgw$, we implemented both a Numpy implementation and a Pytorch
implementation running on GPU.
Fig.
\ref{runtimes} illustrates the results. 

\begin{wrapfigure}{r}{.6\textwidth}
  \centering
  \includegraphics[width=\linewidth]{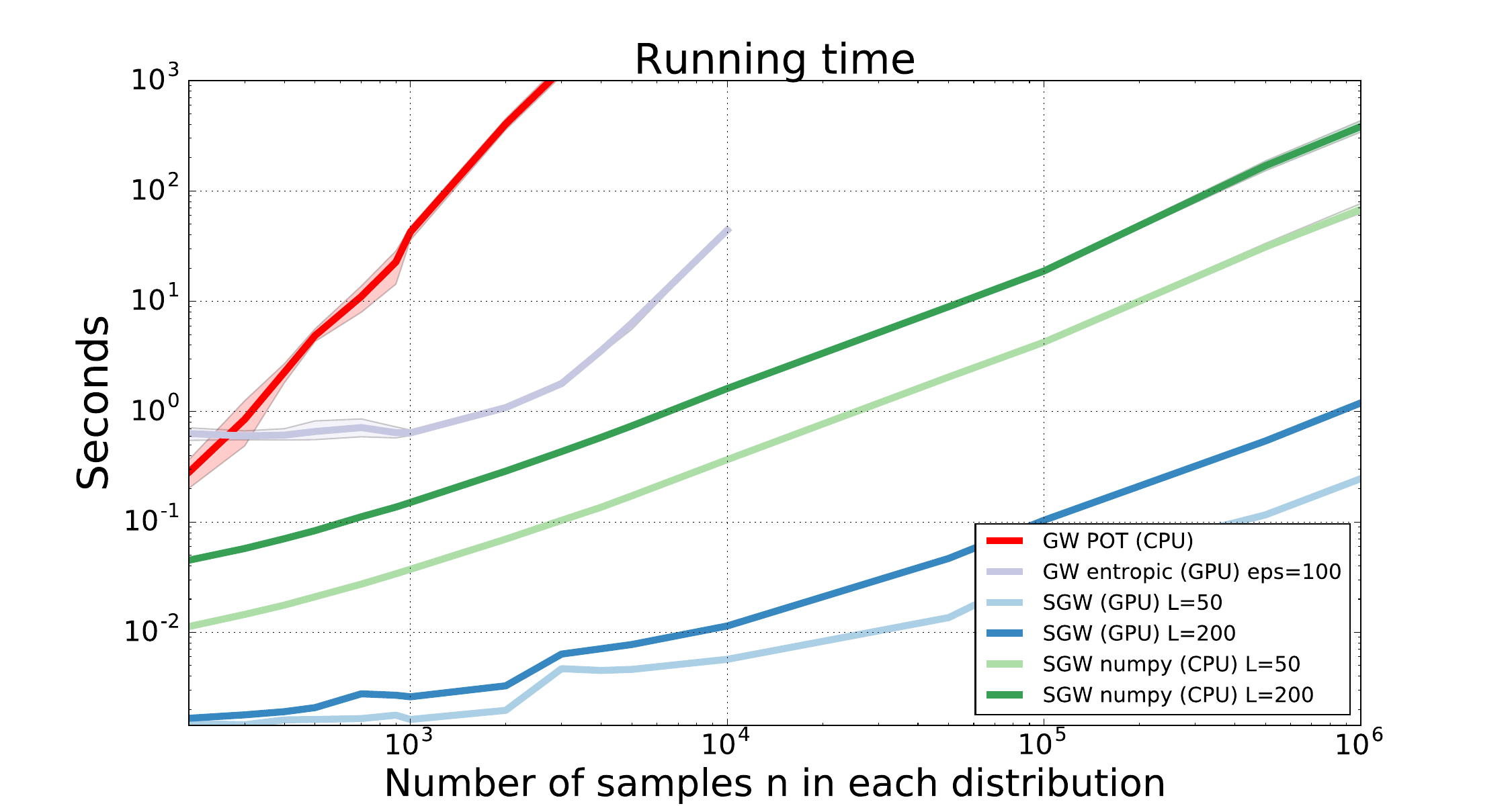}
  \caption{Runtimes comparison between $\sgw$, $\gw$, entropic-$\gw$ between two 2D random distributions with varying number of points from $0$ to $10^6$ in log-log scale. The time includes the calculation of the pair-to-pair distances. \label{runtimes}} 
  \end{wrapfigure}

$\sgw$ is the only method which scales
\textit{w.r.t.} the number of samples and allows computation for $n>10^4$. 
While entropic-$\gw$
uses GPU, it is still slow because the
gradient step size in the algorithm is inversely proportional to the regularization
parameter \cite{peyre:hal-01322992} which highly curtails the convergence of the
method.
On CPU, $\sgw$ is two orders of magnitude faster than $\gw$. On GPU, $\sgw$ is five orders of magnitude better than $\gw$
and four orders of magnitude better than entropic $\gw$. Still the slope of both
$\gw$ implementations are surprisingly good, probably due to their maximum
iteration stopping criteria. In this experiment we were able to compute $SGW$ between $10^6$ points in 1s. Finally note that we recover exactly a quasi-linear slope,
corresponding to the $O(n\log(n))$ complexity for $\sgw$.

\paragraph{Meshes comparison}

\begin{figure}
\centering
\includegraphics[width=0.6\linewidth]{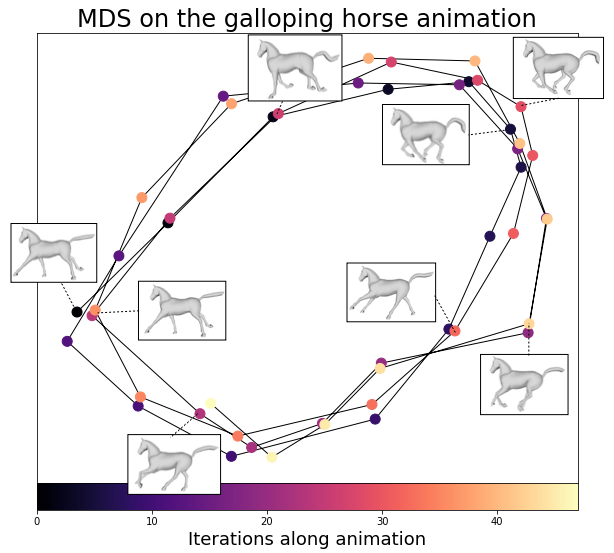}
\caption{Each sample in this Figure corresponds to a mesh and is colored by the corresponding time iteration. One can see that the cyclical nature of the motion is recovered.}\label{fig:gallop}
\end{figure}

In the context of computer graphics, $\gw$ can be used to quantify the correspondances between two meshes. A direct interest is found in shape retrieval, search, exploration or organization of databases. In order to recover experimentally some of the desired properties of the $\gw$ distance, we reproduce an experiment originally conducted in~\cite{rustamov2013map} and presented in~\cite{Solomon:2016:EMA:2897824.2925903} with the use of entropic-$\gw$.

From a given time series of 45 meshes representing a galloping horse, the goal is to conduct a multi-dimensional scaling (MDS) of the pairwise distances, computed with $\sgw$ between the meshes, that allows ploting each mesh as a 2D point. As one can observe in Fig.~\ref{fig:gallop}, the cyclical nature of this motion is recovered in this 2D plot, as already illustrated in~\cite{Solomon:2016:EMA:2897824.2925903} with the $\gw$ distance.  Each horse mesh is composed of approximately $9,000$ vertices. 
The average time for computing one distance is around 30 minutes using the POT implementation, which makes the computation of the full pairwise distance matrix impractical (as already mentioned in~\cite{Solomon:2016:EMA:2897824.2925903}). In contrast, our method only requires 25 minutes to compute the full distance matrix, with an average of 1.5s per mesh pair, using our CPU implementation. This clearly highlights the benefits of  our method in this case. 

\paragraph{SGW as a generative adversarial network (GAN)  loss}
In a recent paper~\cite{bunne_gan}, Bunne and colleagues propose a new variant of GAN between incomparable spaces, {\em i.e.} of different dimensions. In contrast with classical divergences such as Wasserstein, they suggest to capture the intrinsic relations between the samples of the target probability distribution by using $\gw$ as a loss for learning. More formally, this translates into the following optimization problem over a desired generator $G$:
\begin{equation}
G^* = \argmin \gw_{2}^{2}(c_{X},c_{G(Z)},\mu,\nu_G),
\end{equation}
where $Z$ is a random noise following a prescribed low-dimensional distribution (typically Gaussian), $G(Z)$ performs the uplifting of $Z$ in the desired dimensional space, and $c_{G(Z)}$ is the corresponding metric. $\mu$  and $\nu_G$ correspond respectively to the target and generated distributions, that we might want to align in the sense of $\gw$. Following the same idea, and the fact that sliced variants of the Wasserstein distance have been successfully used in the context of GAN~\cite{cvpr_sliced_gan}, we propose to use $\sgw$ instead of $\gw$ as a loss for learning $G$.
As a proof of concept, we reproduce the simple toy example of~\cite{bunne_gan}. Those examples consist in generating 2D or 3D distributions from target distributions either in 2D or 3D spaces (Fig.~\ref{fig:gan} and supplementary material). These distributions are formed by $3,000$ samples. We do not use their adversarial metric learning as it might confuse the objectives of this experiment and as it is not required for these low dimensional problems~\cite{bunne_gan}. The generator $G$ is designed as a simple multilayer perceptron with 2 hidden layers of respectively 256 and 128 units with ReLu activation functions, and one final layer with 2 or 3 output neurons (with linear activation) as output, depending on the experiment. The Adam optimizer is used, with a learning rate of $2.10^{-4}$ and $\beta_1=0.5,\beta_2=0.99$. The convergence to a visually acceptable solution takes a few hundred epochs. Contrary to~\cite{bunne_gan}, we directly back-propagate through our loss, without having to explicit a coupling matrix and resorting to the envelope Theorem.  Compared to~\cite{bunne_gan} and the use of entropic-$\gw$ , the time per epoch is more than one order of magnitude faster, as expected from previous experiment.

   \begin{figure}[!t]
   \centering
      \includegraphics[width=0.17\textwidth]{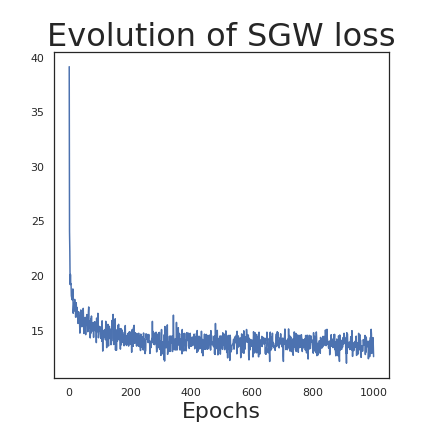}
      \includegraphics[width=0.16\textwidth]{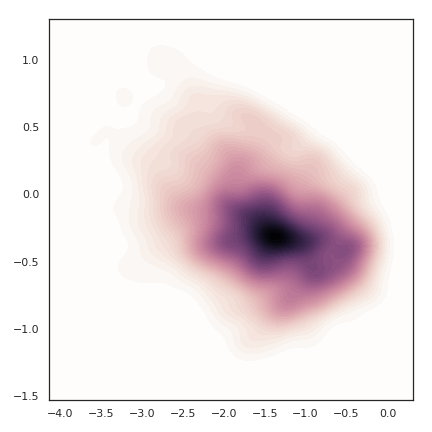}
      \includegraphics[width=0.16\textwidth]{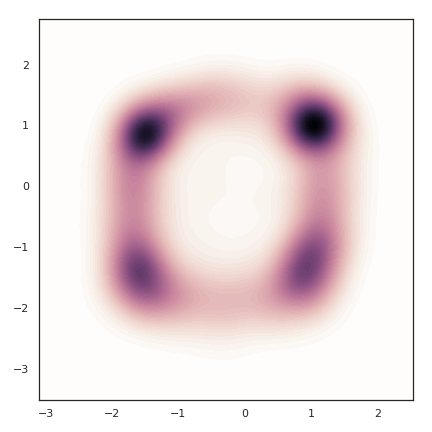}
      \includegraphics[width=0.16\textwidth]{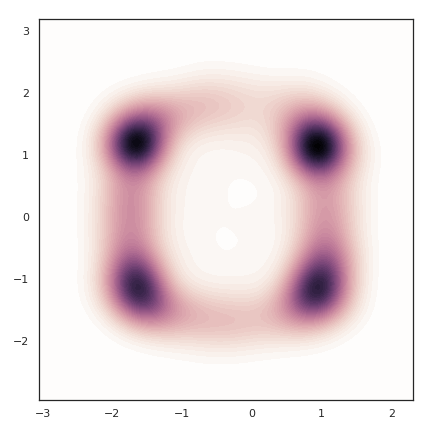}
      \includegraphics[width=0.16\textwidth]{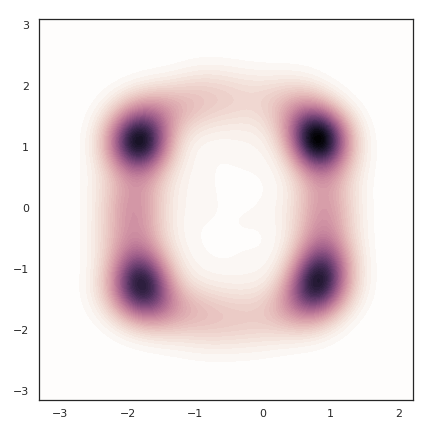}
      \includegraphics[width=0.16\textwidth]{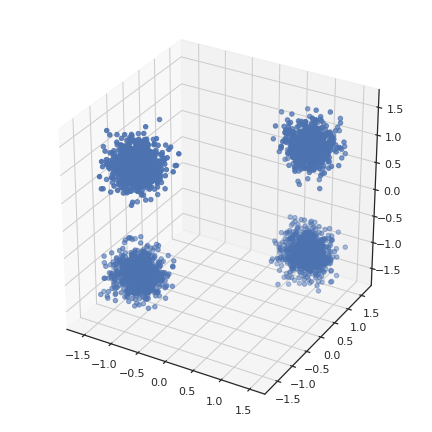}
     \caption{Using $\sgw$ in a GAN loss. First image shows the loss value along epochs. The next 4 images are produced by sampling the generated distribution ($3,000$ samples, plotted as a continuous density map). Last image shows the target 3D distribution.}
     \label{fig:gan}
   \end{figure}

\section{Discussion and conclusion}
In this work, we establish a new result about Quadratic Assignment Problem when
matrices are squared euclidean distances on the real line, and use it to state a closed form
expression for $\gw$ between monodimensional measures. Building upon this result
we define a new similarity measure, called the Sliced Gromov-Wasserstein and a
variant Rotation-invariant $\sgw$ and prove that both conserve various properties of the $\gw$
distance while being cheaper to compute and applicable in a large-scale setting.
Notably $\sgw$ can be computed in 1 second for distributions with 1 million samples each.  This paves the way for novel promising machine learning applications of optimal transport between metric spaces.

Yet, several questions are raised in this work. Notably, our method perfectly
fits the case when the two distributions are given empirically through samples
embedded in 
an Hilbertian space, that allows for projection on the real line.
This is the case in most of the machine learning applications that use the
Gromov-Wasserstein distance.
However, when only distances between samples are available, the projection
operation can not be carried anymore, while the computation of $\gw$ is still
possible. One can argue 
that it is possible to embed either isometrically those distances into a
Hilbertian space, or at least with a low distortion, and then apply the
presented technique. Our future line of work 
considers this option, as well as a possible direct reasoning on the distance
matrix. For example, one should be able to consider geodesic paths (in a graph
for instance) as the equivalent
appropriate geometric object related to the line. This constitutes the direct
follow-up of this work, as well as a better understanding of the accuracy of the
estimated discrepancy with respect to 
the ambiant dimension and the projections number. 

\subsection*{Acknowledgements}

We would like to thank Nicolas Klutchnikoff for the hepful discussions. This work benefited from the support from  OATMIL ANR-17-CE23-0012 project of the French National Research Agency (ANR). We gratefully acknowledge the support of NVIDIA Corporation with the donation of the Titan X GPU used for this research.

\bibliographystyle{unsrt}

\section{Suplementary materials}

\paragraph{Notations} In the following we will make the difference between a vector and a scalar by noting in bold vectors so that $\xbf \in \R^{n}$ and $x\in \R$. 
We note by $\mathcal{F}_{\mu}$ the Fourrier transform of $\mu$ for a probability measure $\mu \in \Pm(\R^{p})$. It is defined for $\sbf \in \R^{p}$ by $\mathcal{F}_{\mu}(\sbf)=\int e^{-2i \pi \langle \sbf,\xbf \rangle } d\mu(\xbf)$. We recall the rearrangement inequality:
\begin{theorem}[Rearrangement Inequality]
\label{memo:rearr}
Let $x_1 \leq \cdots \leq x_n$, $y_1 \leq \cdots \leq y_n$ then we have:
\begin{equation}
\forall \sigma \in \Sn, \ \sum_{i} x_{i}y_{n+1-i} \leq \sum_{i} x_{i}y_{\sigma(i)} \leq \sum_{i}x_{i}y_{i}
\end{equation}
If the numbers are different then the lower bound (resp upper bound) is attained only for the permutation which reverses the order (resp for the identiy permutation)
\end{theorem}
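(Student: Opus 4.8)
The plan is to prove the upper bound by a standard \emph{exchange argument} and then deduce the lower bound from it by a sign flip. First I would use finiteness of $\Sn$ to pick a permutation $\sigma^\star$ maximizing $f(\sigma):=\sum_i x_i y_{\sigma(i)}$, and aim to show $f(\sigma^\star)=f(Id)$. If $\sigma^\star\neq Id$, it has an \emph{inversion}: indices $i<j$ with $\sigma^\star(i)>\sigma^\star(j)$. Exchanging the images of $i$ and $j$ produces $\sigma'$ with
\[
f(\sigma')-f(\sigma^\star)=x_i y_{\sigma^\star(j)}+x_j y_{\sigma^\star(i)}-x_i y_{\sigma^\star(i)}-x_j y_{\sigma^\star(j)}=(x_j-x_i)\big(y_{\sigma^\star(i)}-y_{\sigma^\star(j)}\big)\ge 0,
\]
since $x_j\ge x_i$ and $y_{\sigma^\star(i)}\ge y_{\sigma^\star(j)}$. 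So the exchange never decreases $f$ while strictly decreasing the number of inversions; iterating it, one reaches $Id$ without decreasing $f$, hence $f(Id)\ge f(\sigma^\star)=\max_\sigma f(\sigma)$, i.e. $f(\sigma)\le f(Id)$ for all $\sigma$.

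Second, for the lower bound I would apply the upper bound to the increasingly sorted sequence $y_i':=-y_{n+1-i}$. This yields $\sum_i x_i y'_{\sigma(i)}\le\sum_i x_i y'_i$ for all $\sigma$, and substituting back, together with the observation that precomposition with the reversal $\tau(i)=n+1-i$ permutes $\Sn$, turns this into $\sum_i x_i y_{n+1-i}\le\sum_i x_i y_{\sigma(i)}$ for all $\sigma$, as claimed. (Equivalently, one may run the exchange argument directly, driving any permutation toward $\tau$ by \emph{un}-ordering exchanges.)

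Third, for the equality statements I would assume the $x_i$ pairwise distinct and the $y_i$ pairwise distinct, so that at every inversion the factor $(x_j-x_i)\big(y_{\sigma(i)}-y_{\sigma(j)}\big)$ is \emph{strictly} positive. Then any $\sigma\neq Id$ has an inversion and can be strictly improved, forcing $f(\sigma)<f(Id)$; symmetrically $f(\sigma)>f(\tau)$ for $\sigma\neq\tau$. This gives uniqueness of the two extremal permutations.

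I expect the only step needing genuine care to be the bookkeeping behind ``iterate the exchange to reach $Id$'': formally this is an induction on the number of inversions, and one must verify that an exchange removing an inversion $i<j$ with $\sigma(i)>\sigma(j)$ \emph{strictly} decreases the total inversion count. This follows from a short case check: for $k<i$ the pairs $(k,i),(k,j)$ merely swap their inversion statuses, likewise $(i,k),(j,k)$ for $k>j$; each intermediate index $i<k<j$ changes the count by $0$ or $-2$; and the pair $(i,j)$ itself contributes $-1$, so the net change is $\le -1$. Everything else is routine; the only real subtlety in the statement itself is that ties among the $x_i$ or $y_i$ destroy uniqueness, which is precisely why uniqueness is asserted only under the strict-order hypothesis.
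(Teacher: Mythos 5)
Your proof is correct: the exchange argument for the upper bound, the inversion-count bookkeeping justifying termination, the reduction of the lower bound to the upper bound via $y_i'=-y_{n+1-i}$, and the strict-inequality treatment of the equality case are all sound (the only quibble is that the relevant bijection of $\Sn$ is $\sigma\mapsto\tau\circ\sigma$, composition with the reversal on the left rather than ``precomposition,'' which changes nothing). Note that the paper itself does not prove this statement; it simply recalls the classical rearrangement inequality as a known tool for its QAP argument, so your standard exchange proof supplies exactly what is taken for granted there.
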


\section{Proof for the QAP}

In this section we aim at proving the new special case of the QAP, which is recalled in the next theorem:

\begin{theorem}[A new special case of the QAP.]
\label{qap}
For real numbers $x_{1} < \dots < x_{n}$ and $y_{1} < \dots < y_{n}$, 
\begin{equation}
\label{eq:qap}
\underset{\sigma \in \Sn}{\min} \sum_{i,j}  - (x_{i}-x_{j})^{2}(y_{\sigma(i)}-y_{\sigma(j)})^{2}
\end{equation}
is achieved either by the identity permutation $\sigma(i)=i$ ($Id$) or the anti-identity permutation $\sigma(i)=n+1-i$ ($anti-Id$). In other words:
\begin{equation}
\exists \sigma \in \{Id,anti-Id\}, \ \sigma \in \underset{\sigma \in \Sn}{\argmin} \sum_{i,j}  - (x_{i}-x_{j})^{2}(y_{\sigma(i)}-y_{\sigma(j)})^{2} 
\end{equation}
\end{theorem}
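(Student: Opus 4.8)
The plan is to turn the minimization into a maximization, normalize by translation invariance, collapse the quartic objective into a quadratic one, and then pin down the optimizer with an interchange argument.

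Since $\sum_{i,j}-(x_i-x_j)^2(y_{\sigma(i)}-y_{\sigma(j)})^2=-F(\sigma)$ with $F(\sigma):=\sum_{i,j}(x_i-x_j)^2(y_{\sigma(i)}-y_{\sigma(j)})^2\ge 0$, it suffices to prove that $\max_\sigma F(\sigma)$ is attained at $Id$ or $anti-Id$. As $F$ is unchanged under $x\mapsto x+\mathrm{const}$ and $y\mapsto y+\mathrm{const}$, I assume $\sum_i x_i=\sum_i y_i=0$. Expanding $(x_i-x_j)^2=x_i^2+x_j^2-2x_ix_j$ and the analogous identity for $y$, and using $\sum_i x_i=\sum_i y_i=0$ to eliminate the terms that factor through $\sum_i x_i$ or $\sum_i y_i$, the four-fold sum simplifies to
\[
F(\sigma)=2\Big(\textstyle\sum_i x_i^2\Big)\Big(\textstyle\sum_i y_i^2\Big)+2n\sum_i x_i^2 y_{\sigma(i)}^2+4\Big(\sum_i x_i y_{\sigma(i)}\Big)^2 .
\]
The first summand is constant, so the problem becomes $\max_\sigma g(\sigma)$ with $g(\sigma):=2n\sum_i x_i^2 y_{\sigma(i)}^2+4\big(\sum_i x_i y_{\sigma(i)}\big)^2$. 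Writing $b_\sigma:=(y_{\sigma(1)},\dots,y_{\sigma(n)})^\top$ and $H:=2n\,\mathrm{diag}(x_1^2,\dots,x_n^2)+4\,xx^\top$, one has $g(\sigma)=b_\sigma^\top H b_\sigma$ with $H\succeq 0$; so $g$ is a convex (positive semidefinite) quadratic form, and its maximum over the permutohedron $\mathrm{conv}\{b_\sigma:\sigma\in\Sn\}$ is attained at a vertex, i.e.\ at a permutation — only the identity of that vertex remains open (this convexity is also what makes the relaxation in Theorem~\ref{sovable_gw_main} tight).

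To identify the vertex I would run an exchange argument on an optimal $\sigma$. After possibly reflecting $y\mapsto -y$ (which, once the points are re-sorted, merely swaps the roles of $Id$ and $anti-Id$), I may assume $A:=\sum_i x_i y_{\sigma(i)}\ge 0$. For positions $i<j$ with $u:=y_{\sigma(i)}$ and $v:=y_{\sigma(j)}$, a short computation of the effect of the transposition $\sigma\mapsto\sigma\circ(i\,j)$ gives
\[
g\big(\sigma\circ(i\,j)\big)-g(\sigma)=2(x_j-x_i)(u-v)\Big[\,n(x_i+x_j)(u+v)+4A+2(x_j-x_i)(u-v)\,\Big],
\]
so optimality forces the bracket to have sign opposite to $u-v$ for every $i<j$. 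If $(y_{\sigma(i)})_i$ were neither non-decreasing nor non-increasing, it would contain three consecutive positions forming a ``peak'' ($y_{\sigma(j-1)}<y_{\sigma(j)}>y_{\sigma(j+1)}$) or a ``valley''; I would then combine the optimality inequalities of the transpositions supported on such a triple with $x_{j-1}<x_j<x_{j+1}$ and $A\ge 0$ to reach a contradiction, forcing $(y_{\sigma(i)})_i$ to be monotone and hence $\sigma\in\{Id,\ anti-Id\}$.

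I expect the last step to be the real obstacle: the swap increment is not sign-definite on its own, because the rank-one term $4A^2$ couples all coordinates through the single scalar $A$, so one cannot just ``bubble-sort'' locally. Making the contradiction uniform in $n$ should require choosing the right transpositions — presumably a peak/valley triple together with the long swap across it — and combining the inequalities using the convexity of $t\mapsto t^2$ (so $|A|$ is largest precisely at $Id$ and $anti-Id$) together with the fact that $i\mapsto x_i^2$ is ``valley-shaped'' along the sorted order. As a sanity check and a potential base case, for $n=3$ the statement collapses: a permutation of the points induces a permutation of the three gap lengths $\{(x_1-x_2)^2,(x_2-x_3)^2,(x_1-x_3)^2\}$, and since the spanning gaps $(x_1-x_3)^2$ and $(y_1-y_3)^2$ are strictly the largest in their triples, the rearrangement inequality pairs largest with largest and leaves exactly $Id$ and $anti-Id$ to be compared.
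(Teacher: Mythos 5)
Your reduction steps match the paper's: the same translation normalization $\sum_i x_i=\sum_i y_i=0$, the same binomial expansion collapsing the objective to $g(\sigma)=n\sum_i x_i^2y_{\sigma(i)}^2+2\big(\sum_i x_iy_{\sigma(i)}\big)^2$ (up to an overall factor $2$), and the same convex-maximization observation that a relaxed optimum sits at an extreme point (the paper does this over doubly stochastic matrices in Lemma~\ref{lemma:equivalence_both_probs}, you over the permutohedron). Your transposition increment formula is also correct. But the decisive step is missing: you only conjecture that a peak/valley triple in a non-monotone optimal assignment yields a contradiction, and you yourself flag this as ``the real obstacle.'' It genuinely is one. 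The sign of the bracket $n(x_i+x_j)(u+v)+4A+2(x_j-x_i)(u-v)$ is governed by the uncontrolled term $(x_i+x_j)(u+v)$, so local swap inequalities do not self-evidently contradict each other on a triple; no combination of them is exhibited, no induction or base case beyond $n=3$ is carried out, and it is not even clear that every optimum with $A\ge 0$ must be monotone (the theorem only claims that \emph{some} optimum is $Id$ or $anti$-$Id$, and ties with non-monotone permutations are possible in degenerate configurations), so the argument would additionally need a tie-breaking device. As written, the proof is a plan, not a proof.

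A second concrete gap is the case $A=\sum_i x_iy_{\sigma(i)}=0$: your reflection $y\mapsto -y$ only buys $A\ge 0$, and when $A=0$ the sign information that is supposed to push the assignment toward $Id$ rather than $anti$-$Id$ disappears. The paper's proof is built precisely around these two difficulties, by a different mechanism than local exchanges: it applies a first-order optimality condition for quadratic programs (Theorem~\ref{murty_theo}, Murty) to the relaxation over doubly stochastic matrices, which \emph{linearizes} the coupling term $\big(\sum_i x_iy_{\sigma(i)}\big)^2$ into $C(\xbf,\ybf,\sigma^{*})\sum_{k,l}x_ky_l\pi_{kl}$ with a fixed scalar coefficient $C(\xbf,\ybf,\sigma^{*})$; the rearrangement inequality (Theorem~\ref{memo:rearr}) then identifies the optimizer as $Id$ when $C>0$ and $anti$-$Id$ when $C<0$ (Proposition~\ref{prop:twocases}), and the degenerate case $C=0$ is settled by perturbing $\xbf$ and a continuity-of-argmax argument (Lemmas~\ref{lemma:continuity_Z} and~\ref{lemma:endlemma}). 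If you want to pursue your exchange route, you would need both an explicit combination of swap inequalities ruling out peaks and valleys (uniformly in $n$, and robust to ties) and a separate treatment of $A=0$, e.g.\ a perturbation argument of the kind the paper uses.
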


Let us note $\mathcal{I}=\{\xbf,\ybf \in \mathbb{R}^{n}\times \mathbb{R}^{n} | x_{1} < x_{2} < \dots < x_{n} \ , y_{1} < y_{2} < \dots < y_{n} \}$. We consider for $\xbf,\ybf \in \mathcal{I}$:
\begin{equation}
\label{qpproblem}
\underset{\sigma \in \Sn}{\text{max}} \ Z(\xbf,\ybf,\sigma)=\underset{\sigma \in \Sn}{\text{max}} \sum_{i,j} (x_{i}-x_{j})^{2} (y_{\sigma(i)}-y_{\sigma(j)})^{2}
\end{equation}
The original problem is equivalent to maximizing $Z(\xbf,\ybf,\sigma)$ over $\Sn$. Given $\xbf,\ybf \in \mathcal{I}$, we define $X \stackrel{\text{def}}{=}\sum_{i} x_{i}$ and $Y \stackrel{\text{def}}{=}\sum_{i} y_{i}$. Then:
\begin{equation*}
\begin{split}
&\underset{\sigma \in \Sn}{\text{max}} \ Z(\xbf,\ybf,\sigma) =\underset{\sigma \in \Sn}{\text{max}} \sum_{i,j} (x_{i}-x_{j})^{2} (y_{\sigma(i)}-y_{\sigma(j)})^{2} \\
&=\underset{\sigma \in \Sn}{\text{max}} \sum_{i,j} (x_{i}^{2}+x_{j}^{2})(y_{\sigma(i)}^{2}+y_{\sigma(j)}^{2}) -2 \sum_{i,j} x_{i} x_{j}(y_{\sigma(i)}^{2}+y_{\sigma(j)}^{2}) -2 \sum_{i,j} y_{\sigma(i)} y_{\sigma(j)}(x_{i}^{2}+x_{j}^{2}) \\ 
&+4 \sum_{i,j} x_{i} x_{j} y_{\sigma(i)} y_{\sigma(j)} \\
&= \underset{\sigma \in \Sn}{\text{max}} \ 2 n \sum_{i} x_{i}^{2}y_{\sigma(i)}^{2}-2 \sum_{i,j} x_{i} x_{j}(y_{\sigma(i)}^{2}+y_{\sigma(j)}^{2}) -2 \sum_{i,j} y_{\sigma(i)} y_{\sigma(j)}(x_{i}^{2}+x_{j}^{2}) \\ 
&+4 \sum_{i,j} x_{i} x_{j} y_{\sigma(i)} y_{\sigma(j)} + 2 (\sum_{i} x_{i}^{2})(\sum_{i} y_{i}^{2}) \\
&= \underset{\sigma \in \Sn}{\text{max}} \ 2 n \sum_{i} x_{i}^{2}y_{\sigma(i)}^{2}-4 X \sum_{i} x_{i} y_{\sigma(i)}^{2} -4 Y \sum_{i} x_{i}^{2} y_{\sigma(i)} +4 \sum_{i,j} x_{i} x_{j} y_{\sigma(i)} y_{\sigma(j)} + 2 (\sum_{i} x_{i}^{2})(\sum_{i} y_{i}^{2}) \\
&\stackrel{(*)}{=} Cte+2 \big(\underset{\sigma \in \Sn}{\text{max}} \  \sum_{i} n x_{i}^{2}y_{\sigma(i)}^{2}-2 \sum_{i} (X x_{i} y_{\sigma(i)}^{2} + Y x_{i}^{2} y_{\sigma(i)}) + 2(\sum_{i} x_{i} y_{\sigma(i)})^{2}\big) \\
\end{split}
\end{equation*}
where in (*) we defined $Cte\stackrel{\text{def}}{=}2 (\sum_{i} x_{i}^{2})(\sum_{i} y_{i}^{2})$ the term that does not depend on $\sigma$. Overall we have:
\begin{equation}
\label{equivone}
\forall \xbf,\ybf \in \mathcal{I}, \ \underset{\sigma \in \Sn}{\text{argmax}} \ Z(\xbf,\ybf,\sigma) = \underset{\sigma \in \Sn}{\text{argmax}} \sum_{i} n x_{i}^{2}y_{\sigma(i)}^{2}-2 \sum_{i} (X x_{i} y_{\sigma(i)}^{2} + Y x_{i}^{2} y_{\sigma(i)}) + 2(\sum_{i} x_{i} y_{\sigma(i)})^{2}
\end{equation}

Since $Z$ is invariant by translation of $\xbf,\ybf$ we can suppose without loss of generality that $X=Y=0$. We consider the set $\mathcal{D}=\{\xbf,\ybf \in \mathbb{R}^{n}\times \mathbb{R}^{n} | x_{1} < x_{2} < \dots < x_{n} \ , y_{1} < y_{2} < \dots < y_{n}, \ \sum_i x_i= \sum_j y_j=0 \}$. We want to find for $\xbf,\ybf \in \mathcal D$:
\begin{equation}
\tag{QAP}
\label{eq:qap_to_prove}
\max_{\sigma \in \Sn} n \sum_{i} x_i^{2} y_{\sigma(i)}^{2} +2 \left(\sum_i x_i y_{\sigma(i)}\right)^{2} \stackrel{def}{=} \max_{\sigma \in \Sn} g(\xbf,\ybf,\sigma)
\end{equation}

We have the following result:

\begin{lemma}
\label{lemma:equivalence_both_probs}
Let $\xbf,\ybf \in \mathcal{D}$ and consider the problem: 
\begin{equation}
\tag{QP}
\label{eq:qp_equiv}
\max_{\GG \in DS} \sum_{ijkl} (x_i^2 y_j^2+ 2 x_i y_j x_k y_l)\pi_{ij} \pi_{kl}\\
\end{equation}
where $DS$ is the set of doubly stochastic matrices. Then \eqref{eq:qp_equiv} and \eqref{eq:qap_to_prove} are equivalent. More precisely if $\sigma^{*}$ is an optimal solution of \eqref{eq:qap_to_prove} then $\GG_{\sigma^{*}}$ defined by $\pi_{\sigma^{*}}(i,j)=1$ if $j=\sigma^{*}(i)$ else $0$ for all $(i,j) \in \integ{n}^{2}$ is an optimal solution of \eqref{eq:qp_equiv} and if $\GG^{*}$ is an optimal solution of \eqref{eq:qp_equiv} then it is supported on a permutation $\sigma^{*}$ which is an optimal solution of \eqref{eq:qap_to_prove}.
\end{lemma}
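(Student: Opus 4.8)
The plan is to exploit the fact that, although \eqref{eq:qp_equiv} is a quadratic program over the Birkhoff polytope $DS$, its objective becomes \emph{convex} once we restrict to the affine hull of $DS$; maximising a convex function over a polytope puts the optimum at a vertex, and the vertices of $DS$ are exactly the permutation matrices. The precise correspondence between optimal solutions of \eqref{eq:qp_equiv} and \eqref{eq:qap_to_prove} will then come from the Birkhoff--von Neumann decomposition combined with Jensen's inequality.

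First I would rewrite the objective $Q(\GG)=\sum_{ijkl}(x_i^2y_j^2+2x_iy_jx_ky_l)\pi_{ij}\pi_{kl}$ of \eqref{eq:qp_equiv}. Setting $v_{ij}=x_iy_j$, one has $Q(\GG)=\big(\sum_{ij}x_i^2y_j^2\pi_{ij}\big)\big(\sum_{kl}\pi_{kl}\big)+2\langle v,\GG\rangle^2$. On the affine hull of $DS$ the sum $\sum_{kl}\pi_{kl}$ is the constant $n$, so there $Q$ agrees with the map $\GG\mapsto n\sum_{ij}x_i^2y_j^2\pi_{ij}+2\langle v,\GG\rangle^2$, i.e. a linear function plus the square of a linear functional, hence convex. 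Evaluating $Q$ at a permutation matrix $\GG_\sigma$ (with $\pi_\sigma(i,j)=1$ iff $j=\sigma(i)$) gives $n\sum_i x_i^2y_{\sigma(i)}^2+2(\sum_i x_iy_{\sigma(i)})^2=g(\xbf,\ybf,\sigma)$, the objective of \eqref{eq:qap_to_prove}.

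Next, $DS$ is a compact polytope lying in an affine subspace on which $Q$ is convex, so its maximum over $DS$ is attained at an extreme point (the standard fact that a convex function on a polytope attains its maximum at a vertex, cf. \cite{rockafellar-1970a}); by the Birkhoff--von Neumann theorem the extreme points of $DS$ are precisely the permutation matrices. Together with $Q(\GG_\sigma)=g(\xbf,\ybf,\sigma)$ this gives $\max_{\GG\in DS}Q(\GG)=\max_{\sigma\in\Sn}g(\xbf,\ybf,\sigma)$, and shows in particular that if $\sigma^{*}$ solves \eqref{eq:qap_to_prove} then $\GG_{\sigma^{*}}$ solves \eqref{eq:qp_equiv}. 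For the converse, let $\GG^{*}$ be optimal for \eqref{eq:qp_equiv} and write it as a convex combination $\GG^{*}=\sum_k\lambda_k\GG_{\sigma_k}$ of permutation matrices with every $\lambda_k>0$. Convexity of $Q$ on the affine hull of $DS$ yields $Q(\GG^{*})\le\sum_k\lambda_k Q(\GG_{\sigma_k})\le\max_k Q(\GG_{\sigma_k})\le\max_{\GG\in DS}Q(\GG)=Q(\GG^{*})$, so all these inequalities are equalities; hence $Q(\GG_{\sigma_k})=\max_{\GG\in DS}Q(\GG)$ for each $k$, meaning every $\sigma_k$ is optimal for \eqref{eq:qap_to_prove}, and each pair $(i,\sigma_k(i))$ belongs to $\supp(\GG^{*})$. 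Thus $\GG^{*}$ is supported on an optimal permutation, as claimed.

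The only genuine subtlety is the convexity step: $Q$ is convex \emph{only} after restriction to the affine hull of $DS$---the term $\sum_{ij}x_i^2y_j^2\pi_{ij}(\sum_{kl}\pi_{kl})$ is linear there precisely because $\sum_{kl}\pi_{kl}$ is pinned to $n$---and one must make sure this restricted setting is exactly the one in which the ``maximum at a vertex'' principle is invoked. The rest is bookkeeping; this is the concavity-of-the-QAP-relaxation principle of \cite{NIPS2018_7323} specialised to the cost $c(x,x')=(x-x')^2$, whose conditional negative definiteness is what makes the quadratic form sign-definite on the zero-sum subspace.
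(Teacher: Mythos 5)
Your proof is correct and follows essentially the same route as the paper: both pass from \eqref{eq:qap_to_prove} to the relaxation \eqref{eq:qp_equiv}, observe that the objective is a convex function being maximized over the Birkhoff polytope, and invoke the extreme-point principle together with the fact that the extreme points of $DS$ are exactly the permutation matrices. Your write-up is in fact a bit more careful than the paper's, since you make explicit that convexity holds only on the affine hull of $DS$ (where $\sum_{kl}\pi_{kl}=n$ pins the first term to a linear one) and you establish the converse direction via the Birkhoff--von Neumann decomposition and Jensen's inequality instead of asserting that every optimizer is a vertex.
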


\begin{proof}
The problem \eqref{eq:qap_to_prove} can be rewritten as:
{\small
\begin{equation}
\label{eq:QAP_with_P}
\begin{split}
&\max_{\begin{smallmatrix}P_{ij}\in\{0,1\} \\ \forall j \sum_i P_{ij}=1 \\ \forall i \sum_j P_{ij}=1 \end{smallmatrix}} n \sum_{ij} x_i^{2} y_{j}^{2} P_{ij} +2 \left(\sum_{i,j} x_i y_{j}P_{ij}\right)^{2} =\max_{\begin{smallmatrix}P_{ij}\in\{0,1\} \\ \forall j \sum_i P_{ij}=1 \\ \forall i \sum_j P_{ij}=1 \end{smallmatrix}} n \sum_{ij} x_i^{2} y_{j}^{2} P_{ij} +2 \sum_{ijkl}x_i x_k y_j y_l P_{ij} P_{kl} \\
&\stackrel{*}{=}\max_{\begin{smallmatrix}P_{ij}\in\{0,1\} \\ \forall j \sum_i P_{ij}=1 \\ \forall i \sum_j P_{ij}=1 \end{smallmatrix}} \sum_{ijkl} x_i^2 y_j^2 P_{ij} P_{kl} + 2 \sum_{ijkl}x_i y_j x_k y_l P_{ij} P_{kl}= \max_{\begin{smallmatrix}P_{ij}\in\{0,1\} \\ \forall j \sum_i P_{ij}=1 \\ \forall i \sum_j P_{ij}=1 \end{smallmatrix}} \sum_{ijkl} (x_i^2 y_j^2+ 2 x_i y_j x_k y_l)P_{ij} P_{kl}\\
\end{split}
\end{equation}
}
In (*) we used $\sum_{k,l} P_{k,l}=n$. We consider the following relaxation of \eqref{eq:QAP_with_P} as:
\begin{equation}
\max_{\GG \in DS} \sum_{ijkl} (x_i^2 y_j^2+ 2 x_i y_j x_k y_l)\pi_{ij} \pi_{kl}\\
\end{equation}
which is a maximization of a convex function. More precislely it is quadratic programming problem which Hessian is positive semi-definite $\xbf \xbf^{T}\kron \ybf \ybf^{T}$. Since the problem is a maximization of a convex function an optimal solution $\GG^{*}$ of \eqref{eq:qp_equiv} lies necassarily in the extremal points of $DS$ \cite{rockafellar-1970a} such that both \eqref{eq:qp_equiv} and \eqref{eq:qap_to_prove} are equivalent: if $\GG^{*}$ is an optimal solution it is necessarily supported on a $\sigma^{*} \in \Sn$ such that $\sigma^{*}$ is an optimal solution of \eqref{eq:qap_to_prove} and if $\sigma^{*} \in \Sn$ is an optimal solution of \eqref{eq:qap_to_prove} then $\GG^{*}$ defined by $\pi^{*}_{ij}=1$ if $j=\sigma^{*}(i)$ else $0$ for all $(i,j) \in \integ{n}^{2}$ is an optimal solution of \eqref{eq:qp_equiv}.

\end{proof}

\begin{lemma}
\label{lemma:gg_two_cases}
Let $\xbf,\ybf \in \mathcal D$. For $\sigma \in \Sn$ we note $C(\xbf,\ybf,\sigma)=\sum_{i}x_i y_{\sigma(i)}$. Let $\GG^{*}$ an optimal solution of \eqref{eq:qp_equiv} with $\sigma^{*}$ the permutation associated to $\GG^{*}$.

If $C(\xbf,\ybf,\sigma^{*})>0$ then $\GG^{*}=\mathbf{I_n}$ is the identiy and if $C(\xbf,\ybf,\sigma^{*})<0$ then $\GG^{*}=\mathbf{J_n}$ is the anti-identity. 
\end{lemma}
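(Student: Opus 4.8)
The plan is, first, to invoke Lemma~\ref{lemma:equivalence_both_probs} so that $\GG^{*}=\GG_{\sigma^{*}}$ is the permutation matrix of some $\sigma^{*}\in\Sn$ maximising, over the doubly stochastic matrices $DS$, the convex quadratic
\[
g(\pi)=n\langle M,\pi\rangle+2\langle N,\pi\rangle^{2},\qquad M=(x_{i}^{2}y_{j}^{2})_{i,j},\quad N=(x_{i}y_{j})_{i,j},
\]
and $C^{*}:=C(\xbf,\ybf,\sigma^{*})=\langle N,\GG_{\sigma^{*}}\rangle=\sum_{i}x_{i}y_{\sigma^{*}(i)}$. The two halves of the statement are interchanged by the symmetry $\ybf\mapsto-\ybf$ (reindexed to keep the increasing order), which leaves $g$ invariant, swaps the identity and the anti-identity, and flips the sign of $C^{*}$; hence it suffices to treat the case $C^{*}>0$ and prove $\sigma^{*}=Id$, i.e.\ $\GG^{*}=\mathbf{I_n}$.

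Next I would linearise using convexity of $g$ (Lemma~\ref{lemma:equivalence_both_probs}). Since $g$ is convex and $DS$ is a polytope, the supporting-hyperplane inequality $g(\pi)\ge g(\GG_{\sigma^{*}})+\langle\nabla g(\GG_{\sigma^{*}}),\pi-\GG_{\sigma^{*}}\rangle$ combined with $g(\pi)\le g(\GG_{\sigma^{*}})$ shows that $\GG_{\sigma^{*}}$ also maximises the linear functional $\pi\mapsto\langle\nabla g(\GG_{\sigma^{*}}),\pi\rangle$ over $DS$; as $\nabla g(\pi)=nM+4\langle N,\pi\rangle N$, this means that $\sigma^{*}$ solves the \emph{linear} assignment problem with cost $nx_{i}^{2}y_{j}^{2}+4C^{*}x_{i}y_{j}$. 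Completing the square and using $\sum_{i,j}\pi_{i,j}=n$ on $DS$, the latter is equivalent to
\[
\sigma^{*}\in\underset{\sigma\in\Sn}{\argmax}\ \sum_{i}\bigl(nx_{i}y_{\sigma(i)}+2C^{*}\bigr)^{2}.
\]

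Finally I would run an exchange argument on $\sigma^{*}$. If $\sigma^{*}\neq Id$, it has an inversion: positions $i<j$ with $a:=\sigma^{*}(i)>b:=\sigma^{*}(j)$. Letting $\tau$ be $\sigma^{*}$ with the images of $i$ and $j$ exchanged, a direct expansion yields the identity
\[
g(\tau)-g(\sigma^{*})=uv\Bigl[\,n(x_{i}+x_{j})(y_{a}+y_{b})+4C^{*}+2uv\,\Bigr],\qquad u:=x_{j}-x_{i}>0,\ \ v:=y_{a}-y_{b}>0 .
\]
Optimality of $\sigma^{*}$ forces the bracket to be $\le0$, so (as $4C^{*}+2uv>0$) $(x_{i}+x_{j})$ and $(y_{a}+y_{b})$ have opposite signs, with moreover $n\,|x_{i}+x_{j}|\,|y_{a}+y_{b}|\ge 4C^{*}$, for \emph{every} inversion of $\sigma^{*}$. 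The crux -- and the part I expect to be the actual work -- is to show that this ``sign reversal'' on all inversions cannot coexist with $\sum_{i}x_{i}y_{\sigma^{*}(i)}=C^{*}>0$: informally it forces $\sigma^{*}$ to carry positions where $x_{i}+x_{j}>0$ onto $y$-values where $y_{\sigma^{*}(i)}+y_{\sigma^{*}(j)}<0$ and conversely, which drives $C^{*}$ below $0$. I would make this rigorous either by induction on the inversions of $\sigma^{*}$ (peeling off a sign-flipping inversion while controlling a suitable potential), or by feeding the displayed inequality back into the linear-assignment optimality of the previous step together with the comparison $g(\sigma^{*})\ge g(Id)$ (which already yields $\sum_{i}x_{i}^{2}y_{\sigma^{*}(i)}^{2}\ge\sum_{i}x_{i}^{2}y_{i}^{2}$). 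Once a contradiction is reached, $\sigma^{*}$ has no inversion, hence $\sigma^{*}=Id$ and $\GG^{*}=\mathbf{I_n}$; the case $C^{*}<0$ then gives $\GG^{*}=\mathbf{J_n}$ by the symmetry of the first paragraph.
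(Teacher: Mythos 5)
Your reduction and setup are sound: the symmetry $\ybf\mapsto-\ybf$ argument is valid, the supporting-hyperplane step correctly shows that an optimal $\GG^{*}=\GG_{\sigma^{*}}$ of \eqref{eq:qp_equiv} must also maximise the linearised objective $\langle\nabla g(\GG_{\sigma^{*}}),\pi\rangle$ over $DS$, and your exchange identity $g(\tau)-g(\sigma^{*})=uv\bigl[n(x_i+x_j)(y_a+y_b)+4C^{*}+2uv\bigr]$ checks out. The difficulty is that your argument stops exactly where the lemma actually lives. Because you linearise with the \emph{full gradient}, the term $n\langle M,\pi\rangle=n\sum_{ij}x_i^2y_j^2\pi_{ij}$ survives as a genuine linear cost (it is not constant on $DS$), so the resulting assignment problem $\max_{\sigma}\sum_i\bigl(nx_iy_{\sigma(i)}+2C^{*}\bigr)^2$ is not settled by sorting: the $x_i^2y_j^2$ part is not monotone once the signs of the $x_i,y_j$ change. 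You acknowledge this and reduce everything to the claim that the ``sign reversal on every inversion'' configuration is incompatible with $C^{*}>0$, but you only name two candidate strategies (induction on inversions, or re-injecting the inequality into the LP) without carrying either out. That incompatibility claim is essentially the content of the lemma, so as written the proof has a genuine gap at its decisive step, and it is not obvious that the exchange route closes easily.

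The paper sidesteps this by choosing a different linearisation. It invokes Theorem 1.12 of Murty (Theorem \ref{murty_theo}), by which the optimum of \eqref{eq:qp_equiv} must also solve the LP obtained by freezing the \emph{first} factor of the quadratic at $\pi^{*}$, namely $\max_{\GG\in DS}\sum_{ijkl}(x_i^2y_j^2+2x_iy_jx_ky_l)\pi^{*}_{ij}\pi_{kl}$. There the $x_i^2y_j^2$ contribution collapses to the constant $n\sum_{ij}x_i^2y_j^2\pi^{*}_{ij}$ (since $\sum_{kl}\pi_{kl}=n$), so the LP reduces to maximising $C(\xbf,\ybf,\sigma^{*})\sum_{kl}x_ky_l\pi_{kl}$ over $DS$; the rearrangement inequality with strict inequalities (the $x_i$ and $y_j$ being pairwise distinct) then makes $\mathbf{I_n}$ the unique maximiser when $C>0$ and $\mathbf{J_n}$ when $C<0$, which is the whole proof. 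To repair your version you would either have to prove the missing combinatorial claim about inversions, or replace the gradient linearisation by one that, like the paper's, eliminates the $x_i^2y_j^2$ term.
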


To prove this result we will rely on the following theorem which gives necessary conditions for being an optimal solution of \eqref{eq:qp_equiv}:

\begin{theorem}[Theorem 1.12 in \cite{murty_linear_1988}]
\label{murty_theo}
Consider the following (QP):
\begin{equation}
\label{eq:qp_general}
\begin{array}{cl}{\min _{\xbf} f(\xbf)} & {=\mathbf{c} \xbf+\xbf^{T} \mathbf{Q} \xbf} \\ {\text {s.t.}} & {\mathbf{A} \xbf = \mathbf{b}},\;  {\xbf \geq 0}\end{array}
\end{equation}
Then if $\xbf_{*}$ is an optimal solution of \eqref{eq:qp_general} it is an optimal solution of the following (LP):
\begin{equation}
\label{eq:lp_general}
\begin{array}{cl}{\min _{\xbf} f(\xbf)} & {=(\mathbf{c} + \xbf_{*}^{T}\mathbf{Q} )  \xbf} \\ {\text {s.t.}} & {\mathbf{A} \xbf = \mathbf{b}},\;  {\xbf \geq 0}\end{array}
\end{equation}
\end{theorem}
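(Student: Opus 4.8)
The plan is to prove the first-order optimality condition for quadratic programs directly from the convexity properties implicit in the statement's use, namely that we are comparing $f(\xbf) = \mathbf{c}\xbf + \xbf^T\mathbf{Q}\xbf$ against its partial linearization at the optimum. The key observation is that for any optimal solution $\xbf_*$ of \eqref{eq:qp_general}, and any feasible $\xbf$ (so $\mathbf{A}\xbf = \mathbf{b}$, $\xbf \geq 0$), the whole segment $\xbf_t = \xbf_* + t(\xbf - \xbf_*)$ for $t \in [0,1]$ is feasible, since the feasible set is convex. First I would compute $f(\xbf_t)$ as a quadratic polynomial in $t$: writing $\dbf = \xbf - \xbf_*$, we get $f(\xbf_t) = f(\xbf_*) + t\,(\mathbf{c} + 2\xbf_*^T\mathbf{Q})\dbf + t^2\, \dbf^T\mathbf{Q}\dbf$. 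Since $\xbf_*$ is a global minimizer, the derivative at $t=0$ must be nonnegative, giving $(\mathbf{c} + 2\xbf_*^T\mathbf{Q})\dbf \geq 0$ for every feasible $\xbf$. (A factor-of-2 convention on $\mathbf{Q}$ in the statement may absorb the $2$; I would match whichever normalization the paper uses so that the gradient of $\xbf^T\mathbf{Q}\xbf$ reads as $\xbf_*^T\mathbf{Q}$ in \eqref{eq:lp_general}.)

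Next I would translate this inequality into the claimed LP statement. The inequality $(\mathbf{c} + \xbf_*^T\mathbf{Q})(\xbf - \xbf_*) \geq 0$ holding for all feasible $\xbf$ says exactly that $\xbf_*$ minimizes the \emph{linear} functional $\xbf \mapsto (\mathbf{c} + \xbf_*^T\mathbf{Q})\xbf$ over the feasible polytope $\{\xbf : \mathbf{A}\xbf = \mathbf{b},\ \xbf \geq 0\}$, because $(\mathbf{c} + \xbf_*^T\mathbf{Q})\xbf \geq (\mathbf{c} + \xbf_*^T\mathbf{Q})\xbf_*$ for every such $\xbf$. That is precisely the assertion that $\xbf_*$ is an optimal solution of \eqref{eq:lp_general}, which completes the argument. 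The logical flow is thus: convexity of the feasible set $\Rightarrow$ feasibility of the segment $\Rightarrow$ directional-derivative nonnegativity at $t=0$ $\Rightarrow$ $\xbf_*$ solves the linearized LP.

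The main subtlety — rather than a genuine obstacle — is bookkeeping around the quadratic-form normalization (the $2$ in the cross term) and making sure the directional derivative is taken correctly when $\mathbf{Q}$ is not assumed symmetric; replacing $\mathbf{Q}$ by $\tfrac12(\mathbf{Q}+\mathbf{Q}^T)$ leaves $f$ unchanged and makes the gradient computation transparent. No convexity of $f$ itself is needed: only convexity of the constraint set is used, so the result holds for arbitrary (possibly indefinite) $\mathbf{Q}$, which is exactly why it applies to the maximization of the convex objective in \eqref{eq:qp_equiv} after passing to the equivalent minimization form. Since this is a standard first-order necessary condition I would keep the write-up to the few lines above and cite \cite{murty_linear_1988} for the original statement.
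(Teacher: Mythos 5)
The paper does not prove this statement at all: it is quoted verbatim as Theorem 1.12 of Murty's book and used as a black box, so there is no internal proof to compare against. Your argument is the standard (and correct) one: convexity of the feasible polytope makes the segment $\xbf_*+t(\xbf-\xbf_*)$ feasible, $f$ restricted to that segment is a quadratic in $t$ whose derivative at $t=0^+$ must be nonnegative at a global minimizer, and the resulting variational inequality $\nabla f(\xbf_*)^{T}(\xbf-\xbf_*)\geq 0$ for all feasible $\xbf$ is exactly the statement that $\xbf_*$ minimizes the linearized objective over the same polytope. This is precisely how the result is proved in Murty, so I would classify your proposal as correct and essentially the canonical route.

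One remark on the normalization issue you flag: it is not merely cosmetic. With the objective written as $f(\xbf)=\mathbf{c}\xbf+\xbf^{T}\mathbf{Q}\xbf$, your computation gives the gradient $\mathbf{c}+\xbf_*^{T}(\mathbf{Q}+\mathbf{Q}^{T})$, so the LP one actually obtains has cost vector $\mathbf{c}+\xbf_*^{T}(\mathbf{Q}+\mathbf{Q}^{T})$, not $\mathbf{c}+\xbf_*^{T}\mathbf{Q}$; taken literally, \eqref{eq:lp_general} as transcribed is false when $\mathbf{c}\neq 0$ (e.g.\ minimizing $-3x+x^{2}$ over $[0,2]$ gives $x_*=3/2$, which does not minimize $(-3+x_*)x$). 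The statement is exact under Murty's own convention $f(\xbf)=\mathbf{c}\xbf+\tfrac12\xbf^{T}\mathbf{Q}\xbf$ with symmetric $\mathbf{Q}$, and is harmless as used in the paper because there the linear functional is only needed up to a positive scalar. Your explicit choice to symmetrize $\mathbf{Q}$ and match the convention so that the gradient reads $\mathbf{c}+\xbf_*^{T}\mathbf{Q}$ is therefore the right fix, and it is worth stating that convention explicitly rather than leaving it as a parenthetical.
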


\begin{proof}{Of Lemma \ref{lemma:gg_two_cases}.}
Applying Theorem \ref{murty_theo} in our case gives that if $\GG^{*}$ is a solution of \eqref{eq:qp_equiv} it necessarily a solution of the following (LP):
\begin{equation}
\max_{\GG \in DS} \sum_{ijkl} (x_i^2 y_j^2+ 2 x_i y_j x_k y_l)\pi^{*}_{ij} \pi_{kl}=n \sum_{ij} x_i^{2} y_{j}^{2} \pi^{*}_{ij}+  \max_{\GG \in DS} 2(\sum_{ij} x_i y_j \pi^{*}_{ij}) (\sum_{kl} x_k y_l \pi_{kl})
\end{equation}

Since $\GG^{*}$ is supported on a permutation $\sigma^{*}$ this gives:
\begin{equation}
\tag{LP}
n \sum_{i} x_i^{2} y_{\sigma^{*}(i)}^{2}+ \max_{\GG \in DS} C(\xbf,\ybf,\sigma^{*})\sum_{kl} x_k y_l \pi_{kl}
\end{equation}
where $C(\xbf,\ybf,\sigma^{*})=2\left(\sum_{i}x_i y_{\sigma^{*}(i)}\right)$. 
\begin{itemize}
\item If $C(\xbf,\ybf,\sigma^{*})>0$ then this (LP) has a unique solution which is the identity $\GG^{*}=\mathbf{I_n}$. This is a consequence of the Rearrangement Inequality (see Theorem \ref{memo:rearr}) which states that for all permutations $\sum_{i} x_{i} y_{\sigma(i)} < \sum_{i} x_{i} y_{i}$ (since $x_i$ and $y_j$ are distinct). Using the fact that an optimal solution of (LP) is supported on a permutation concludes.
\item If $C(\xbf,\ybf,\sigma^{*})<0$ then the anti-identity is the unique optimum with the same reasoning since $\sum_i x_i y_{n+1-i} < \sum_{i} x_{i} y_{\sigma(i)}$ for all permutation because of Rearrangement Inequality.
\end{itemize}
\end{proof}

Using both results we can prove the following proposition which is the main ingredient to prove Theorem \ref{qap}:

\begin{prop}
\label{prop:twocases}
Let $\xbf,\ybf \in \mathcal{D}$ and $\sigma^{*}$ a solution of \eqref{eq:qap_to_prove} \ie\ $\sigma^{*} \in \argmax_{\sigma \in \Sn} g(\xbf,\ybf,\sigma)$. For $\sigma \in \Sn$ we note $C(\xbf,\ybf,\sigma)=\sum_{i}x_i y_{\sigma(i)}$. 

If $C(\xbf,\ybf,\sigma^{*})>0$ then $\sigma^{*}$ is the identiy permutation $\sigma^{*}(i)=i$ and if $C(\xbf,\ybf,\sigma^{*})<0$ then $\sigma^{*}$ is the anti-identity permutation $\sigma^{*}(i)=n+1-i$ for all $i \in \integ{n}$.
\end{prop}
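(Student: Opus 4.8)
The plan is simply to chain together Lemma~\ref{lemma:equivalence_both_probs} and Lemma~\ref{lemma:gg_two_cases}, which between them already contain all the work. The only content of Proposition~\ref{prop:twocases} beyond those two lemmas is the observation that the permutation matrix attached to an optimal $\sigma^{*}$ of \eqref{eq:qap_to_prove} is itself an optimal point of the relaxed problem \eqref{eq:qp_equiv}, so the dichotomy established for doubly stochastic optima transfers verbatim to permutations.

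Concretely, first I would fix $\xbf,\ybf \in \mathcal{D}$ and let $\sigma^{*} \in \argmax_{\sigma \in \Sn} g(\xbf,\ybf,\sigma)$, i.e.\ $\sigma^{*}$ solves \eqref{eq:qap_to_prove}. By the second half of Lemma~\ref{lemma:equivalence_both_probs}, the permutation matrix $\GG_{\sigma^{*}}$ defined by $\pi_{\sigma^{*}}(i,j)=1$ iff $j=\sigma^{*}(i)$ is an optimal solution of the relaxed convex-maximization problem \eqref{eq:qp_equiv}. Then I would apply Lemma~\ref{lemma:gg_two_cases} with this specific optimizer $\GG^{*}=\GG_{\sigma^{*}}$, whose associated permutation is exactly $\sigma^{*}$ by construction. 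Lemma~\ref{lemma:gg_two_cases} then asserts: if $C(\xbf,\ybf,\sigma^{*})=\sum_i x_i y_{\sigma^{*}(i)}>0$ then $\GG^{*}=\mathbf{I_n}$, and if $C(\xbf,\ybf,\sigma^{*})<0$ then $\GG^{*}=\mathbf{J_n}$. Since $\GG^{*}=\GG_{\sigma^{*}}$, the first case forces $\sigma^{*}=Id$ and the second forces $\sigma^{*}=anti\text{-}Id$, which is precisely the claim.

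There is essentially no obstacle at this stage: all the genuine difficulty has already been absorbed into the two lemmas (the convexity / extremal-point argument of Lemma~\ref{lemma:equivalence_both_probs}, and the first-order optimality condition of Theorem~\ref{murty_theo} combined with the rearrangement inequality in Lemma~\ref{lemma:gg_two_cases}). The one point deserving a sentence of care is checking that the permutation Lemma~\ref{lemma:gg_two_cases} attaches to $\GG^{*}$ coincides with our $\sigma^{*}$, which is immediate because $\GG_{\sigma^{*}}$ is supported on $\sigma^{*}$. Note that the remaining gap between Proposition~\ref{prop:twocases} and the full Theorem~\ref{qap} — passing from the zero-sum set $\mathcal{D}$ to the general set $\mathcal{I}$ via translation invariance of $Z$, and disposing of the borderline case $C(\xbf,\ybf,\sigma^{*})=0$ — is handled separately and is not part of the present statement.
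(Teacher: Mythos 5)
Your proposal is correct and follows essentially the same route as the paper's own proof: invoke Lemma~\ref{lemma:equivalence_both_probs} to promote the permutation matrix $\GG_{\sigma^{*}}$ to an optimal solution of the relaxed problem \eqref{eq:qp_equiv}, then apply Lemma~\ref{lemma:gg_two_cases} to force $\GG_{\sigma^{*}}\in\{\mathbf{I_n},\mathbf{J_n}\}$ according to the sign of $C(\xbf,\ybf,\sigma^{*})$, hence $\sigma^{*}\in\{Id, anti\text{-}Id\}$. The only cosmetic difference is that the paper phrases the last step as a contradiction while you argue directly, which changes nothing of substance.
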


\begin{proof}
Let $\sigma^{*}$ be an optimal solution of \eqref{eq:qap_to_prove} and $\GG^{*}$ defined by $\pi^{*}_{ij}=1$ if $j=\sigma^{*}(i)$ else $0$. By Lemma \ref{lemma:equivalence_both_probs} we know that $\GG^{*}$ is an optimal solution of \eqref{eq:qp_equiv}. Consider the case $C(\xbf,\ybf,\sigma^{*})>0$. Suppose that $\sigma^{*}$ is not the identity, then $\GG^{*} \neq \mathbf{I_n}$ which is not possible by Lemma \ref{lemma:gg_two_cases} since $\GG^{*}$ is an optimal solution of \eqref{eq:qp_equiv}. Same applies for $C(\xbf,\ybf,\sigma^{*})<0$ and the anti-identity.

\end{proof}

To state that \eqref{eq:qap_to_prove} admits the identity or the anti-identity as optimal permutations we will rely on the previous proposition and on the continuity of the loss $g$:

\begin{lemma}[Continuity of $g$]
\label{lemma:continuity_Z}
Let $\xbf,\ybf \in \mathcal{D}$ fixed. There exists $\epsilon_{x,y}>0$ such that for all $\|\mathbf{h}\|<\epsilon_{x,y}$ we have:
\begin{equation} 
\argmax_{\sigma \in \Sn} g(\xbf+\mathbf{h},\ybf,\sigma) \subset \argmax_{\sigma \in \Sn} g(\xbf,\ybf,\sigma)
\end{equation}
\end{lemma}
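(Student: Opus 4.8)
The key observation is that for each fixed permutation $\sigma \in \Sn$ the map $\xbf \mapsto g(\xbf,\ybf,\sigma)$ is a polynomial, hence continuous, and that $\Sn$ is a \emph{finite} set. Therefore the optimal value and the set of maximizers of $\sigma \mapsto g(\xbf,\ybf,\sigma)$ are locally stable under small perturbations of $\xbf$, which is exactly the assertion to prove. No appeal to the structure of $\mathcal{D}$ is needed for this lemma; it is a purely topological statement about a finite family of continuous functions.

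First I would set $m^{*} = \max_{\sigma \in \Sn} g(\xbf,\ybf,\sigma)$ and $A = \argmax_{\sigma \in \Sn} g(\xbf,\ybf,\sigma)$. If $A = \Sn$ the desired inclusion is trivial, so I assume $A \subsetneq \Sn$ and define the gap $\delta = m^{*} - \max_{\sigma \notin A} g(\xbf,\ybf,\sigma)$. Since $\Sn \setminus A$ is a nonempty finite set on which $g(\xbf,\ybf,\cdot)$ is strictly below $m^{*}$, the number $\delta$ is a strictly positive real.

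Next, using continuity at $\xbf$ of each of the finitely many polynomials $g(\cdot,\ybf,\sigma)$, there is $\epsilon_{x,y} > 0$ such that $\|\mathbf{h}\| < \epsilon_{x,y}$ forces $|g(\xbf+\mathbf{h},\ybf,\sigma) - g(\xbf,\ybf,\sigma)| < \delta/2$ simultaneously for all $\sigma \in \Sn$. Then for any $\tau \in A$ one has $g(\xbf+\mathbf{h},\ybf,\tau) > m^{*} - \delta/2$, whereas for any $\sigma \notin A$ one has $g(\xbf+\mathbf{h},\ybf,\sigma) < (m^{*} - \delta) + \delta/2 = m^{*} - \delta/2$. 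Hence no $\sigma \notin A$ can be a maximizer of $g(\xbf+\mathbf{h},\ybf,\cdot)$, i.e. $\argmax_{\sigma \in \Sn} g(\xbf+\mathbf{h},\ybf,\sigma) \subset A$, which is the claim.

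\textbf{Main obstacle.} There is essentially no obstacle: the proof is routine, and the only points needing mild care are the degenerate case $A = \Sn$ (disposed of as trivial) and keeping the inequalities strict so that $\delta > 0$. The role of this lemma in the sequel is what matters: combined with Proposition~\ref{prop:twocases}, it lets one perturb $\xbf$ inside $\mathcal{D}$ so that $C(\xbf+\mathbf{h},\ybf,\sigma^{*}) \neq 0$, conclude that the perturbed optimal permutation is the identity or the anti-identity, and then transfer that conclusion back to the original problem via the inclusion above, thereby establishing Theorem~\ref{qap}.
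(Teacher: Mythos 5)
Your proof is correct and follows essentially the same route as the paper: both exploit that $g(\cdot,\ybf,\sigma)$ is continuous for each of the finitely many $\sigma$, define a positive optimality gap, and choose $\epsilon_{x,y}$ small enough that the gap survives perturbation, so no non-maximizer can become a maximizer. Your version is marginally cleaner in using a single uniform gap $\delta$ (and explicitly disposing of the trivial case $A=\Sn$) where the paper works pairwise with $\eta/4$ margins, but the argument is the same.
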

\begin{proof}
Let $\xbf,\ybf \in \mathcal D$, $\sigma^{*} \in \argmax_{\sigma \in \Sn} g(\xbf,\ybf,\sigma)$ and $\tau$ any permutation in $\Sn$ such that $\tau \notin \argmax_{\sigma \in \Sn} g(\xbf,\ybf,\sigma)$. Then we have $g(\xbf,\ybf,\sigma^{*})>g(\xbf,\ybf,\tau)$. Let $\eta=g(\xbf,\ybf,\sigma^{*})-g(\xbf,\ybf,\tau)>0$. For all permutation $\beta$ we have that $g(.,\ybf,\beta)$ is continuous. In this way:
\begin{equation}
\label{eq:continuityh}
\begin{split}
&\forall \beta \in \Sn, \exists \epsilon_{\xbf,\ybf}(\beta,\sigma^{*},\tau)>0, \ \forall \|\mathbf{h}\|<\epsilon_{\xbf,\ybf}(\beta,\sigma^{*},\tau), \ |g(\xbf+\mathbf{h},\ybf,\beta)-g(\xbf,\ybf,\beta)|< \frac{\eta}{4} \\
\end{split}
\end{equation}
Let $\mathbf{h}\in \R^{n}$ such that $\|\mathbf{h}\|<\underset{(\beta,\sigma,\tau') \in (\Sn)^{3}}{\min}\epsilon_{\xbf,\ybf}(\beta,\sigma,\tau')$. By \eqref{eq:continuityh} applied to $\sigma^{*}$ and $\tau$:
\begin{equation}
\begin{split}
g(\xbf+\mathbf{h},\ybf,\sigma^{*})-g(\xbf+\mathbf{h},\ybf,\tau)&=g(\xbf+\mathbf{h},\ybf,\sigma^{*})-g(\xbf,\ybf,\sigma^{*})\\
&+g(\xbf,\ybf,\sigma^{*})-g(\xbf,\ybf,\tau)+g(\xbf,\ybf,\tau)-g(\xbf+\mathbf{h},\ybf,\tau) \\
&>-\frac{\eta}{4}+\eta-\frac{\eta}{4} \\
&= \frac{\eta}{2}>0
\end{split}
\end{equation}
So that $g(\xbf+\mathbf{h},\ybf,\sigma^{*})>g(\xbf+\mathbf{h},\ybf,\tau)$ and in this way $\tau \notin \argmax_{\sigma \in \Sn} g(\xbf+\mathbf{h},\ybf,\sigma)$ because $\sigma^{*}$ leads to a striclty better cost. Overall we have proven that for any permutation $\tau$, if $\tau \notin \argmax_{\sigma \in \Sn} g(\xbf,\ybf,\sigma)$ and $\|\mathbf{h}\|<\underset{(\beta,\sigma,\tau') \in (\Sn)^{3}}{\min}\epsilon_{\xbf,\ybf}(\beta,\sigma,\tau')$ then $\tau \notin \argmax_{\sigma \in \Sn} g(\xbf+\mathbf{h},\ybf,\sigma)$ which proves that $\argmax_{\sigma \in \Sn} g(\xbf+\mathbf{h},\ybf,\sigma) \subset \argmax_{\sigma \in \Sn} g(\xbf,\ybf,\sigma)$.
\end{proof}

Using the previous lemma we can now prove the following result:

\begin{lemma}
\label{lemma:endlemma}
Let $\xbf,\ybf \in \mathcal{D}$ fixed. There exists $\epsilonb_{0} \in \R^{n}$ such that:
\begin{equation}
\begin{split}
&\argmax_{\sigma \in \Sn} g(\xbf+\epsilonb_{0},\ybf,\sigma) \subset \argmax_{\sigma \in \Sn} g(\xbf,\ybf,\sigma) \\
&\argmax_{\sigma \in \Sn} g(\xbf+\epsilonb_{0},\ybf,\sigma) \subset \{Id,anti-Id\}
\end{split}
\end{equation}
\end{lemma}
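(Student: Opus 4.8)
The idea is to produce $\epsilonb_{0}$ as a small, \emph{generic} perturbation of $\xbf$ that keeps the pair inside $\mathcal{D}$. Being ``small'' will let me invoke Lemma~\ref{lemma:continuity_Z}, so the maximizers of the perturbed problem stay among those of the original one; being ``generic'' will guarantee that every maximizer $\sigma^{*}$ of the perturbed problem has $C(\xbf+\epsilonb_{0},\ybf,\sigma^{*})\neq 0$, after which Proposition~\ref{prop:twocases} forces $\sigma^{*}$ to be the identity or the anti-identity.

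First I would dispose of $n=1$, where the identity and the anti-identity coincide and $\epsilonb_{0}=0$ trivially works, and assume henceforth $n\geq 2$. Set $\mathcal{S}^{*}=\argmax_{\sigma\in\Sn}g(\xbf,\ybf,\sigma)$, a nonempty finite subset of $\Sn$. By Lemma~\ref{lemma:continuity_Z} there is $\epsilon_{x,y}>0$ such that $\argmax_{\sigma\in\Sn}g(\xbf+\mathbf{h},\ybf,\sigma)\subset\mathcal{S}^{*}$ whenever $\|\mathbf{h}\|<\epsilon_{x,y}$. Since $x_{1}<\dots<x_{n}$, there is also $\delta>0$ such that $\xbf+\mathbf{h}$ still has strictly increasing coordinates whenever $\|\mathbf{h}\|<\delta$; and if moreover $\mathbf{h}$ lies in the hyperplane $H_{0}=\{\mathbf{h}\in\R^{n}:\sum_{i}h_{i}=0\}$ then $\sum_{i}(x_{i}+h_{i})=0$, so that $\xbf+\mathbf{h}\in\mathcal{D}$ (with $\ybf$ left unchanged).

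For the second inclusion I would argue genericity inside $H_{0}$. For $\sigma\in\mathcal{S}^{*}$ write $\ell_{\sigma}(\mathbf{h})=\sum_{i}h_{i}y_{\sigma(i)}$, so that $C(\xbf+\mathbf{h},\ybf,\sigma)=C(\xbf,\ybf,\sigma)+\ell_{\sigma}(\mathbf{h})$, and set $A_{\sigma}=\{\mathbf{h}\in H_{0}:C(\xbf+\mathbf{h},\ybf,\sigma)=0\}$. The linear form $\ell_{\sigma}$ is not identically zero on $H_{0}$: otherwise the vector $(y_{\sigma(1)},\dots,y_{\sigma(n)})$ would lie in $H_{0}^{\perp}=\R\,(1,\dots,1)$, forcing all the $y_{j}$ to be equal and contradicting $y_{1}<\dots<y_{n}$ with $n\geq 2$. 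Hence each $A_{\sigma}$ is a proper affine subspace of the $(n-1)$-dimensional space $H_{0}$, so the finite union $\bigcup_{\sigma\in\mathcal{S}^{*}}A_{\sigma}$ has $(n-1)$-dimensional Lebesgue measure zero and cannot cover the ball $B=\{\mathbf{h}\in H_{0}:\|\mathbf{h}\|<\min(\epsilon_{x,y},\delta)\}$; I then pick $\epsilonb_{0}\in B\setminus\bigcup_{\sigma\in\mathcal{S}^{*}}A_{\sigma}$. Since $\|\epsilonb_{0}\|<\epsilon_{x,y}$, Lemma~\ref{lemma:continuity_Z} gives $\argmax_{\sigma\in\Sn}g(\xbf+\epsilonb_{0},\ybf,\sigma)\subset\mathcal{S}^{*}=\argmax_{\sigma\in\Sn}g(\xbf,\ybf,\sigma)$, the first inclusion. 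Since also $\epsilonb_{0}\in H_{0}$ and $\|\epsilonb_{0}\|<\delta$, we have $\xbf+\epsilonb_{0}\in\mathcal{D}$; and for any $\sigma^{*}\in\argmax_{\sigma\in\Sn}g(\xbf+\epsilonb_{0},\ybf,\sigma)$ we get $\sigma^{*}\in\mathcal{S}^{*}$ together with $\epsilonb_{0}\notin A_{\sigma^{*}}$, i.e.\ $C(\xbf+\epsilonb_{0},\ybf,\sigma^{*})\neq 0$, so Proposition~\ref{prop:twocases} applied to $(\xbf+\epsilonb_{0},\ybf)\in\mathcal{D}$ yields $\sigma^{*}\in\{Id,anti-Id\}$, the second inclusion.

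The one genuinely delicate point is the non-degeneracy of $\ell_{\sigma}$ on $H_{0}$, that is, that the zero-correlation sets $A_{\sigma}$ are \emph{proper} inside $H_{0}$: this is precisely what makes a single, arbitrarily small perturbation simultaneously feasible (it stays in $\mathcal{D}$) and effective (it breaks every tie at a permutation with vanishing correlation $C(\cdot,\ybf,\cdot)$). Everything else is a routine assembly of Lemma~\ref{lemma:continuity_Z} and Proposition~\ref{prop:twocases}.
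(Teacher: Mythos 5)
Your proof is correct: it assembles exactly the two ingredients the paper uses (Lemma~\ref{lemma:continuity_Z} to keep the perturbed maximizers inside the original ones, and Proposition~\ref{prop:twocases} to force identity or anti-identity once the correlation $C$ is nonzero), and all the steps check out — in particular the non-degeneracy of $\ell_{\sigma}$ on the zero-sum hyperplane $H_{0}$ is argued correctly, so the sets $A_{\sigma}$ are indeed proper affine subspaces and a generic small $\epsilonb_{0}\in H_{0}$ avoids them while keeping $\xbf+\epsilonb_{0},\ybf\in\mathcal{D}$. The only difference from the paper is how $\epsilonb_{0}$ is produced: the paper constructs it explicitly as $(\zeta,-\zeta,0,\dots,0)$ with $\zeta$ small, and then splits into two cases — if $C(\xbf,\ybf,\sigma^{*})=0$ the perturbation makes $C(\xbf+\epsilonb_{0},\ybf,\sigma^{*})=\zeta(y_{\sigma^{*}(1)}-y_{\sigma^{*}(2)})\neq 0$ (distinct $y_{i}$'s) and Proposition~\ref{prop:twocases} is applied to the perturbed pair, while if $C(\xbf,\ybf,\sigma^{*})\neq 0$ it is applied directly to the original pair. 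Your genericity/measure-zero selection replaces this explicit construction and case analysis by a single uniform argument (all relevant correlations are nonzero at once, so the Proposition is always applied to the perturbed pair); it is non-constructive where the paper is constructive, but slightly cleaner in that no case split is needed, and it handles $n=1$ explicitly, which the paper's choice of $\epsilonb_{0}$ implicitly assumes away.
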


\begin{proof}
Let $\xbf,\ybf \in \mathcal{D}$. We consider $\epsilonb_{0}=(\zeta,-\zeta,0,\dots,0)$ with $\zeta >0$ small enough to ensure $\zeta<\frac{x_2-x_1}{2}$ and $\|\epsilonb_{0}\|< \epsilon_{x,y}$ defined in Lemma \ref{lemma:continuity_Z}. We have $\xbf+\epsilonb_0,\ybf \in \mathcal{D}$ since $\sum_{i} (x_i +\epsilon_0(i))=\sum_{i} x_i+ \zeta-\zeta = 0$ and $x_1+\epsilon_0(1)<\dots<x_N+\epsilon_0(N)$ since $x_1+\zeta < x_2 - \zeta$. 

Let $\sigma_{\epsilonb_{0}}^{*} \in \argmax_{\sigma \in \Sn} g(\xbf+\epsilonb_{0},\ybf,\sigma)$. By Lemma \ref{lemma:continuity_Z} we have $\sigma_{\epsilonb_{0}}^{*} \in \argmax_{\sigma \in \Sn} g(\xbf,\ybf,\sigma)$. 

Moreover we have $C(\xbf+\epsilonb_0,\ybf,\sigma_{\epsilonb_{0}}^{*})=\sum_{i}x_{i}y_{\sigma_{\epsilonb_{0}}^{*}(i)}+\zeta(y_{\sigma_{\epsilonb_{0}}^{*}(0)}-y_{\sigma_{\epsilonb_{0}}^{*}(1)})$.

\begin{itemize}
\item If $\sum_{i}x_{i}y_{\sigma_{\epsilonb_{0}}^{*}(i)}=0$ then $C(\xbf+\epsilonb_0,\ybf,\sigma_{\epsilonb_{0}}^{*})=\zeta(y_{\sigma_{\epsilonb_{0}}^{*}(0)}-y_{\sigma_{\epsilonb_{0}}^{*}(1)}) \neq 0$ since all $y_i$ are distinct. We can apply Proposition \ref{prop:twocases} with $\xbf+\epsilonb_0,\ybf \in \mathcal{D}$ to conclude that $\sigma_{\epsilonb_{0}}^{*}$ is wether the identity or the anti-identity.
\item If $\sum_{i}x_{i}y_{\sigma_{\epsilonb_{0}}^{*}(i)} \neq 0$ then $\sigma_{\epsilonb_{0}}^{*} \in \argmax_{\sigma \in \Sn} g(\xbf,\ybf,\sigma)$ and $C(\xbf,\ybf,\sigma_{\epsilonb_{0}}^{*}) \neq 0$ so by Proposition \ref{prop:twocases} with $\xbf,\ybf \in \mathcal{D}$ we can conclude that $\sigma_{\epsilonb_{0}}^{*}$ is wether the identity or the anti-identity.
\end{itemize}

\end{proof}

\begin{corr}[Theorem \ref{qap} is valid]
\label{prop:zerocase}
Let $\xbf,\ybf \in \mathcal{D}$. The identity or the anti-identity is an optimal solution of \eqref{eq:qap_to_prove}
\end{corr}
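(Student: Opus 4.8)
The plan is to assemble Lemma~\ref{lemma:endlemma} with the finiteness of $\Sn$; no new analytic work is required. First I would observe that $\Sn$ is a finite set, so for the perturbed data $\xbf+\epsilonb_{0}$ furnished by Lemma~\ref{lemma:endlemma} the set $\argmax_{\sigma \in \Sn} g(\xbf+\epsilonb_{0},\ybf,\sigma)$ is nonempty; I fix any permutation $\sigma^{*}$ in it.

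The key point is that Lemma~\ref{lemma:endlemma} provides \emph{two} inclusions for the \emph{same} $\epsilonb_{0}$. From the first inclusion, $\sigma^{*} \in \argmax_{\sigma \in \Sn} g(\xbf,\ybf,\sigma)$, i.e.\ $\sigma^{*}$ is an optimal solution of \eqref{eq:qap_to_prove}. From the second inclusion, $\sigma^{*} \in \{Id,anti-Id\}$. Combining these two facts, either the identity or the anti-identity achieves $\max_{\sigma \in \Sn} g(\xbf,\ybf,\sigma)$, which is exactly the statement of the corollary.

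Although it is not strictly needed for the corollary as phrased (which is restricted to $\mathcal{D}$), the full Theorem~\ref{qap} for arbitrary $\xbf,\ybf \in \mathcal{I}$ then follows: minimizing $\sum_{i,j} -(x_{i}-x_{j})^{2}(y_{\sigma(i)}-y_{\sigma(j)})^{2}$ is the same as maximizing $Z(\xbf,\ybf,\sigma)$, which is translation invariant in $\xbf$ and in $\ybf$; translating so that $X=Y=0$ puts us in $\mathcal{D}$, and the chain of identities leading to \eqref{eq:qap_to_prove} gives $\argmax_{\sigma \in \Sn} Z(\xbf,\ybf,\sigma) = \argmax_{\sigma \in \Sn} g(\xbf,\ybf,\sigma)$, so the set of optimal permutations is unchanged by these reductions.

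I do not expect a genuine obstacle here: all the substantive content—the Murty optimality condition (Theorem~\ref{murty_theo}), the rearrangement inequality (Theorem~\ref{memo:rearr}), and the perturbation/continuity argument—has already been used to establish Lemmas~\ref{lemma:gg_two_cases}--\ref{lemma:endlemma} and Proposition~\ref{prop:twocases}. The only points to check are that the $\argmax$ over the finite set $\Sn$ is nonempty and that both inclusions of Lemma~\ref{lemma:endlemma} hold simultaneously, which is precisely how that lemma is stated.
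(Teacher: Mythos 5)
Your proof is correct and follows the same route as the paper: pick any maximizer $\sigma^{*}$ for the perturbed data $\xbf+\epsilonb_{0}$ (nonempty since $\Sn$ is finite) and apply both inclusions of Lemma~\ref{lemma:endlemma} simultaneously to conclude $\sigma^{*}\in\{Id,anti\text{-}Id\}$ and $\sigma^{*}\in\argmax_{\sigma\in\Sn} g(\xbf,\ybf,\sigma)$. The additional remark on reducing Theorem~\ref{qap} to $\mathcal{D}$ via translation invariance matches the reduction the paper carries out before stating \eqref{eq:qap_to_prove}.
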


\begin{proof}
Let $\xbf,\ybf \in \mathcal{D}$. We consider $\epsilonb_{0}$ defined in Lemma \ref{lemma:endlemma} and $\sigma_{\epsilonb_{0}}^{*} \in \argmax_{\sigma \in \Sn} g(\xbf+\epsilonb_{0},\ybf,\sigma)$. Then by Lemma \ref{lemma:endlemma} $\sigma_{\epsilonb_{0}}^{*}$ is wether the identity or the anti-identity. Moreover by Lemma \ref{lemma:endlemma} $\sigma_{\epsilonb_{0}}^{*} \in \argmax_{\sigma \in \Sn} g(\xbf,\ybf,\sigma)$ so it is an optimal solution of \eqref{eq:qap_to_prove}. This concludes that the identity or the anti-identity is an optimal solution of \eqref{eq:qap_to_prove} which proves Theorem \ref{qap}.
\end{proof}

\section{Computing $GW$ in the 1d case}

As seen in the previous theorem finding the optimal permutation $\sigma^{*}$ can be found in $O(n\log(n))$. Moreover the final cost can be written using binomial expansion:

\begin{equation}
\begin{split}
\sum_{i,j} \big((x_{i}-x_{j})^{2} - (y_{\sigma^{*}(i)}-y_{\sigma^{*}(j)})^{2}\big)^{2}&=2n\sum_{i} x_{i}^{4} - 8 \sum_{i} x_{i}^{3} \sum_{i} x_{i} + 6 (\sum_{i} x_{i}^{2})^{2} \\
&+2n\sum_{i} y_{i}^{4}- 8 \sum_{i} y_{i}^{3} \sum_{i} y_{i} + 6 (\sum_{i} y_{i}^{2})^{2}\\
&-4(\sum_{i}x_{i})^{2}(\sum_{i}y_{i})^{2} \\
&-4n\sum_{i}  x_{i}^{2}y_{\sigma^{*}(i)}^{2}+ 8 \sum_{i}((\sum_{i} x_{i}) x_{i} y_{\sigma^{*}(i)}^{2} + (\sum_{i} y_{i}) x_{i}^{2} y_{\sigma^{*}(i)}) \\
&- 8(\sum_{i} x_{i} y_{\sigma^{*}(i)})^{2}
\end{split}
\label{eq:linear:computation}
\end{equation}

which can be computed in $O(n)$ operations.

\section{Claims about $GW$}

This section aims at proving some claims in the paper about $GW$. Let us recall the notations of the paper.

We consider discrete measures $\mu \in \Pm(\R^{p})$ and $\nu \in \Pm(\R^{q})$ with $p\leq q$ on euclidean spaces such that $\mu= \sum_{i=1}^{n} a_{i} \delta_{\xbf_{i}}$ and $\nu= \sum_{i=1}^{m} b_{j} \delta_{\ybf_{j}}$, where  $a \in \Sigma_{n}$ and $b \in \Sigma_{m}$ are histograms. 

Let $c_{X}: \R^{p} \times \R^{p} \mapsto \R_{+}$ (\textit{resp.} $c_{Y}: \R^{q} \times \R^{q} \mapsto \R_{+}$) measuring the similarity between the points in $\mu$ (\textit{resp.} $\nu$). 
The Gromov-Wasserstein ($\gw$) distance is defined as:

\begin{equation}
\label{gw}
\gw_{2}^{2}(c_{X},c_{Y},\mu,\nu)= \underset{\pi \in \couplingset(a,b)}{\min} J(c_{X},c_{Y},\pi)  \\
\end{equation}

where

\begin{equation*}
J(c_{X},c_{Y},\pi) =\sum_{i,j,k,l} \big| c_{X}(\xbf_{i},\xbf_{k})-c_{Y}(\ybf_{j},\ybf_{l}) \big|^{2} \pi_{i,j}\pi_{k,l}
\end{equation*}

\subsection{$GW$ when squared euclidean distances are used}

When $c_{X},c_{Y}$ are distances it has been shown in \cite{memoli_gw} that $GW$ defines a distance on the space of metric measure spaces quotiented by the measure-preserving isometries. More precisely, $GW$ is symmetric, satisfies the triangle inequality and $\gw_{2}^{2}(c_{X},c_{Y},\mu,\nu)=0$ \textit{iff} there exists $f: \text{supp}(\mu) \rightarrow \text{supp}(\nu)$ such that 
\begin{equation}
\label{conservation_gw}
f\#\mu=\nu
\end{equation}
\begin{equation}
\label{isometrygw}
\forall \xbf,\xbf' \in \text{supp}(\mu)^{2}, c_{X}(\xbf,\xbf')=c_{Y}(f(\xbf),f(\xbf'))
\end{equation}

In the paper we claim the following lemma:
\begin{lemma}{} 
\label{gw_with_squared_distances}
Using previous notations with $c_{X}(\xbf,\xbf')=\|\xbf-\xbf'\|_{2,p}^{2}$ , $c_{Y}(\ybf,\ybf')=\|\ybf-\ybf'\|_{2,q}^{2}$. Then $\gw_{2}^{2}(c_{X},c_{Y},\mu,\nu)=0$ \textit{iff} there exists a measure preserving isometry from $\text{supp}(\mu)$ to $\text{supp}(\nu)$ which satisfies \eqref{conservation_gw} and \eqref{isometrygw}
\end{lemma}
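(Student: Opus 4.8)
The statement to prove is Lemma~\ref{gw_with_squared_distances}: with $c_X(\xbf,\xbf')=\|\xbf-\xbf'\|_{2,p}^2$ and $c_Y(\ybf,\ybf')=\|\ybf-\ybf'\|_{2,q}^2$, we have $\gw_2^2(c_X,c_Y,\mu,\nu)=0$ if and only if there is a measure-preserving isometry $f:\supp(\mu)\to\supp(\nu)$ satisfying \eqref{conservation_gw} and \eqref{isometrygw}. The ``if'' direction is immediate: given such an $f$, take the deterministic coupling $\pi_{i,j}=a_i$ if $\ybf_j=f(\xbf_i)$ and $0$ otherwise; condition \eqref{conservation_gw} guarantees this is a valid coupling in $\couplingset(a,b)$, and \eqref{isometrygw} makes every term $|c_X(\xbf_i,\xbf_k)-c_Y(\ybf_j,\ybf_l)|^2\pi_{i,j}\pi_{k,l}$ vanish, so $J(c_X,c_Y,\pi)=0$ and hence $\gw_2^2=0$. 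So the real content is the ``only if'' direction.

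\textbf{Reduction to the metric case.} The key observation is that although $c_X$ and $c_Y$ are \emph{squared} Euclidean distances (and thus not themselves metrics), the vanishing of $\gw$ with cost $c_X,c_Y$ is equivalent to the vanishing of $\gw$ with the genuine Euclidean distance costs $d_X(\xbf,\xbf')=\|\xbf-\xbf'\|_{2,p}$, $d_Y(\ybf,\ybf')=\|\ybf-\ybf'\|_{2,q}$. Indeed, $J(c_X,c_Y,\pi)=0$ forces $c_X(\xbf_i,\xbf_k)=c_Y(\ybf_j,\ybf_l)$ for every $(i,j),(k,l)$ in the support of $\pi\otimes\pi$, and since $t\mapsto t^2$ is a strictly increasing bijection on $\R_+$, this is equivalent to $d_X(\xbf_i,\xbf_k)=d_Y(\ybf_j,\ybf_l)$ on the same index set, i.e.\ to $J(d_X,d_Y,\pi)=0$. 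Hence $\gw_2^2(c_X,c_Y,\mu,\nu)=0 \iff \gw_2^2(d_X,d_Y,\mu,\nu)=0$. Since $d_X,d_Y$ are honest metrics, we may now invoke M\'emoli's characterization \cite{memoli_gw} recalled above \eqref{conservation_gw}--\eqref{isometrygw}: there exists $f:\supp(\mu)\to\supp(\nu)$ with $f\#\mu=\nu$ and $d_X(\xbf,\xbf')=d_Y(f(\xbf),f(\xbf'))$ for all $\xbf,\xbf'$. Squaring the last identity gives exactly \eqref{isometrygw} for $c_X,c_Y$, and \eqref{conservation_gw} is unchanged, which completes the ``only if'' direction.

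\textbf{Main obstacle.} The one subtlety to check carefully is that the equivalence ``$J=0$ with squared costs $\iff J=0$ with metric costs'' is genuinely an equivalence of \emph{problems} (i.e.\ $\gw=0$ in one sense iff $\gw=0$ in the other), and not just an equivalence for a \emph{fixed} $\pi$. This is clean because the set $\couplingset(a,b)$ of admissible couplings does not depend on the cost: a coupling $\pi$ achieves $J(c_X,c_Y,\pi)=0$ if and only if it achieves $J(d_X,d_Y,\pi)=0$, by the termwise argument above (using nonnegativity of each summand, so the sum vanishes iff each term does, and $t\mapsto t^2$ is injective on $\R_+$). Thus the existence of a zero-cost coupling transfers between the two formulations, and one then only needs to quote \cite{memoli_gw} as a black box. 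A minor point worth a sentence: M\'emoli's theorem is stported for mm-spaces; here $\supp(\mu)$ and $\supp(\nu)$ are finite, hence compact, so the hypotheses apply directly. No new estimates are required — the proof is essentially a change-of-variables observation plus a citation.
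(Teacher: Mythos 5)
Your proposal is correct and takes essentially the same route as the paper: the ``if'' direction via the graph coupling $(Id\times f)\#\mu$, and the ``only if'' direction by observing that a zero-cost coupling forces $c_X=c_Y$ (equivalently, after taking square roots, $d_X=d_Y$) on its support and then invoking M\'emoli's result \cite{memoli_gw}. The only cosmetic difference is that you first restate the problem with the genuine Euclidean metrics and quote M\'emoli's characterization as a black box, whereas the paper keeps the squared costs and cites the isometry construction inside the proof of Theorem 5.1 of \cite{memoli_gw} directly.
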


\begin{proof}
If such an function exists by considering the coupling $\pi=(I_{d} \times f) \# \mu$ it is clear that $\pi$ is optimal and has a zero cost so that $\gw_{2}^{2}(c_{X},c_{Y},\mu,\nu)=0$. 
Conversely, $\gw_{2}^{2}(c_{X},c_{Y},\mu,\nu)=0$ implies that $c_{X}(\xbf,\xbf')=c_{Y}(\ybf,\ybf')$ for all $(\xbf,\ybf),(\xbf',\ybf')$ in the support of an optimal plan $\pi^{*}$. This suffices to prove the existence of a measure preserving isometry (see (a) in Proof of Theorem 5.1 in \cite{memoli_gw})
\end{proof}

\subsection{Equivalence between $GM$ and $GW$ in the discrete case}

This paragraph aims at proving the equivalence between $GM$ and $GW$. We will prove the following theorem (that is more general than the one used in the paper which only considers one-dimensional measures): 

\begin{theorem}{Equivalence between $\gw$ and $\gm$ for discrete measures}
\label{sovable_gw}

With $\mu$, $\nu$ defined previously and $c_{X}(\xbf,\xbf')=\|\xbf-\xbf'\|_{2,p}^{2}$ , $c_{Y}(\ybf,\ybf')=\|\ybf-\ybf'\|_{2,q}^{2}$. Let us suppose also that $m=n$ and $\forall i \in \{1,...,n\}, a_{i}=b_{i}=\frac{1}{n}$

Then $\gw_{2}(c_{X},c_{Y},\mu,\nu)=\gm_{2}(c_{X},c_{Y},\mu,\nu)$. 
\end{theorem}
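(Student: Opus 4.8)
The plan is to show the two quantities coincide by a standard ``relaxation is tight'' argument, mirroring the structure already used in Lemma~\ref{lemma:equivalence_both_probs} for the 1D QAP. Recall that $\gm_2$ is a minimization over permutations $\sigma \in \Sn$, while $\gw_2^2$ is a minimization over the polytope $\couplingset(a,b)$, which under the hypotheses $m=n$, $a_i=b_i=\tfrac1n$ is exactly $\tfrac1n$ times the Birkhoff polytope $DS$ of doubly stochastic matrices. Since the vertices of $DS$ are precisely the permutation matrices, we trivially have $\gw_2^2(c_X,c_Y,\mu,\nu) \le \gm_2(c_X,c_Y,\mu,\nu)$: any permutation $\sigma$ gives an admissible coupling $\pi_\sigma$ with $J(c_X,c_Y,\pi_\sigma) = \tfrac{1}{n^2}\sum_{i,j}|c_X(\xbf_i,\xbf_j)-c_Y(\ybf_{\sigma(i)},\ybf_{\sigma(j)})|^2$. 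The real work is the reverse inequality, i.e.\ showing that an optimal coupling can be taken to be supported on a permutation.

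For the reverse inequality I would argue that the objective $\pi \mapsto J(c_X,c_Y,\pi)$, when $c_X,c_Y$ are squared Euclidean distances, is (up to an affine term in $\pi$ that is constant on $\couplingset(a,b)$) the maximization of a convex quadratic form in $\pi$. Concretely, expanding $|c_X(\xbf_i,\xbf_k)-c_Y(\ybf_j,\ybf_l)|^2$ and using the marginal constraints, one reduces $J$ to a term linear in $\pi$ plus a quadratic term of the form $-2\langle \pi, M_X \pi M_Y\rangle$ with $M_X,M_Y$ the squared-distance matrices; the key algebraic fact (this is exactly the content invoked via \cite{NIPS2018_7323} in the sketch, using that $d^2$ restricted to any finite point set is a conditionally negative definite kernel, so $-M_X$ and $-M_Y$ are conditionally positive semidefinite) is that minimizing $J$ over $\couplingset(a,b)$ is equivalent to maximizing a \emph{convex} function over the polytope $\tfrac1n DS$. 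A convex function on a polytope attains its maximum at a vertex (\cite{rockafellar-1970a}), and the vertices of $\tfrac1n DS$ are the scaled permutation matrices $\tfrac1n P_\sigma$; hence there is an optimal $\pi^*$ of the form $\tfrac1n P_{\sigma^*}$, which yields $\gw_2^2 \ge \gm_2$ and closes the argument.

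More precisely, the steps I would carry out are: (i) rewrite $\couplingset(a,b)$ under the uniform-weight hypothesis as $\tfrac1n DS$ and note $\gw_2^2 \le \gm_2$ via permutation couplings; (ii) expand $J(c_X,c_Y,\pi)$ using $c_X(\xbf_i,\xbf_k)=\|\xbf_i\|^2+\|\xbf_k\|^2-2\langle\xbf_i,\xbf_k\rangle$ and likewise for $c_Y$, then use $\sum_j \pi_{ij}=a_i$, $\sum_i\pi_{ij}=b_j$ to collect all terms that are affine in $\pi$ into a constant plus a linear functional, leaving a genuinely quadratic core $+\,2\langle \pi, G_X\pi G_Y\rangle$ with $G_X=\xbf\xbf^\top$-type Gram contributions — i.e.\ a positive semidefinite Hessian of Kronecker form $G_X \kron G_Y$ (this is the same mechanism as the $\xbf\xbf^T\kron\ybf\ybf^T$ Hessian appearing in Lemma~\ref{lemma:equivalence_both_probs}, now generalized because the cross terms split into pieces that are each rank-structured under the squared-distance assumption); (iii) invoke convexity of $\pi\mapsto -J$ on the polytope and the vertex-attainment theorem for convex maximization to get an optimal permutation-supported coupling; (iv) conclude equality of $\gw_2$ and $\gm_2$.

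The main obstacle is step (ii)–(iii): making rigorous that, after absorbing the affine part, the residual quadratic form really is convex (equivalently, that its Hessian is positive semidefinite) for squared Euclidean costs. The clean way is to cite the conditional negative definiteness of $d^2$ and the graph-matching reduction of \cite{NIPS2018_7323}, exactly as the proof sketch in the excerpt does, rather than re-deriving the Hessian by hand — though one must be careful that the ``affine term is constant on the polytope'' reduction uses both marginal constraints and that the relevant convexity is over the affine hull of $\couplingset(a,b)$, not all of $\R^{n\times n}$. Once convexity is in hand, the vertex argument is immediate and identical in spirit to the proof of Lemma~\ref{lemma:equivalence_both_probs}.
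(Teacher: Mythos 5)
Your proposal is correct and follows essentially the same route as the paper: reduce $\gm$ to the Koopmans--Beckmann/graph-matching form, identify $\gw$ with its relaxation over (a rescaling of) the doubly stochastic polytope, and obtain tightness of the relaxation from the fact that, for squared Euclidean costs, the objective restricted to the coupling polytope is affine minus a convex quadratic, so an optimum sits at a Birkhoff vertex --- which is exactly the mechanism of \cite[Theorem 1]{NIPS2018_7323} (conditional negative definiteness of order 1 of squared-distance matrices) invoked in the paper and mirrored in its Lemma~\ref{lemma:equivalence_both_probs}. Your Gram-matrix decomposition even makes the residual quadratic globally convex, so the affine-hull caveat you raise is not actually needed in that variant.
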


\begin{proof}
The proof is essentially based on theoretical results from \cite{NIPS2018_7323} and on Theorem \ref{qap}. In \cite{NIPS2018_7323} authors consider the minimizing energy problem $\underset{\X \in \Pi_n}{\min} -\tr(\mathbf{B}\X^{T}\mathbf{A}\X)$ where $\Pi_n$ the set of permutation matrices. In fact, the $GM$ problem defined in this chapter is equivalent to $\underset{\X \in \Pi_n}{\min} -\tr(\mathbf{B}\X^{T}\mathbf{A}\X)$ by considering $\mathbf{A}=(\|\xbf_{i}-\xbf_{j}\|_{2,p}^{2})_{i,j}$ and $\mathbf{B}=(\|\ybf_{i}-\ybf_{j}\|_{2,q}^{2})_{i,j}$.

To tackle this problem authors propose to minimize $-\tr(\mathbf{B}\X^{T}\mathbf{A}\X)$ over the set of doubly stochastic matrices (which is the convex-hull of $\Pi_n$): 
\begin{equation*}
DS=\{\X\in \mathbb{R}^{n\times n} \ \text{s.t.} \ \X \one_n=\X^{T}\one_n=\one_n \ , \X \geq 0\}
\end{equation*}
Minimizing $-\tr(\mathbf{B}\X^{T}\mathbf{A}\X)$ over $DS$ is equivalent to solving the $GW$ distance when $a_{i}=b_{j}=\frac{1}{n}$. The paper proves that when both $\mathbf{A}$ and $\mathbf{B}$ are conditionally positive (or negative) definite of order 1 then the relaxation leads to the same optimum so that the minimum over $DS$ is the same as the minimum over $\Pi_n$ \cite[Theorem 1]{NIPS2018_7323}. Yet $\mathbf{A}$ and $\mathbf{B}$ defined previously satisfy this property (see examples under Definition 2 in \cite{NIPS2018_7323}) and so $GW$ and $GM$ coincide.

Moreover when $p=q=1$ and when the sample are sorted we can apply Theorem \ref{qap} to prove that an optimal permutation of the $GM$ problem is found whether at the identity or the anti-identity permutation which concludes the proof.

\end{proof}

\section{Properties of \sgw}

$\|.\|$ is a norm on $\mathbb{R}^{p}$. To state the properties of $\sgw$, we will need the Arzela-Ascoli Theorem. Let $(X,d)$ be a compact metric space and $C(X,\mathbb{R}^{p})$ the space of all continuous functions from $X$ to $\mathbb{R}^{p}$. We recall:

\begin{itemize}
\item A family $\mathcal{F} \subset C(X,\mathbb{R}^{p})$  is \emph{bounded} means that there exists a positive constant $M<\infty$ such that $\|f(x)\| \leq M$ for all $x \in X$ and  $f \in \mathcal{F}$
\item A family $\mathcal{F} \subset C(X,\mathbb{R}^{p})$  is \emph{equicontinuous} means that for every  $\epsilon>0$ there exists $\delta>0$ (which depends only on  $\epsilon$) such that for  $x, y \in X$: $$d(x, y)<\delta \Rightarrow\|f(x)-f(y)\|<\epsilon \quad \forall f \in \mathcal{F}.$$ 
\end{itemize}

The Arzela-Ascoli Theorem states that if $(f_{n})_{n\in \mathbb{N}}$ is a sequence in $C(X,\mathbb{R}^{p})$ that is bounded and equicontinuous then it has a uniformly convergent subsequence.

We recall the theorem (measures $\mu$ and $\nu$ are defined discrete measures with the same number of atoms):

\begin{theorem}{Properties of $\sgw$}
\label{propertiessgw}
\begin{itemize}
\item For all $\D$, $\sgw_{\D}$ and $\risgw$ are translation invariant. $\risgw$ is also rotational invariant when $p=q$, more precisely if $\Qbf \in \mathcal{O}(p)$ is an orthogonal matrix, $\risgw(\Qbf\#\mu,\nu)=\risgw(\mu,\nu)$ (same for any $\Qbf' \in \mathcal{O}(p)$ applied on $\nu$).
\item $\sgw$ and $\risgw$ are pseudo-distances on $\Pm(\R^{p})$, \textit{i.e.} they are symmetric, satisfy the triangle inequality and $\sgw(\mu,\mu)=\risgw(\mu,\mu)=0$ .
\item Let $\mu,\nu \in \Pm(\R^{p})\times \Pm(\R^{p})$ be probability distributions with \emph{compact supports}. If $\sgw(\mu,\nu)=0$ then $\mu$ and $\nu$ are isomorphic for the distance induced by the $\ell_{1}$ norm on $\R^{p}$, \ie\ $d(\xbf,\xbf')=\sum_{i=1}^{p} |x_{i}-x_{i}'|$ for $(\xbf,\xbf') \in \R^{p} \times \R^{p}$. In particular this implies:
\begin{equation}
\sgw(\mu,\nu)=0 \implies \gw_{2}(d,\mu,\nu)=0
\end{equation} 
\end{itemize}
\end{theorem}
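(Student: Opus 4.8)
\textbf{Proof plan for Theorem~\ref{propertiessgw}.}

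The plan is to handle the three bullets in order, since each requires a different kind of argument. For the \emph{translation invariance}, I would note that for any $\theta \in \mathbf{S}^{q-1}$, translating $\mu$ by a vector $t$ translates $P_\theta \# (\D \# \mu)$ by the scalar $\langle \D t, \theta\rangle$, and translating $\nu$ by $t'$ translates $P_\theta\#\nu$ by $\langle t', \theta\rangle$; since $\gw_2^2$ with a distance cost is translation-invariant on each marginal (clear from the formula \eqref{gw}, which only depends on pairwise differences $c_X(x_i,x_k)$), each integrand is unchanged, hence so is the integral, and taking the infimum over $\D \in \Stief$ preserves this for $\risgw$. For the \emph{rotational invariance} of $\risgw$ when $p=q$: if $\Qbf \in \mathcal{O}(p)$ then the map $\D \mapsto \D\Qbf^T$ is a bijection of the Stiefel manifold $\mathbb{V}_p(\R^q)$ onto itself (when $p=q$, of $\mathcal{O}(p)$ onto itself), and $\sgw_{\D}(\Qbf\#\mu,\nu) = \sgw_{\D\Qbf^T}(\mu,\nu)$ because $\D\#(\Qbf\#\mu) = (\D\Qbf^T)\#\mu$; taking the min over $\D$ on both sides gives $\risgw(\Qbf\#\mu,\nu) = \risgw(\mu,\nu)$. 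The case of $\Qbf'$ acting on $\nu$ needs a small extra remark but is analogous (one can compose a rotation of $\R^q$ with $\D$).

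For the \emph{pseudo-distance} bullet, symmetry of $\sgw$ follows from symmetry of $\gw_2$ (stated in Section~\ref{sec:back}, and valid for the squared-Euclidean cost by \cite{Chowdhury_memoli}) applied inside the integral, and from symmetry of $\gw_2(d,\cdot,\cdot)$ passing to the min for $\risgw$. The triangle inequality is the place where I would invoke the fact that $\gw_2(d^2,\cdot,\cdot)$ already satisfies a triangle inequality on mm-/gauged spaces: for fixed $\theta$, $\gw_2(d^2,P_\theta\#\mu_\D,P_\theta\#\rho) \le \gw_2(d^2,P_\theta\#\mu_\D,P_\theta\#\nu) + \gw_2(d^2,P_\theta\#\nu,P_\theta\#\rho)$, then square, integrate over $\mathbf{S}^{q-1}$, and use Minkowski's inequality in $L^2(\lambda_{q-1})$ to split the right-hand side — this yields the triangle inequality for $\sqrt{\sgw}$ (one must be slightly careful whether the theorem intends $\sgw$ or its square root to be the pseudo-distance; I would state it for $\sqrt{\sgw}$, matching the $SW$ convention). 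For $\risgw$ one additionally minimizes over $\D$ on the left and bounds by a common $\D$ on the right. Finally $\sgw(\mu,\mu)=0$ is immediate since $\gw_2^2(d^2,P_\theta\#\mu_\D,P_\theta\#\mu_\D)=0$, and $\risgw(\mu,\mu)\le\sgw_{I_p}(\mu,\mu)=0$ (or with $\D_{pad}$).

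The third bullet is the main obstacle. The hypothesis $\sgw(\mu,\nu)=0$ means $\gw_2^2(d^2,P_\theta\#\mu,P_\theta\#\nu)=0$ for $\lambda_{q-1}$-almost every $\theta$, and by continuity of $\theta \mapsto \gw_2^2(d^2,P_\theta\#\mu,P_\theta\#\nu)$ (which I would justify via weak continuity of pushforwards and stability of $GW$) in fact for \emph{every} $\theta$. By Lemma~\ref{gw_with_squared_distances} (the 1D gauged-space version), for each $\theta$ there is a measure-preserving map $f_\theta$ with $|P_\theta x - P_\theta x'| = |f_\theta(P_\theta x) - f_\theta(P_\theta x')|$ on the support; a 1D isometry is $t \mapsto \pm t + c_\theta$, so $P_\theta\#\nu$ is $P_\theta\#\mu$ reflected and/or translated. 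The key reduction is: the $\ell_1$-distance between $x,x'\in\R^p$ can be recovered by integrating $|P_\theta x - P_\theta x'|$ over a suitable finite set of axis-aligned directions $\theta = e_k$ (indeed $d(x,x') = \sum_k |x_k - x_k'| = \sum_k |P_{e_k}x - P_{e_k}x'|$ directly), so I would use the per-coordinate isometry structure together with a \emph{Cramér--Wold / characteristic-function} argument (this is why the supplementary notations introduce $\mathcal{F}_\mu$) to upgrade the family $\{f_\theta\}$ to a single global map $f:\supp(\mu)\to\supp(\nu)$; compactness of the supports and Arzelà--Ascoli (also introduced in the notations) are used to extract a limiting continuous map and to control the reflection/sign ambiguities consistently across directions. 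Once such a global $f$ with $d(x,x')=d(f(x),f(x'))$ and $f\#\mu=\nu$ is produced, the measures are isomorphic for the $\ell_1$ distance by definition, and the implication $\gw_2(d,\mu,\nu)=0$ follows from the ``if'' direction of Lemma~\ref{gw_with_squared_distances} applied with cost $d$ (an $\ell_1$ isometry makes the coupling $(I_d\times f)\#\mu$ have zero cost). I expect assembling the global map from the slice-wise isometries — reconciling the sign ambiguities and translations across all $\theta$ via the Fourier/Cramér--Wold machinery — to be the technically delicate core of the proof.
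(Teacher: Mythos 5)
Your handling of the first two bullets is correct and essentially the paper's own argument, only more explicit: the paper disposes of translation invariance ("clear since the costs are invariant by translation"), of the pseudo-distance properties ("straightforward thanks to the properties of $\gw$"), and proves rotational invariance exactly by your reparametrization trick, replacing $\D$ by $\D\Qbf$ in the minimum over $\mathcal{O}(p)$. Your extra care with Minkowski's inequality and with whether $\sgw$ or $\sqrt{\sgw}$ carries the triangle inequality is sensible (the paper glosses over it). One small slip: $\D\#(\Qbf\#\mu)=(\D\Qbf)\#\mu$, not $(\D\Qbf^{T})\#\mu$, but this does not affect the bijection argument.

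The genuine gap is in the third bullet, which is the substance of the theorem, and it sits exactly where you yourself flag "the technically delicate core". Your reduction to the coordinate directions $\ebf_k$ only delivers the isometry half: if $T_k$ denotes the 1D isometry for the $k$-th axis, the coordinatewise map $f(\xbf)=(T_1(x_1),\dots,T_p(x_p))$ preserves the $\ell_1$ distance, but nothing in your plan shows $f\#\mu=\nu$ --- agreement of the $p$ coordinate marginals does not determine the joint law, which is precisely why Cram\'er--Wold-type information over \emph{all} directions must be woven into the construction of $f$ itself, not invoked afterwards. In your variant (upgrade "a.e.\ $\theta$" to "every $\theta$" by continuity, then use an exact orthonormal basis) there is no sequence of maps to which Arzel\`a--Ascoli could be applied and no identity relating $\mathcal{F}_{f\#\mu}$ to $\mathcal{F}_{\nu}$, so the tools you name have nothing to act on. The paper's proof supplies exactly this missing construction: since the slice property only holds for almost all $\theta$, it selects for each $n$ an \emph{almost orthogonal} basis (pairwise inner products smaller than $1/n$) inside the full-measure set of good directions --- explicitly remarking that exact orthogonal bases form a null set and hence cannot be chosen this way --- defines $f_n$ coordinatewise in that basis, proves the quantitative estimates $\big|\,\|f_n(\xbf)-f_n(\xbf')\|_1-\|\xbf-\xbf'\|_1\big|=o(1/n)$ and $|\mathcal{F}_{f_n\#\mu}(\sbf)-\mathcal{F}_{\nu}(\sbf)|=o(1/n)$, and only then uses uniform boundedness and equicontinuity (from compactness of the supports) with Arzel\`a--Ascoli to extract a limit $f$ that is an exact $\ell_1$ isometry and satisfies $f\#\mu=\nu$ by injectivity of the Fourier transform; the final step, concluding $\gw_{2}(d,\mu,\nu)=0$ from the coupling $(Id\times f)\#\mu$, matches yours. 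Your continuity claim for $\theta\mapsto\gw_2^2(d^2,P_\theta\#\mu,P_\theta\#\nu)$ is plausible on compact supports (e.g.\ via the coupling $(P_{\theta'}\times P_\theta)\#\mu$ and the triangle inequality for $\gw$ with squared costs), but it is asserted rather than proved, and even granted it does not close the measure-preservation gap above; as it stands the third bullet is not established.
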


The invariance by translation is clear since the costs are invariant by translation of the support of the measures. The pseudo-distances properties are straightforward thanks to the properties of $\gw$. For the invariance by rotation if  $p=q$ then $\mathbb{V}_{p}(\R^{p})$ is bijective with $\mathcal{O}(p)$ so for $\Qbf \in \mathcal{O}(p)$:

\begin{equation}
\begin{split}
\risgw(\Qbf\#\mu,\nu)&=\underset{\D \in \mathbb{V}_{p}(\R^{p})}{\min}\sgw_{\D}(\Qbf\#\mu,\nu) \\
&=\underset{\D \in \mathcal{O}(p)}{\min}\sgw_{\D}(\Qbf\#\mu,\nu) \\
&= \underset{\D \in \mathcal{O}(p)}{\min} \underset{\thetab \sim \lambda_{q-1}}{\E}[\gw(d^{2},P_{\thetab}\#(\D \Qbf\#\mu),P_{\thetab}\#\nu)] \\
&= \underset{\D' \in \mathcal{O}(p)}{\min} \underset{\theta \sim \lambda_{q-1}}{\E}[\gw(d^{2},P_{\thetab}\#\D'\#\mu,P_{\thetab}\#\nu)] \\
&= \risgw(\mu,\nu)
\end{split}
\end{equation}

On the other side for $\nu$ a change of formula on theta gives the result.

For the case $SGW=0 \implies GW=0$ it will be a consequence of the following theorem:

\begin{theorem}[]
\label{cramer_gene}
Let $\mu,\nu \in \Pm(\R^{p})\times \Pm(\R^{p})$ be probability distributions such that $\mu,\nu$ have compact supports. If for almost all $\thetab \in \Sp^{p-1}$, $P_{\thetab}\#\mu$ and $P_{\thetab}\#\nu$ are isomoprhic then $\mu$ and $\nu$ are isomorphic. In other words if for almost all $\thetab \in \Sp^{p-1}$ we have:
\begin{equation}
\begin{split}
&\exists T_{\thetab} : \supp(P_{\thetab}\#\mu) \subset \R \mapsto \supp(P_{\thetab}\#\nu) \subset \R,  \ \text{surjective} \ \text{s.t.} \ T_{\thetab} \# (P_{\thetab}\#\mu)= P_{\thetab}\#\nu \\
&\forall x,x' \in \supp(P_{\thetab}\#\mu), |T_{\thetab}(x)-T_{\thetab}(x')|=|x-x'|
\end{split}
\end{equation}
Then there exists a measure preserving isometry $f$ between $\supp(\mu)$ and $\supp(\nu)$. More precisely we have $f\#\mu=\nu$ and: 
\begin{equation}
\forall \xbf,\xbf' \in \text{supp}(\mu), \|f(\xbf)-f(\xbf')\|_1=\|\xbf-\xbf'\|_1
\end{equation}
\end{theorem}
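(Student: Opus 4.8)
The plan is to run a Cramér--Wold argument on top of the rigid structure of isometries of the real line. First I would invoke the elementary fact that any distance-preserving surjection between two subsets of $\R$ is the restriction of a global isometry $s\mapsto\epsilon s+b$ with $\epsilon\in\{-1,1\}$, $b\in\R$ (when a support reduces to a single point the claim is vacuous, and one may take $\epsilon=1$). Applied to each $T_{\thetab}$ this produces a full-measure set $\Theta\subset\Sp^{p-1}$ and, for $\thetab\in\Theta$, scalars $\epsilon_{\thetab}\in\{-1,1\}$ and $b_{\thetab}\in\R$ with $P_{\thetab}\#\nu=(\epsilon_{\thetab}\,\mathrm{id}+b_{\thetab})\#(P_{\thetab}\#\mu)$. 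A key design choice is that I do \emph{not} want to handle $\thetab\mapsto(\epsilon_{\thetab},b_{\thetab})$ as a measurable object; instead I will convert everything into polynomial identities in $\thetab$ and use only that they hold on a full-measure set.

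Next I would pass to moments. Writing $\bar m_k(\rho)$ for the $k$-th moment of a compactly supported probability measure $\rho$ on $\R$ about its mean, a change of variables gives $\bar m_k(P_{\thetab}\#\mu)=\int\langle\xbf-\bar\mu,\thetab\rangle^k\,d\mu(\xbf)$, which is the restriction to $\Sp^{p-1}$ of a homogeneous polynomial $p_k$ of degree $k$ (and likewise $q_k$ for $\nu$). Since centered moments are unaffected by translation, Step~1 yields $q_k(\thetab)=\epsilon_{\thetab}^{\,k}\,p_k(\thetab)$ for all $k\ge1$ and $\thetab\in\Theta$. For even $k$ this forces the polynomial identity $q_k=p_k$, using that a homogeneous polynomial vanishing on a full-measure subset of the sphere is identically zero (the zero set of a nonzero polynomial is $\lambda_{p-1}$-null on $\Sp^{p-1}$). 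For odd $k$ it gives $q_k^2=p_k^2$, hence $q_k=\eta_k p_k$ with $\eta_k\in\{-1,1\}$ because $\R[\theta_1,\dots,\theta_p]$ is an integral domain; comparing on $\Theta\cap\{p_k\neq0\}\cap\{p_{k'}\neq0\}$ shows $\eta_k$ is independent of $k$. So there is a single global sign $\eta\in\{-1,1\}$ with $q_k=\eta^k p_k$ for every $k\ge1$ and every $\thetab$; equivalently, with $\mu':=(\eta\,\mathrm{id})\#\mu$ (that is, $\mu$ itself or its central reflection), the projections $P_{\thetab}\#\nu$ and $P_{\thetab}\#\mu'$ have equal moments of every order about their respective means, for \emph{all} $\thetab\in\Sp^{p-1}$. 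This sign bookkeeping is the part I expect to be the main obstacle; the rest is routine.

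Finally I would close with Cramér--Wold. For fixed $\thetab$ the measures $P_{\thetab}\#\nu$ and $P_{\thetab}\#\mu'$ are compactly supported on $\R$, hence determined by their moments (Weierstrass), and equality of all centered moments forces $P_{\thetab}\#\nu=(\mathrm{id}+\langle\bar\nu-\bar{\mu'},\thetab\rangle)\#(P_{\thetab}\#\mu')$, a pure translation by the shift predicted by the means. As this holds for all $\thetab$, the Cramér--Wold theorem~\cite{cramer} upgrades it to $\nu=f\#\mu$ with $f(\xbf)=\eta\,\xbf+(\bar\nu-\eta\,\bar\mu)$. Such an affine map preserves every norm; in particular $\|f(\xbf)-f(\xbf')\|_1=\|\xbf-\xbf'\|_1$, so restricting $f$ to $\supp(\mu)$ gives the required measure-preserving $\ell_1$-isometry, and a fortiori $\sgw(\mu,\nu)=0\Rightarrow\gw_2(d,\mu,\nu)=0$.
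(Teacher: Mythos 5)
Your proof is correct, but it follows a genuinely different route from the paper. The paper never identifies the one-dimensional isometries explicitly: it picks, for each $n$, an ``almost orthogonal'' basis of good directions (possible because the good set has full measure and the almost-orthogonal bases have positive measure), builds from the corresponding maps $T_{\ebf_k(n)}$ a sequence $f_n:\supp(\mu)\to\R^p$ that preserves $\ell_1$-distances and matches the Fourier transform of $\nu$ only up to $o(1/n)$ errors, and then extracts the limiting isometry by Arzel\`a--Ascoli plus injectivity of the Fourier transform. You instead exploit the rigidity of isometries of $\R$ ($T_{\thetab}$ is the restriction of $s\mapsto\epsilon_{\thetab}s+b_{\thetab}$), convert the hypothesis into identities between the homogeneous polynomials $\thetab\mapsto\int\langle\xbf-\bar\mu,\thetab\rangle^k d\mu$ and their analogues for $\nu$ on a full-measure subset of $\Sp^{p-1}$, resolve the sign ambiguity through the integral-domain/zero-set argument (which does work: on $\Theta\cap\{p_k\neq0\}\cap\{p_{k'}\neq0\}$, which is nonempty since zero sets of nonzero polynomials are $\lambda_{p-1}$-null, both $\eta_k$ and $\eta_{k'}$ equal $\epsilon_{\thetab}$; and if some odd $p_k\equiv 0$ then $q_k\equiv 0$ too, so the sign is irrelevant there), and finish with moment-determinacy of compactly supported measures and Cram\'er--Wold. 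Your deliberate avoidance of any measurable selection of $(\epsilon_{\thetab},b_{\thetab})$ is exactly the right move and keeps the argument clean. What your approach buys is substantial: it is more elementary (no equicontinuity/compactness machinery, no Fourier estimates), and it yields a strictly stronger conclusion, namely that $f$ can be taken to be the global affine map $\xbf\mapsto\eta\xbf+(\bar\nu-\eta\bar\mu)$ with $\eta\in\{-1,1\}$, which is an isometry for \emph{every} norm, not just $\ell_1$, and is automatically surjective from $\supp(\mu)$ onto $\supp(\nu)$ since it is a homeomorphism pushing $\mu$ to $\nu$. What the paper's construction buys, by contrast, is a template that does not hinge on the very rigid structure of $1$D isometries being globally affine, so it is arguably closer in spirit to arguments that could generalize to weaker per-direction hypotheses; both proofs use compactness of the supports, yours for moment determinacy, the paper's for uniform boundedness and Arzel\`a--Ascoli. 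Two small points you should make explicit in a final write-up: the degenerate cases (projections supported on a single point, all odd centered polynomials vanishing, and $p=1$) are harmless but deserve the one-line treatment you sketch, and the passage from equality of all centered moments to ``translate by the difference of the means'' should cite moment determinacy on compact sets exactly as you indicate.
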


To prove this theorem we will exhibit the isometry. This result can be put in light of Cramer–Wold theorem \cite{cramer} which states that a probability measure is uniquely determined by the totality of its one-dimensional projections. Equivalently, if we consider two probability measures so that the one-dimensional measures resulting from the projections over all the hypersphere are equal then the measures are equal. The equality relation is replaced in our theorem by the isomoprhism relation.

The proof is divided into four parts. In the first one, we construct an "almost orthogonal" basis on which measures are isomorphic. Building upon this result we define a sequence of functions from $\text{supp}(\mu)$ to $\text{supp}(\nu)$ and show that it has a convergent subsequence. We conclude by proving that the limit of the subsequence is actually a good candidate for being the isometry we are looking for. In the following $\|.\|_{1}$ denotes the $\ell_{1}$ norm, $\|.\|_{2}$ denotes the $\ell_{2}$ norm and $p\geq 2$. We recall that $\mathcal{F}_{\mu}$ is the Fourier transform of $\mu$. 

\smallskip
We consider the following $\mathcal{Q}_{\thetab}$ property for $\thetab \in \Sp^{p-1}$:
\begin{equation}\tag{$\mathcal{Q}_{\thetab}$}
\label{conserv}
\begin{split}
&\exists T_{\thetab} : \supp(P_{\thetab}\#\mu) \subset \R \mapsto \supp(P_{\thetab}\#\nu) \subset \R,  \ \text{surjective} \ \text{s.t.} \ T_{\thetab} \# (P_{\thetab}\#\mu)= P_{\thetab}\#\nu \\
&\forall x,x' \in \text{supp}(P_{\thetab}\#\mu), |T_{\thetab}(x)-T_{\thetab}(x')|=|x-x'|
\end{split}
\end{equation}
Informally if we have the $\mathcal{Q}_{\thetab}$ property for $\thetab \in \Sp^{p-1}$ it implies that $\mu$ and $\nu$ are isomorphic on the 1D line given by the projection \textit{w.r.t.} $\thetab$. We have the following lemma:
\begin{lemma}
\label{allmost_orthogonal_basis}
Let $\mu,\nu \in \Pm(\R^{p})\times \Pm(\R^{p})$ and suppose that $\mathcal{Q}_{\thetab}$ holds for almost all $\thetab \in \Sp^{p-1}$. Let $n>p-1$. There exists a basis $(\ebf_{1}(n),...,\ebf_{p}(n))$ of $\R^{p}$ part of the following spaces:

\begin{equation}
\mathcal{B}^{n}_{p}\stackrel{def}{=}\{(\thetab_{1},...,\thetab_{p}) \in (\Sp^{p-1})^{p} \ \text{s.t.} \ |\langle \thetab_{i},\thetab_{j} \rangle | < \frac{1}{n}\}
\end{equation}
and 
\begin{equation}
Q\stackrel{def}{=}\{(\thetab_{1},...,\thetab_{p}) \in (\Sp^{p-1})^{p} \ \text{s.t.} \ \forall i \in \{1,...,p\}, \mathcal{Q}_{\thetab_{i}} \}
\end{equation}

\end{lemma}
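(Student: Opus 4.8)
The plan is to build the basis by perturbing the canonical orthonormal frame of $\R^{p}$ by an amount that shrinks like $1/n$, exploiting that the set of directions on which $\mathcal{Q}_{\thetab}$ fails is $\lambda_{p-1}$-null on $\Sp^{p-1}$, so that a small spherical cap around each canonical vector still contains a ``good'' direction.

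Concretely, I would fix the canonical orthonormal basis $(\ubf_{1},\dots,\ubf_{p})$ of $\R^{p}$ and set $N=\{\thetab\in\Sp^{p-1}:\mathcal{Q}_{\thetab}\ \text{fails}\}$, so that $\lambda_{p-1}(N)=0$ by hypothesis. I would pick $\delta_{n}\in(0,1)$ with $3\delta_{n}<\tfrac{1}{n}$, and for each $i\in\{1,\dots,p\}$ consider the open spherical cap $C_{i}=\{\thetab\in\Sp^{p-1}:\|\thetab-\ubf_{i}\|_{2}<\delta_{n}\}$, which is a nonempty open subset of $\Sp^{p-1}$ and therefore has positive $\lambda_{p-1}$-measure. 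Since $\lambda_{p-1}(C_{i})>0=\lambda_{p-1}(N)$, the set $C_{i}\setminus N$ is nonempty, and I would choose $\ebf_{i}(n)\in C_{i}\setminus N$. Then $\mathcal{Q}_{\ebf_{i}(n)}$ holds for every $i$, so $(\ebf_{1}(n),\dots,\ebf_{p}(n))\in Q$ by definition of $Q$.

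It then remains to check the two geometric requirements. For $i\neq j$, writing $\ebf_{i}(n)=\ubf_{i}+\big(\ebf_{i}(n)-\ubf_{i}\big)$, expanding the inner product, and using $\langle\ubf_{i},\ubf_{j}\rangle=0$ together with Cauchy--Schwarz and $\|\ebf_{i}(n)-\ubf_{i}\|_{2}<\delta_{n}$, one obtains $|\langle\ebf_{i}(n),\ebf_{j}(n)\rangle|\le 2\delta_{n}+\delta_{n}^{2}<3\delta_{n}<\tfrac{1}{n}$, hence $(\ebf_{1}(n),\dots,\ebf_{p}(n))\in\mathcal{B}^{n}_{p}$. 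For the basis property, I would examine the Gram matrix $G=\big(\langle\ebf_{i}(n),\ebf_{j}(n)\rangle\big)_{1\le i,j\le p}$: it is symmetric with unit diagonal (each $\ebf_{i}(n)$ is a unit vector) and off-diagonal entries of modulus $<\tfrac{1}{n}$, so by the Gershgorin circle theorem every eigenvalue of $G$ lies within $(p-1)/n$ of $1$; since $n>p-1$, all eigenvalues are positive, $G$ is positive definite, and the $\ebf_{i}(n)$ are linearly independent, hence a basis of $\R^{p}$.

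I do not anticipate a real obstacle: the only delicate point is the competition between the two constraints on the perturbation radius $\delta_{n}$ — it must be $O(1/n)$ so the frame stays in $\mathcal{B}^{n}_{p}$, yet the resulting vectors must remain linearly independent — and the hypothesis $n>p-1$ is exactly what makes the Gershgorin estimate on $G$ conclusive, so both can be satisfied at once. One should also invoke the elementary fact that every nonempty open subset of $\Sp^{p-1}$ has positive $\lambda_{p-1}$-measure, which is what permits dodging the null set $N$.
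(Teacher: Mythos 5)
Your proposal is correct. It differs from the paper only in the existence step: the paper works directly on the product space $(\Sp^{p-1})^{p}$, observing that $\mathcal{B}^{n}_{p}$ is open (hence of positive $\lambda^{\otimes p}_{p-1}$-measure) while $Q$ has full measure, so $\mathcal{B}^{n}_{p}\cap Q$ is nonempty; you instead fix the canonical orthonormal basis $(\ubf_{1},\dots,\ubf_{p})$ and choose each $\ebf_{i}(n)$ inside a cap of radius $\delta_{n}$ (with $3\delta_{n}<\tfrac1n$) around $\ubf_{i}$ while dodging the $\lambda_{p-1}$-null set where $\mathcal{Q}_{\thetab}$ fails, then check $|\langle\ebf_{i}(n),\ebf_{j}(n)\rangle|<\tfrac1n$ by hand. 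Your cap-by-cap construction is a bit more explicit and in particular exhibits the nonemptiness of $\mathcal{B}^{n}_{p}$, which the paper's ``open set, hence positive measure'' step uses tacitly; the paper's product-measure argument is shorter and needs no perturbation estimate. Both proofs end with the same observation (with the off-diagonal condition in $\mathcal{B}^{n}_{p}$ understood for $i\neq j$): the Gram matrix has unit diagonal and off-diagonal entries of modulus $<\tfrac1n$, so for $n>p-1$ it is invertible — strict diagonal dominance in the paper, the Gershgorin bound in your write-up, which amount to the same estimate — hence $(\ebf_{1}(n),\dots,\ebf_{p}(n))$ is a basis.
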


\begin{proof}
We want to construct a basis $(\ebf_{1},...,\ebf_{p})$ as orthogonal as possible such that for all $i$ we have $\mathcal{Q}_{\ebf_{i}}$.

We note $\lambda^{\otimes p}_{p-1}$ the product measure $\lambda_{p-1}\otimes ... \otimes \lambda_{p-1}$ where $\lambda_{p-1}$ is the uniform measure on the sphere. $\mathcal{B}^{n}_{p}$ is an open set as inverse image by a continuous function of an open set. Then $\lambda^{\otimes p}_{p-1}(\mathcal{B}^{n}_{p})>0$. Moreover, since for almost all 
$\thetab \in \Sp^{p-1}$ we have $\mathcal{Q}_{\thetab}$ then $\lambda^{\otimes p}_{p-1}(Q)>0$ and so $\lambda^{\otimes p}_{p-1}(\mathcal{B}^{n}_{p} \cap Q) >0$. 

In this way we can consider $(\ebf_{1}(n),...,\ebf_{p}(n)) \in \mathcal{B}^{n}_{p} \cap Q$. If $n>p-1$ the Gram matrix of $(\ebf_{1}(n),...,\ebf_{p}(n))$ is strictly diagonal dominant, thus invertible, such that $(\ebf_{1}(n),...,\ebf_{p}(n))$ is a basis. Note that we can not consider directly an orthogonal basis since the set of all orthogonal basis has measure zero. 
\end{proof}

We now express all the vectors and inner products in this new almost orthogonal basis as expressed in the following lemma:

\begin{lemma}
\label{write_in_new}
Let $n>p-1$ and a basis $(\ebf_{1}(n),...,\ebf_{p}(n))$ as defined in Lemma \ref{allmost_orthogonal_basis}. Then all $\xbf \in \R^{p}$ can be written as:
\begin{equation}
\xbf= \sum_{i=1}^{p} [\langle \xbf,\ebf_{i}(n) \rangle+ R(\xbf,\ebf_{i}(n))] \ebf_{i}(n)
\end{equation}
where $|R(\xbf,\ebf_{i}(n))|=o(\frac{1}{n})$. Moreover for all $(\xbf,\ybf) \in \R^{p} \times \R^{p}$:
\begin{equation}
\langle \xbf,\ybf\rangle=\sum_{i=1}^{p} \langle \xbf,\ebf_{i}(n)\rangle \langle \ybf,\ebf_{i}(n)\rangle+\tilde{R}(\xbf,\ybf)
\end{equation} 
where $|\tilde{R}(\xbf,\ybf)|=o(\frac{1}{n})$.
\end{lemma}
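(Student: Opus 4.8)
The plan is to reduce both identities to a Neumann-series inversion of the Gram matrix of the almost-orthogonal basis. First I would set $G_{n}\in\R^{p\times p}$ to be the Gram matrix, $(G_{n})_{ij}=\langle\ebf_{i}(n),\ebf_{j}(n)\rangle$. Because each $\ebf_{i}(n)$ lies on $\Sp^{p-1}$ and the tuple lies in $\mathcal{B}^{n}_{p}$, we can write $G_{n}=I_{p}+E_{n}$ where $E_{n}$ has zero diagonal and off-diagonal entries of modulus $<1/n$, hence (by Gershgorin, using symmetry) operator norm $\|E_{n}\|_{\mathrm{op}}\leq(p-1)/n$. For $n>p-1$ this is $<1$, so $G_{n}$ is invertible, $G_{n}^{-1}=\sum_{k\geq 0}(-E_{n})^{k}$, and $\|G_{n}^{-1}-I_{p}\|_{\mathrm{op}}\leq \|E_{n}\|_{\mathrm{op}}/(1-\|E_{n}\|_{\mathrm{op}})\to 0$. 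This last quantity is the common order of every remainder term below; to get the literal $o(1/n)$ it is enough to invoke Lemma~\ref{allmost_orthogonal_basis} with $1/n$ replaced by any prescribed sequence $\epsilon_{n}\downarrow 0$ (the set $\mathcal{B}^{n}_{p}$ stays open, so that proof is unchanged).

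Next, for a fixed $\xbf$, I would put $\vbf_{n}(\xbf)=(\langle\xbf,\ebf_{i}(n)\rangle)_{i=1}^{p}$. Since $(\ebf_{1}(n),\dots,\ebf_{p}(n))$ is a basis, $\xbf$ has a unique coordinate vector $(c_{i})_{i}$; testing $\xbf=\sum_{i}c_{i}\ebf_{i}(n)$ against each $\ebf_{j}(n)$ gives $G_{n}(c_{i})_{i}=\vbf_{n}(\xbf)$, i.e. $(c_{i})_{i}=G_{n}^{-1}\vbf_{n}(\xbf)$. Splitting $G_{n}^{-1}\vbf_{n}(\xbf)=\vbf_{n}(\xbf)+(G_{n}^{-1}-I_{p})\vbf_{n}(\xbf)$ then yields the first identity with $R(\xbf,\ebf_{i}(n))=\big((G_{n}^{-1}-I_{p})\vbf_{n}(\xbf)\big)_{i}$. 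A Cauchy--Schwarz bound gives $\|\vbf_{n}(\xbf)\|_{2,p}\leq\sqrt{p}\,\|\xbf\|_{2,p}$ uniformly in $n$, so $|R(\xbf,\ebf_{i}(n))|\leq\sqrt{p}\,\|\xbf\|_{2,p}\,\|G_{n}^{-1}-I_{p}\|_{\mathrm{op}}=o(1/n)$.

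For the inner-product formula I would expand $\xbf$ in the basis and use that $G_{n}^{-1}$ is symmetric: $\langle\xbf,\ybf\rangle=\sum_{i}c_{i}\langle\ebf_{i}(n),\ybf\rangle=\big(G_{n}^{-1}\vbf_{n}(\xbf)\big)^{\top}\vbf_{n}(\ybf)=\vbf_{n}(\xbf)^{\top}G_{n}^{-1}\vbf_{n}(\ybf)$. Peeling off $I_{p}$ gives $\langle\xbf,\ybf\rangle=\sum_{i}\langle\xbf,\ebf_{i}(n)\rangle\langle\ybf,\ebf_{i}(n)\rangle+\tilde{R}(\xbf,\ybf)$ with $\tilde{R}(\xbf,\ybf)=\vbf_{n}(\xbf)^{\top}(G_{n}^{-1}-I_{p})\vbf_{n}(\ybf)$, and Cauchy--Schwarz again bounds $|\tilde{R}(\xbf,\ybf)|\leq p\,\|\xbf\|_{2,p}\|\ybf\|_{2,p}\,\|G_{n}^{-1}-I_{p}\|_{\mathrm{op}}=o(1/n)$.

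The one thing to watch --- rather than a genuine obstacle --- is that the basis varies with $n$, so one must check that the constants hidden in the remainder estimates stay bounded. This is precisely guaranteed by the uniform bounds $\|G_{n}^{-1}\|_{\mathrm{op}}\leq(1-\|E_{n}\|_{\mathrm{op}})^{-1}\leq 2$ for $n$ large and $\|\vbf_{n}(\xbf)\|_{2,p}\leq\sqrt{p}\,\|\xbf\|_{2,p}$ (no $n$-dependence); granting those, the whole lemma collapses to the scalar fact $\|G_{n}^{-1}-I_{p}\|_{\mathrm{op}}\to 0$.
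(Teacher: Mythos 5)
Your argument is correct, and it reaches the two identities by a somewhat different route than the paper. The paper works entrywise: it writes $\xbf=\sum_i x_i\ebf_i(n)$ in the new basis, pairs with each $\ebf_j(n)$ to get $x_j-\langle\xbf,\ebf_j(n)\rangle=\sum_{i\neq j}x_i\langle\ebf_i(n),\ebf_j(n)\rangle$, and bounds this by $\tfrac1n\sum_{i\neq j}|x_i|$, then expands $\langle\xbf,\ybf\rangle$ with an explicit remainder $\tilde R$ built from the same cross terms. You instead package the whole perturbation in the Gram matrix $G_n=I_p+E_n$, invert by Neumann series, and read off $R$ and $\tilde R$ as $(G_n^{-1}-I_p)$ acting on the coordinate vectors. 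The underlying computation is the same (off-diagonal Gram entries of size $<1/n$), but your version buys a cleaner handling of the point that the basis, and hence the coordinates, vary with $n$: the paper's bound involves the new-basis coordinates $x_i$, whose uniform boundedness in $n$ is exactly what your estimate $\|G_n^{-1}\|_{\mathrm{op}}\leq(1-\|E_n\|_{\mathrm{op}})^{-1}$ supplies (and the paper also conflates these $x_i$ with standard-basis coordinates in its notation). One honest caveat, which you correctly flag: both your bound and the paper's give $O(1/n)$ rather than a literal $o(1/n)$; the paper's ``$o(1/n)$'' should be read loosely, and since the downstream use in Proposition~\ref{thebig_prop} and Theorem~\ref{cramer_gene} only needs the remainders to vanish as $n\to\infty$, either your reparametrization of Lemma~\ref{allmost_orthogonal_basis} with a faster sequence $\epsilon_n$ or simply keeping $O(1/n)$ suffices.
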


\begin{proof}
In the following $x_i$ denotes the $i$-th coordinate of a vector $\xbf$ in the standard basis, \ie\ a vector writes $\xbf=(x_1,\dots,x_p)$. For $\xbf\in \R^{p}$, we can write in the new basis $\xbf= \sum_{i=1}^{p} [\langle \xbf,\ebf_{i}(n) \rangle+ R(\xbf,\ebf_{i}(n))] \ebf_{i}$ with $R(\xbf,\ebf_{i}(n))\stackrel{def}{=}x_{i}-\langle \xbf,\ebf_{i}(n) \rangle$. We have also $|R(\xbf,\ebf_{i}(n))|=o(\frac{1}{n})$. Indeed, 
\begin{equation*}
\begin{split}
\xbf=\sum_{i=1}^{p} x_i\ebf_{i} &\implies \forall j, \langle \xbf, \ebf_{j} \rangle= \sum_{i=1}^{p} x_{i} \langle \ebf_{i},\ebf_{j}\rangle \implies x_{j} - \langle \xbf, \ebf_{j} \rangle =\sum_{i \neq j} x_{i} \langle \ebf_{i},\ebf_{j}\rangle \\
&\implies |R(\xbf,\ebf_{j}(n))| = | \sum_{i \neq j} x_{i} \langle \ebf_{i},\ebf_{j}\rangle |  \implies |R(\xbf,\ebf_{j}(n))| \leq \frac{1}{n} \sum_{i \neq j} |x_{i}| 
\end{split}
\end{equation*}
Also in the same way for $\xbf,\ybf \in \R^{p} \times \R^{p}$ we can rewrite their inner product:
\begin{equation}
\label{doublescalr}
\langle \xbf,\ybf\rangle=\sum_{i=1}^{p} \langle \xbf,\ebf_{i}(n)\rangle \langle \ybf,\ebf_{i}(n)\rangle+\tilde{R}(\xbf,\ybf)
\end{equation} 
with:
\begin{equation*}
\begin{split}
\tilde{R}(\xbf,\ybf)&\stackrel{def}{=}\langle \xbf,\ybf\rangle - \sum_{i=1}^{p} \langle \xbf,\ebf_{i}(n)\rangle \langle \ybf,\ebf_{i}(n)\rangle \\
&= \sum_{i\neq j} \langle \xbf,\ebf_{i}(n)\rangle \langle \ybf,\ebf_{i}(n)\rangle \langle \ebf_{j}(n),\ebf_{i}(n)\rangle + \sum_{i,j}  \langle \xbf,\ebf_{i}(n)\rangle R(\ybf,\ebf_{j}(n)) \langle \ebf_{j}(n),\ebf_{i}(n)\rangle \\
&+  \sum_{i,j}  \langle \ybf,\ebf_{j}(n)\rangle R(\xbf,\ebf_{i}(n)) \langle \ebf_{j}(n),\ebf_{i}(n)\rangle + \sum_{i,j} R(\xbf,\ebf_{j}(n)) R(\ybf,\ebf_{i}(n))\langle \ebf_{j}(n),\ebf_{i}(n)\rangle 
\end{split}
\end{equation*}
and with the same calculus than for $R$ we have $|\tilde{R}(\xbf,\ybf)|=o(\frac{1}{n})$.
\end{proof}

\begin{prop}
\label{thebig_prop}
Let $\mu,\nu \in \Pm(\R^{p})\times \Pm(\R^{p})$ and suppose that $\mathcal{Q}_{\thetab}$ holds for almost all $\thetab \in \Sp^{p-1}$ and that $\nu$ has compact support. There exists a sequence $(f_{n})_{n\in \mathbb{N}}$ from $\supp(\mu)$ to $\supp(\nu)$ uniformly bounded which satisfies:

\begin{equation}
\label{close}
\forall n \in \mathbb{N}, \forall \xbf,\xbf' \in \supp(\mu)^{2}, \ \big| \|f_{n}(\xbf)-f_{n}(\xbf')\|_1-\|\xbf-\xbf'\|_1 \big| =  o(\frac{1}{n})
\end{equation}

\begin{equation}
\label{fourr}
\forall n \in \mathbb{N}, \forall \sbf \in \R^{p}, \ |\mathcal{F}_{f_{n}\#\mu}(\sbf)-\mathcal{F}_{\nu}(\sbf)|=o(\frac{1}{n})
\end{equation}

\end{prop}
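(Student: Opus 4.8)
The plan is to read off the sequence $(f_{n})$ directly from the almost‑orthogonal, $\mathcal{Q}$‑compatible bases produced by Lemma~\ref{allmost_orthogonal_basis}. For $n>p-1$ I would fix $(\ebf_{1}(n),\dots,\ebf_{p}(n))\in\mathcal{B}^{n}_{p}\cap Q$, write $T_{k}:=T_{\ebf_{k}(n)}$ for the $1$D surjective isometry from $\supp(P_{\ebf_{k}(n)}\#\mu)$ onto $\supp(P_{\ebf_{k}(n)}\#\nu)$ provided by $\mathcal{Q}_{\ebf_{k}(n)}$, and set
\begin{equation*}
f_{n}(\xbf):=\sum_{k=1}^{p}T_{k}\big(\langle\xbf,\ebf_{k}(n)\rangle\big)\,\ebf_{k}(n),\qquad\xbf\in\supp(\mu).
\end{equation*}
Two structural facts come first. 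Since $\supp(\nu)$ is compact and each $\supp(P_{\ebf_{k}(n)}\#\mu)$ is isometric to the compact set $\supp(P_{\ebf_{k}(n)}\#\nu)$, it is itself compact, so $\supp(\mu)$ — closed and bounded along the $p$ nearly‑orthogonal directions $\ebf_{k}(n)$ — is compact. Moreover a surjective isometry between compact subsets of $\R$ is the restriction of an affine map of slope $\pm1$ (compare the two diameter‑realising points), hence $T_{k}(t)=\varepsilon^{(n)}_{k}t+c^{(n)}_{k}$ with $\varepsilon^{(n)}_{k}\in\{-1,1\}$; and as $T_{k}(\langle\xbf,\ebf_{k}(n)\rangle)\in\supp(P_{\ebf_{k}(n)}\#\nu)\subset[-R,R]$ with $R:=\sup_{\ybf\in\supp(\nu)}\|\ybf\|_{2}$ independent of $n$, the family $(f_{n})$ is uniformly bounded.

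For \eqref{close} I would pass to the almost‑orthogonal basis: Lemma~\ref{write_in_new} gives $\|\vbf\|_{1}=\sum_{k}|\langle\vbf,\ebf_{k}(n)\rangle|+o(1/n)$ on bounded sets. Applying it to $\vbf=f_{n}(\xbf)-f_{n}(\xbf')$, and discarding the cross terms via $|\langle\ebf_{i}(n),\ebf_{k}(n)\rangle|=o(1/n)$ for $i\neq k$, gives $\langle f_{n}(\xbf)-f_{n}(\xbf'),\ebf_{k}(n)\rangle=T_{k}(\langle\xbf,\ebf_{k}(n)\rangle)-T_{k}(\langle\xbf',\ebf_{k}(n)\rangle)+o(1/n)$, whose modulus equals $|\langle\xbf-\xbf',\ebf_{k}(n)\rangle|+o(1/n)$ because $T_{k}$ is a $1$D isometry. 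Summing over $k$ and applying Lemma~\ref{write_in_new} again to $\vbf=\xbf-\xbf'$ yields \eqref{close}. This step, together with the uniform boundedness above, is routine.

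The substantive step is \eqref{fourr}. For $\sbf\neq0$ put $\thetab:=\sbf/\|\sbf\|_{2}$; whenever $\mathcal{Q}_{\thetab}$ holds (hence for a.e.\ $\sbf$) one has $\mathcal{F}_{\nu}(\sbf)=\mathcal{F}_{P_{\thetab}\#\nu}(\|\sbf\|_{2})=\int e^{-2i\pi\|\sbf\|_{2}\,T_{\thetab}(\langle\thetab,\xbf\rangle)}\,d\mu(\xbf)$, while $\mathcal{F}_{f_{n}\#\mu}(\sbf)=\int e^{-2i\pi\|\sbf\|_{2}\,P_{\thetab}(f_{n}(\xbf))}\,d\mu(\xbf)$, so that
\begin{equation*}
\big|\mathcal{F}_{f_{n}\#\mu}(\sbf)-\mathcal{F}_{\nu}(\sbf)\big|\leq 2\pi\|\sbf\|_{2}\int_{\supp(\mu)}\big|P_{\thetab}(f_{n}(\xbf))-T_{\thetab}(\langle\thetab,\xbf\rangle)\big|\,d\mu(\xbf).
\end{equation*}
Expanding $P_{\thetab}(f_{n}(\xbf))=\sum_{k}\big(\varepsilon^{(n)}_{k}\langle\xbf,\ebf_{k}(n)\rangle+c^{(n)}_{k}\big)\langle\thetab,\ebf_{k}(n)\rangle$, $T_{\thetab}(\langle\thetab,\xbf\rangle)=\varepsilon_{\thetab}\langle\thetab,\xbf\rangle+c_{\thetab}$, and rewriting $\langle\thetab,\xbf\rangle$ by Lemma~\ref{write_in_new}, the integrand equals $\sum_{k}\big(\varepsilon^{(n)}_{k}-\varepsilon_{\thetab}\big)\langle\xbf,\ebf_{k}(n)\rangle\langle\thetab,\ebf_{k}(n)\rangle+\big(\langle\bbf_{n},\thetab\rangle-c_{\thetab}\big)+o(1/n)$ with $\bbf_{n}:=\sum_{k}c^{(n)}_{k}\ebf_{k}(n)$. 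The hard part will be to bring these first two terms down to $o(1/n)$: this forces one to extract a \emph{global} rigidity from the purely directional hypothesis, namely that the slope‑signs $\varepsilon_{\thetab}$ can be chosen equal to a single $\varepsilon\in\{-1,1\}$ on a full‑measure set of $\thetab$, and that $\thetab\mapsto c_{\thetab}$ coincides, up to $o(1/n)$, with one fixed linear functional $\thetab\mapsto\langle\bbf,\thetab\rangle$. I expect this to require a measurable selection of $\thetab\mapsto T_{\thetab}$, the relations $\varepsilon_{-\thetab}=\varepsilon_{\thetab}$, $c_{-\thetab}=-c_{\thetab}$ coming from $P_{-\thetab}=-P_{\thetab}$, and a continuity/connectedness argument using that $\varepsilon_{\thetab}$ is two‑valued.

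Once such $\varepsilon$ and $\bbf$ are in hand, taking $\varepsilon^{(n)}_{k}=\varepsilon$ and $c^{(n)}_{k}=\langle\bbf,\ebf_{k}(n)\rangle$ in the construction gives $f_{n}(\xbf)=\varepsilon\xbf+\bbf+o(1/n)$; the affine map $g(\xbf)=\varepsilon\xbf+\bbf$ is an $\ell_{1}$‑isometry and, matching via $\mathcal{Q}_{\thetab}$ every $1$D projection of $\mu$ with that of $\nu$, satisfies $g\#\mu=\nu$ by the Cram\'er--Wold argument, hence maps $\supp(\mu)$ onto $\supp(\nu)$. Replacing each $f_{n}(\xbf)$ by a nearest point of $\supp(\nu)$ is then an $o(1/n)$ perturbation that leaves \eqref{close} and \eqref{fourr} intact while making $f_{n}$ take values in $\supp(\nu)$, which finishes the proof; the downstream Theorem~\ref{cramer_gene} only has to pass to an Arzela--Ascoli limit of $(f_{n})$.
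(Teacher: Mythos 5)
Your construction of $f_{n}$, the affine form $T_{\thetab}(t)=\varepsilon_{\thetab}t+c_{\thetab}$ of the one-dimensional surjective isometries, the uniform boundedness, and the derivation of \eqref{close} are all sound and essentially parallel to the paper. The genuine gap is in \eqref{fourr}: you reduce it to a \emph{global rigidity} claim --- that the signs $\varepsilon_{\thetab}$ can be chosen equal to a single $\varepsilon\in\{-1,1\}$ for almost every $\thetab$ and that $\thetab\mapsto c_{\thetab}$ is (up to $o(1/n)$) a fixed linear functional --- and you explicitly leave this unproved (``I expect this to require a measurable selection \dots and a continuity/connectedness argument''). That claim is not a routine technicality: it is equivalent to asserting that $\nu$ is a translate of $\mu$ or of $-\mu$, a conclusion strictly stronger than the proposition (and than Theorem \ref{cramer_gene}, which only asserts an $\ell_{1}$-isometry between the supports). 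Establishing it would require handling, e.g., the directions $\thetab$ for which $P_{\thetab}\#\mu$ is symmetric, where the sign is not determined and a coherent selection across a full-measure set must be engineered; your sketch does not show how to do this, and the final step (replacing $f_{n}$ by nearest points of $\supp(\nu)$) also hinges on it. So as written the substantive half of the proposition is missing.

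By contrast, the paper's proof never needs any coherence across directions. It defines $f_{n}$ coordinatewise through the $T_{\ebf_{k}(n)}$ along the almost-orthogonal basis of Lemma \ref{allmost_orthogonal_basis} and obtains \eqref{fourr} directly from the per-direction push-forward identities $\mathcal{F}_{T_{\ebf_{k}(n)}\#(P_{\ebf_{k}(n)}\#\mu)}=\mathcal{F}_{P_{\ebf_{k}(n)}\#\nu}$ evaluated at the frequencies $s_{k}$, absorbing all effects of the non-orthogonality of the basis into the $o(\frac{1}{n})$ remainders $R$ and $\tilde{R}$ of Lemma \ref{write_in_new}; no structure of the maps $T_{\thetab}$ beyond the isometry and measure-preservation in each single direction is used. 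If you want to salvage your route, you would need to actually prove the sign-coherence (for instance via odd moments of the projections, which are polynomial in $\thetab$, plus a separate argument when all such moments vanish) and then deduce linearity of $c_{\thetab}$ from the means; but this is a considerably harder path than the statement requires, whereas adapting your first two steps to the paper's per-coordinate Fourier argument closes the proof without it.
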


\begin{proof}
In the following $x_i$ denotes the $i$-th coordinate of a vector $\xbf$ in the standard basis, \ie\ a vector writes $\xbf=(x_1,\dots,x_p)$. We define:
\begin{equation}
\label{fn}
\forall n >p-1, \ \forall \xbf \in \text{supp}(\mu), \ f_{n}(\xbf)=(T_{\ebf_{1}(n)}(\langle \xbf,\ebf_{1}(n)\rangle),...,T_{\ebf_{p}(n)}(\langle \xbf,\ebf_{p}(n)\rangle))
\end{equation}
where $(\ebf_k(n))_{k\in \integ{p}}$ is the almost orthogonal basis define in Lemma \ref{allmost_orthogonal_basis}, and $T_{\ebf_{k}(n)}$ is defined from \eqref{conserv} since we have $\mathcal{Q}_{\ebf_{k}(n)}$ for all $k$. It is clear from the definition that $f_{n}(\xbf) \in \supp(\nu)$. Moreover for $\xbf,\xbf' \in \supp(\mu)$:
\begin{equation*}
\begin{split}
\|f_{n}(\xbf)-f_{n}(\xbf')\|_1&= \sum_{k=1}^{p} |T_{\ebf_{k}(n)}(\langle \xbf,\ebf_{k}(n)\rangle)-T_{\ebf_{k}(n)}(\langle \xbf',\ebf_{k}(n)\rangle) | \stackrel{(*)}{=} \sum_{k=1}^{p} |\langle \xbf,\ebf_{k}(n)\rangle-\langle \xbf',\ebf_{k}(n)\rangle| \\
&= \sum_{k=1}^{p} |\langle \xbf-\xbf',\ebf_{k}(n)\rangle| \\
\end{split}
\end{equation*}
where in (*) we used that $T_{\ebf_{k}(n)}$ is an isometry since we have $\mathcal{Q}_{\ebf_{k}(n)}$ and $\langle \xbf,\ebf_{k}(n)\rangle \in \supp(P_{\ebf_{k}(n)} \# \mu)$ (idem for $\xbf'$). 
In this way: 
\begin{equation*}
\begin{split}
\big| \|f_{n}(\xbf)-f_{n}(\xbf')\|_1-\|\xbf-\xbf'\|_1 \big|&= \big| \sum_{k=1}^{p} |\langle \xbf-\xbf',\ebf_{k}(n)\rangle| -|x_{k}-x_{k}'|  \big| \leq \sum_{k=1}^{p}\big| |\langle \xbf-\xbf',\ebf_{k}(n)\rangle| -|x_{k}-x_{k}'| \big| \\
&\stackrel{*}{\leq} \sum_{k=1}^{p}|\langle \xbf-\xbf',\ebf_{k}(n)\rangle -(x_{k}-x_{k}')| = \sum_{k=1}^{p}|R(\xbf-\xbf',\ebf_{k}(n))|=o(\frac{1}{n}) \\
\end{split}
\end{equation*}
where in (*) the second triangular inequality $| |x|- |y| |\leq |x-y|$. Hence: 
\begin{equation}
\big| \|f_{n}(\xbf)-f_{n}(\xbf')\|_1-\|\xbf-\xbf'\|_1 \big| =o(\frac{1}{n})
\end{equation}
Moreover we have by definition of the Fourier transform, for $s \in \R^{P}$, 
\begin{equation*}
\label{batmanbegin}
\begin{split}
\mathcal{F}_{f_{n}\#\mu}(\sbf)&= \int e^{-2i\pi\langle \sbf,f_{n}(\xbf)\rangle} d\mu(\xbf) = \int e^{-2i\pi \sum_{k=1}^{p} s_{k} T_{\ebf_{k}(n)}(\langle \xbf,\ebf_{k}(n)\rangle)} d\mu(\xbf) 
\end{split}
\end{equation*}
Moreover using \eqref{conserv} we have $\mathcal{F}_{T_{\ebf_{k}(n)}\#(P_{\ebf_{k}(n)}\#\mu)}(t)=\mathcal{F}_{P_{\ebf_{k}(n)}\#\nu}(t)$ for all $k\in \{1,...,p\}$, and any real $t\in \R$. This implies $\int e^{-2i\pi t.T_{\ebf_{k}(n)}(\langle \ebf_{k}(n) ,\xbf\rangle)} \dr\mu(\xbf) =  \int e^{-2i\pi t \langle \ebf_{k}(n) ,\ybf\rangle} \dr\nu(\ybf)$. So by applying this results for $t=s_{k}$ we have: 
\begin{equation}
\label{equivfourr2}
\int e^{-2i\pi  s_{k}  T_{\ebf_{k}(n)}(\langle \xbf,\ebf_{k}(n)\rangle)} d\mu(\xbf) =  \int e^{-2i\pi s_{k}  \langle \ebf_{k}(n),y\rangle} d\nu(\ybf)
\end{equation}
Combining both results: 
\begin{equation}
\label{fourr_}
\mathcal{F}_{f_{n}\#\mu}(\sbf)=  \int e^{ -2i\pi \sum_{k=1}^{p} s_{k}  \langle \ebf_{k}(n),\ybf\rangle} \dr\nu(\ybf)
\end{equation}
We can now bound $|\mathcal{F}_{f_{n}\#\mu}(\sbf)-\mathcal{F}_{\nu}(\sbf)|$ as:
\begin{equation*}
\begin{split}
|\mathcal{F}_{f_{n}\#\mu}(\sbf)-\mathcal{F}_{\nu}(\sbf)|&= | \mathcal{F}_{f_{n}\#\mu}(\sbf)- \int e^{-2i\pi\langle \sbf,\ybf\rangle} \dr\nu(\ybf) |\\
&\stackrel{*}{=} | \mathcal{F}_{f_{n}\#\mu}(\sbf) - \int e^{-2i\pi [\sum_{k=1}^{p} \langle \sbf,\ebf_{k}(n) \rangle \langle \ebf_{k}(n),\ybf\rangle +\tilde{R}(\sbf,\ybf)]} \dr\nu(\ybf) | \\
&\stackrel{**}{=} |  \int e^{-2i\pi \sum_{k=1}^{p} s_{k}  \langle \ebf_{k}(n),\ybf\rangle} \dr\nu(\ybf) - \int e^{-2i\pi\tilde{R}(\sbf,\ybf)}e^{-2i\pi \sum_{k=1}^{p} \langle \sbf,\ebf_{k}(n) \rangle \langle \ebf_{k}(n),\ybf\rangle } \dr\nu(\ybf) | \\
\end{split}
\end{equation*}
where in (*) we used the expression in the new base of the inner product $\langle \sbf,\ybf\rangle$ seen in Lemma \ref{write_in_new}, in (**) we used \eqref{fourr_}. By injecting the expression of $s_{k}$ \textit{w.r.t.} the new base we have:
\begin{equation}
\begin{split}
|\mathcal{F}_{f_{n}\#\mu}(\sbf)-\mathcal{F}_{\nu}(\sbf)|&\leq |  \int e^{-2i\pi \sum_{k=1}^{p} (\langle \sbf,\ebf_{k}(n) \rangle +R(\sbf,\ebf_{k}(n)))  \langle \ebf_{k}(n),\ybf\rangle} \dr\nu(\ybf) \\
&- \int e^{-2i\pi\tilde{R}(\sbf,\ybf)}e^{-2i\pi \sum_{k=1}^{p} \langle s,\ebf_{k}(n) \rangle \langle \ebf_{k}(n),y\rangle } \dr\nu(\ybf) | \\
&=\big| \int e^{-2i\pi \sum_{k=1}^{p} \langle \sbf,\ebf_{k}(n) \rangle \langle \ebf_{k}(n),\ybf\rangle} (e^{-2i\pi \sum_{k=1}^{p} R(\sbf,\ebf_{k}(n)) \langle \ebf_{k}(n),\ybf\rangle}-e^{-2i\pi\tilde{R}(\sbf,\ybf)}) \dr\nu(\ybf) \big| \\
&\leq \int |e^{-2i\pi \sum_{k=1}^{p} R(\sbf,\ebf_{k}(n)) \langle \ebf_{k}(n),\ybf\rangle}-e^{-2i\pi\tilde{R}(\sbf,\ybf)}| \dr\nu(\ybf) \\
&= \int |e^{-2i\pi\tilde{R}(\sbf,\ybf)} (e^{-2i\pi (\sum_{k=1}^{p} R(\sbf,\ebf_{k}(n)) \langle \ebf_{k}(n),y\rangle-\tilde{R}(\sbf,\ybf))}-1)|\dr\nu(\ybf) \\
&\leq \int |e^{-2i\pi (\sum_{k=1}^{p} R(\sbf,\ebf_{k}(n)) \langle \ebf_{k}(n),\ybf\rangle-\tilde{R}(\sbf,\ybf))}-1| \dr\nu(\ybf) \\
&= \int | 2ie^{-i\pi (\sum_{k=1}^{p} R(\sbf,\ebf_{k}(n)) \langle \ebf_{k}(n),\ybf\rangle-\tilde{R}(\sbf,\ybf))} \sin(\pi (\sum_{k=1}^{p} R(\sbf,\ebf_{k}(n)) \langle \ebf_{k}(n),\ybf\rangle-\tilde{R}(\sbf,\ybf)) | \dr\nu(\ybf) \\
&\leq  \int | \sin(\pi (\sum_{k=1}^{p} R(\sbf,\ebf_{k}(n)) \langle \ebf_{k}(n),\ybf\rangle-\tilde{R}(\sbf,\ybf)) | \dr\nu(\ybf) \\
&\leq \pi \int ( \sum_{k=1}^{p} |R(\sbf,\ebf_{k}(n)) \langle \ebf_{k}(n),\ybf\rangle|+|\tilde{R}(\sbf,\ybf)| )\dr\nu(\ybf) \\
&\stackrel{*}{=}o(\frac{1}{n})
\end{split}
\end{equation}
in (*) the fact that each term is $o(\frac{1}{n})$. In this way: 
\begin{equation}
|\mathcal{F}_{f_{n}\#\mu}(\sbf)-\mathcal{F}_{\nu}(\sbf)| =o(\frac{1}{n})
\end{equation}

Moreover $(f_{n})_{n > p-1}$ is also uniformly bounded. To see that we consider $\xbf \in \text{supp}(\mu)$. We have that for all $k \in \integ{p}$ $T_{\ebf_{k}(n)}(\langle \xbf,e_{k}(n)\rangle) \in \supp(P_{\ebf_{k}(n)}\#\nu)$ by definition of $T_{\ebf_{k}(n)}$. So there exists a $\ybf_{0}(\xbf,n,k) \in \text{supp}(\nu)$ such that $T_{\ebf_{k}(n)}(\langle \xbf,\ebf_{k}(n)\rangle)=\langle \ybf_{0}(x,n,k),\ebf_{k}(n)\rangle$. In this way $|T_{\ebf_{k}(n)}(\langle \xbf,\ebf_{k}(n)\rangle)|=|\langle \ybf_{0}(x,n,k),\ebf_{k}(n)\rangle| \leq \|\ybf_{0}(x,n,k)\|_{2} \|\ebf_{k}(n)\|_2$ by Cauchy-Swartz. 

Moreover $\|\ebf_{k}(n)\|_2 < \sqrt{\frac{1}{n}} \leq \sqrt{\frac{1}{p-1}} \leq 1$ and since $\nu$ has compact support then there is a constant $M_{\nu}$ we have $\|\ybf_{0}(x,n,k)\|_{2}\leq M_{\nu}$

So we have for $n\in \mathbb{N}$, $\xbf \in \text{supp}(\mu)$, 
\begin{equation*}
 \begin{split}
\|f_{n}(\xbf)\|_{2}^{2}&=\sum_{k=1}^{p} |T_{\ebf_{k}(n)}(\langle \xbf,\ebf_{k}(n)\rangle)|^{2} \leq p M_{\nu}
\end{split}
\end{equation*}
Since on $\mathbb{R}^{p}$ all norms are equivalent this suffices to state the existence of a constant $C$ such that $\forall \xbf \in \mathbb{R}^{p}, n \in \mathbb{N}, \|f_{n}(\xbf)\|_1\leq C$ so that $(f_n)_{n\in \mathbb{N}}$ is uniformly bounded. Reindexing $(f_n)_{n> p-1}$ gives the desired result.

\end{proof}

We can now prove Theorem \ref{cramer_gene}.

\begin{proof}[Proof of Theorem \ref{cramer_gene}]
We consider the sequence $(f_{n})_{n \in \mathbb{N}}$ defined in Proposition \ref{thebig_prop}. We will show that $(f_{n})_{n \in \mathbb{N}}$ is equicontinuous. Let $\epsilon >0$, using \eqref{close} there exists a $N\in \mathbb{N}$ such that  we have for all $\xbf,\xbf' \in \text{supp}(\mu)$:
\begin{equation} 
\|f_{n}(\xbf)-f_{n}(\xbf')\|_1 \leq  \epsilon +\|\xbf-\xbf'\|_1 \ \text{ for all } n \geq N
\end{equation}

Now let $\delta < \epsilon$. Suppose that $\|\xbf-\xbf'\|_1 < \delta$ then 

\begin{equation} 
\|f_{n}(x)-f_{n}(x')\|_1 <  \epsilon +\delta < 2 \epsilon \ \text{ for all } n \geq N
\end{equation}

Without loss of generality we can reindex $(f_{n})_{n \in \mathbb{N}}$ for $n$ large enough ($n\geq N$) so that $(f_{n})_{n \in \mathbb{N}}$ is equicontinuous with the previous argument.

Since $(f_{n})_{n \in \mathbb{N}}$ is a uniformly bounded and equicontinuous sequence from the support of $\mu$ which is compact to $\R^{p}$ we can apply Arzela-Ascoli theorem which states that $(f_{n})_{n \in \mathbb{N}}$ has a uniformly convergent subsequence. We denote by $(f_{\phi(n)})_n$ this sequence. We have $f_{\phi(n)} \underset{n \to \infty}{\underset{\rightarrow}{u}} f$ this sequence.

Moreover equation \eqref{fourr} states that for all $\sbf \in \R^{p}$, $\mathcal{F}_{f_{n}\#\mu}(\sbf) \underset{n \to \infty}{\rightarrow} \mathcal{F}_{\nu}(\sbf)$. In this way $(\mathcal{F}_{f_{n}\#\mu}(\sbf))_{n \in \mathbb{N}}$ is a convergent real valued sequence, so every adherence value goes to the same limit, hence $\mathcal{F}_{f_{\phi(n)}\#\mu}(\sbf) \underset{n \to \infty}{\rightarrow} \mathcal{F}_{\nu}(\sbf)$.

Moreover the function $f$ is a measure preserving isometry from $\supp(\mu)$ to $\supp(\nu)$. Indeed let $\epsilon_{1} >0, \sbf\in \R^{p}$, there exists from previous statements $N_{0},N_{1} \in \mathbb{N}$ such that for $n\geq N_{0}$, $|\mathcal{F}_{f_{\phi(n)}\#\mu}(\sbf)-\mathcal{F}_{\nu}(\sbf)| < \epsilon_{1}$ and $n\geq N_{1}$, $|\mathcal{F}_{f_{\phi(n)}\#\mu}(\sbf)-\mathcal{F}_{f\#\mu}(\sbf)| < \epsilon_{1}$. Let $n \geq \text{max}(N_{0},N_{1})$
\begin{equation*}
\begin{split}
|\mathcal{F}_{f\#\mu}(\sbf)-\mathcal{F}_{\nu}(\sbf)| &\leq |\mathcal{F}_{f_{\phi(n)}\#\mu}(\sbf)-\mathcal{F}_{\nu}(\sbf)| + |\mathcal{F}_{f_{\phi(n)}\#\mu}(s)-\mathcal{F}_{f\#\mu}(\sbf)|\\
&< 2 \epsilon_{1}
\end{split}
\end{equation*}
As this result holds for any $\epsilon_{1} >0$ we have $\mathcal{F}_{f\#\mu}(\sbf)=\mathcal{F}_{\nu}(\sbf)$ and by injectivity of the Fourrier transform $f\#\mu=\nu$ such that $f$ is measure preserving. 

In the same way for any $\xbf,\xbf' \in \text{supp}(\mu), \epsilon_{2}>0$ and $n$ large enough:  
\begin{equation*}
\begin{split}
\big| \|f(\xbf)-f(\xbf')\|_1-\|\xbf-\xbf'\|_1 \big| &\leq \big| \|f_{\phi(n)}(\xbf)-f_{\phi(n)}(\xbf')\|_1-\|f(\xbf)-f(\xbf')\|_1 \big| \\
&+ \big|\|f_{\phi(n)}(\xbf)-f_{\phi(n)}(\xbf')\|_1-\|\xbf-\xbf'\|_1 \big|< 2 \epsilon_{2}
\end{split}
\end{equation*}
using $f_{\phi(n)} \underset{n \to \infty}{\underset{\rightarrow}{u}} f$ and \eqref{close}. As this result holds true for any $\epsilon_{2} >0$ we have $\|f(\xbf)-f(\xbf')\|=\|\xbf-\xbf'\|$ for any $\xbf,\xbf' \in \text{supp}(\mu)$ which concludes.

\end{proof}

\begin{corr}
Let $\mu,\nu \in \Pm(\R^{p})\times \Pm(\R^{p})$ with compact support. If $\sgw(\mu,\nu)=0$ then $\mu$ and $\nu$ are isomorphic for the distance induced by the $\ell_{1}$ norm on $\R^{p}$, \ie\ $d(\xbf,\xbf')=\sum_{i=1}^{p} |x_{i}-x_{i}'|$ for $(\xbf,\xbf') \in \R^{p} \times \R^{p}$. In particular this implies:
\begin{equation}
\sgw(\mu,\nu)=0 \implies \gw_{2}(d,\mu,\nu)=0
\end{equation} 

\end{corr}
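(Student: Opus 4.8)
The plan is to peel off the definition of $\sgw$, invoke the one-dimensional rigidity statement to obtain, for almost every direction, a line-wise isometry, and then feed these isometries into Theorem~\ref{cramer_gene} to build a global $\ell_1$-isometry; a short last step then recovers $\gw_2(d,\mu,\nu)=0$.

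\textbf{Step 1 (reduce to almost every direction).} Recall $\sgw(\mu,\nu)=\fint_{\Sp^{p-1}}\gw^2_2(d^2,P_{\thetab}\#\mu,P_{\thetab}\#\nu)\,d\thetab$. The integrand is non-negative and measurable (indeed continuous, since $\thetab\mapsto P_{\thetab}\#\mu$ is weakly continuous and, $\supp(\mu)$ and $\supp(\nu)$ being compact, $\gw^2_2(d^2,\cdot,\cdot)$ is continuous for the weak topology on measures with uniformly bounded support). Hence $\sgw(\mu,\nu)=0$ forces $\gw^2_2(d^2,P_{\thetab}\#\mu,P_{\thetab}\#\nu)=0$ for $\lambda_{p-1}$-almost every $\thetab\in\Sp^{p-1}$.

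\textbf{Step 2 (line-wise isometries).} For such a $\thetab$, $P_{\thetab}\#\mu$ and $P_{\thetab}\#\nu$ are compactly supported measures on $\R$ whose squared-Euclidean $\gw$ vanishes, so I would apply Lemma~\ref{gw_with_squared_distances} in dimension one: it yields a surjective $T_{\thetab}:\supp(P_{\thetab}\#\mu)\to\supp(P_{\thetab}\#\nu)$ with $T_{\thetab}\#(P_{\thetab}\#\mu)=P_{\thetab}\#\nu$ and $|T_{\thetab}(x)-T_{\thetab}(x')|=|x-x'|$, which is exactly property $\mathcal{Q}_{\thetab}$. Thus $\mathcal{Q}_{\thetab}$ holds for $\lambda_{p-1}$-almost every $\thetab$.

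\textbf{Step 3 (globalize and conclude).} Since $\mu$ and $\nu$ have compact support, Theorem~\ref{cramer_gene} now applies verbatim and produces a measure preserving isometry $f:\supp(\mu)\to\supp(\nu)$ with $f\#\mu=\nu$ and $\|f(\xbf)-f(\xbf')\|_1=\|\xbf-\xbf'\|_1$ for all $\xbf,\xbf'\in\supp(\mu)$; this is precisely the assertion that $\mu$ and $\nu$ are isomorphic for the distance $d$ induced by $\|\cdot\|_1$. For the last implication, exactly as in the ``if'' direction of Lemma~\ref{gw_with_squared_distances}, the coupling $\pi=(\mathrm{Id}\times f)\#\mu$ has marginals $\mu$ and $\nu$, and since $d(f(\xbf),f(\xbf'))=d(\xbf,\xbf')$ for every pair in $\supp(\pi)$ the $\gw$ objective evaluated at $\pi$ vanishes, giving $\gw_2(d,\mu,\nu)=0$.

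\textbf{Main obstacle.} Essentially all the analytic weight sits in Theorem~\ref{cramer_gene} (the almost-orthogonal-basis construction together with the Arzela-Ascoli / Fourier-uniqueness limiting argument), which I treat as a black box. Within the corollary itself the only points needing care are (i) the measurability step, ensuring that $\thetab\mapsto\gw^2_2(d^2,P_{\thetab}\#\mu,P_{\thetab}\#\nu)$ is a legitimate integrand so that a vanishing integral gives a.e.\ vanishing of the integrand, and (ii) verifying that the conclusion of Lemma~\ref{gw_with_squared_distances} is literally the hypothesis $\mathcal{Q}_{\thetab}$ consumed by Theorem~\ref{cramer_gene}; both are routine once spelled out.
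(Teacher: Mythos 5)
Your proposal is correct and follows essentially the same route as the paper: vanishing of the non-negative integrand for almost every $\thetab$, the 1D vanishing characterization of $\gw$ to get the line-wise isometries (property $\mathcal{Q}_{\thetab}$), Theorem~\ref{cramer_gene} as the black box producing the global $\ell_1$-isometry $f$, and the coupling $\pi=(\mathrm{Id}\times f)\#\mu$ to conclude $\gw_2(d,\mu,\nu)=0$. The only difference is that you spell out the measurability/a.e.\ step more explicitly than the paper does, which is a welcome but inessential elaboration.
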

\begin{proof}
If $\sgw(\mu,\nu)=0$ then using the Gromov-Wasserstein properties it implies that for almost all $\thetab \in \Sp^{p-1}$ the projected measures are isomorphic. Moreover since $\mu,\nu$ have compact support, it is bounded and we can directly apply Theorem \ref{cramer_gene} to state the existence of a measure preserving application $f$ as defined in Theorem \ref{cramer_gene}. We consider the coupling $\pi=(Id\times f)\#\mu \in \couplingset(\mu,\nu)$ since $f\#\mu=\nu$. Then we have:
\begin{equation*}
\begin{split}
\int \int |d(\xbf,\xbf')-d(\ybf,\ybf')|^{2} \dr \pi(\xbf,\ybf) \dr \pi(\xbf',\ybf') &= \int \int |d(\xbf,\xbf')-d(f(\xbf),f(\xbf'))|^{2} \dr \mu(\xbf)\dr\mu(\xbf') \\
&=\int \int |\|\xbf-\xbf'\|_1-\|f(\xbf)-f(\xbf')\|_1|^{2} \dr \mu(\xbf)\dr\mu(\xbf') =0
\end{split}
\end{equation*}
Since $f$ is an isometry. This directly implies that $\gw_{2}(d,\mu,\nu)=0$.
\end{proof}

\section{Algorithm for $\sgw$}
\begin{algorithm}[ht]{Sliced Gromov-Wasserstein for discrete measures}
\label{alg:sgw}
\begin{algorithmic}[1]
    \State $p<q$, $\mu= \frac{1}{n} \sum_{i=1}^{n} \delta_{x_{i}} \in \Pm(\R^{p})$ and $\nu=  \frac{1}{n} \sum_{i=1}^{n} \delta_{y_{j}} \in \Pm(\R^{q})$
    \State $ \forall i, x_{i}\leftarrow \D(x_{i}) $, sample uniformly $(\theta_{l})_{l =1,\dots,L} \in \Sp^{q-1}$ 
    \For {$l=1,\dots,L$}
    \State Sort $(\langle x_{i}, \theta_{l}\rangle)_{i}$ and $(\langle y_{j}, \theta_{l}\rangle)_{j}$ in increasing order
    \State Solve \eqref{eq:qap} for reals $(\langle x_{i}, \theta_{l}\rangle)_{i}$ and $(\langle y_{j}, \theta_{l}\rangle)_{j}$, $\sigma_{\theta_{l}}$ is the solution ($\sigma_{\theta_{l}}\in$ Anti-Id or Id )        
    \EndFor
    \State return {\small$\frac{1}{n^{2}L} \sum\limits_{l=1}^{L} \sum\limits_{i,k=1}^{n} \big(\langle x_{i}{-}x_{k},\theta_{l} \rangle^{2}{-}\langle y_{\sigma_{\theta_{l}}(i)}{-}y_{\sigma_{\theta_{l}}(k)},\theta_{l} \rangle^{2}\big)^{2}$}
\end{algorithmic}
\end{algorithm}

In practice, the computation trick presented in Equation~\eqref{eq:linear:computation} can be used to make the complexity of the computation in line 7 linear with $n$.

\section{$SW_{\D}$ and $RISW$}

\begin{figure}[t]
  \centering
  \includegraphics[width=.9\linewidth]{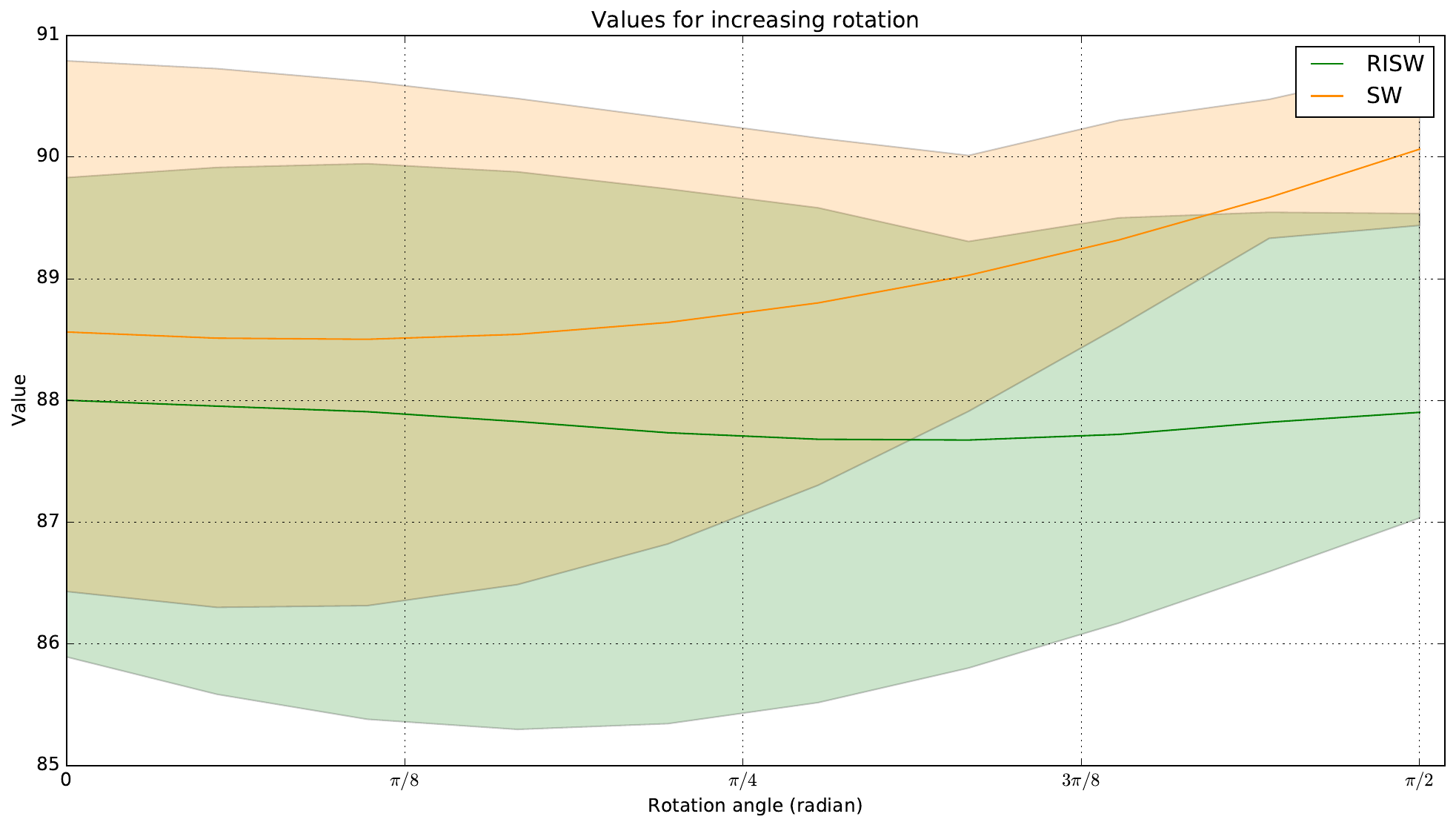}
  \caption{Illustration of $SW$, $RISW$ on spiral datasets for varying rotations on discrete 2D spiral datasets. (left) Examples of spiral distributions for source and target with different rotations. (right) Average value of $SW$ and $RISW$ with $L=20$ as a function of rotation angle of the target. Colored areas correspond to the 20\% and 80\% percentiles. }
  \label{fig:spiral_example_sw}
\end{figure}

Analogously to $SGW$ we can define for the Sliced-Wasserstein distance $SW_{\D}(\mu,\nu)$ for $\mu,\nu \in \Pm(\R^{p})\times \Pm(\R^{q})$ with $p\neq q$ and its rotational invariant counterpart as:

\begin{equation}{}
\label{sgw}
\begin{split}
&SW_{\D}(\mu,\nu)= \fint_{\mathbf{S}^{q-1}}SW(P_{\theta}\#\mu_{\D},P_{\theta}\#\nu) d\theta \\
&RISW(\mu,\nu)= \underset{\D \in \mathbb{V}_{q}(\R^{p})}{\min}SW_{\D}(\mu,\nu)
\end{split}
\end{equation}

where $SW$ is the Sliced-Wasserstein distance. The complexity for computing $SW_{\D}$ is $O(Ln(p+q+\log(n)))$ which is exactly the same complexity as $SGW_{\D}$. With these formulations, we can perform the same experiment as for RISGW on the spiral dataset. The optimisation on the Stiefel manifold is performed using Pymanopt as for $SGW$. Results are reported in Figure \ref{fig:spiral_example_sw}. As one can see, $RISW$ is rotational invariant on average whereas $SW$ is not. One can also note that, due to the sampling process of the spiral dataset, the variance is quite large. This can be explained by the fact that, unlike $SGW$, the Sliced-Wasserstein may realign the distributions without taking the rotation into account. 

\section{Supplementary results for the  $\sgw$ GAN Section}
We give here supplementary results for the $\sgw$ GAN experiment in Fig.~\ref{fig:gan}, where we consider first a generator that outputs 2D
samples, with a two dimensional target, and then a generator that generates 3D samples form a 2D target distribution. Here again, the results are reported
for $1000$ epochs. 
   \begin{figure}
   \centering
   \resizebox{\textwidth}{!}{
\begin{tabular}{ccc|c}
      \includegraphics[width=0.24\textwidth]{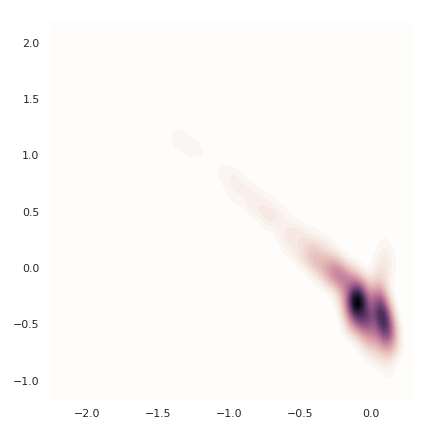}&
      \includegraphics[width=0.24\textwidth]{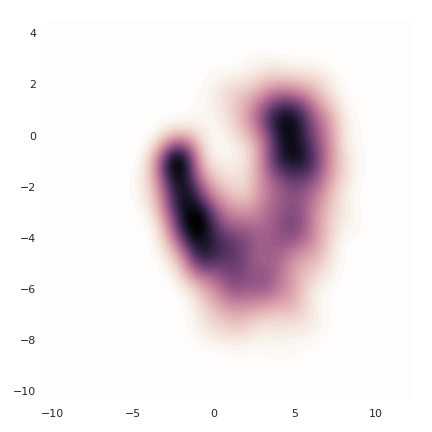}&
      \includegraphics[width=0.24\textwidth]{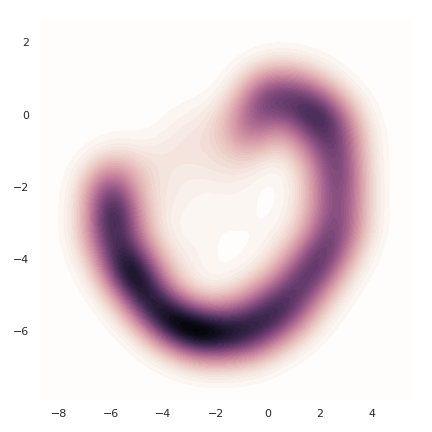}&
      \includegraphics[width=0.24\textwidth]{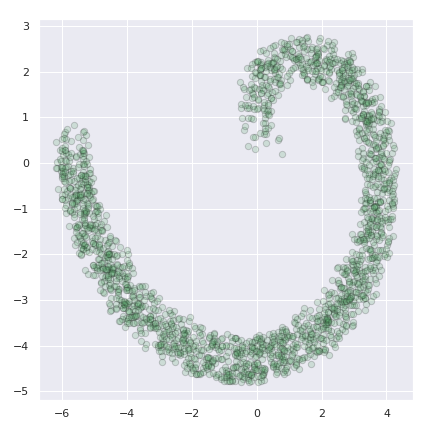}\\
      \includegraphics[width=0.24\textwidth]{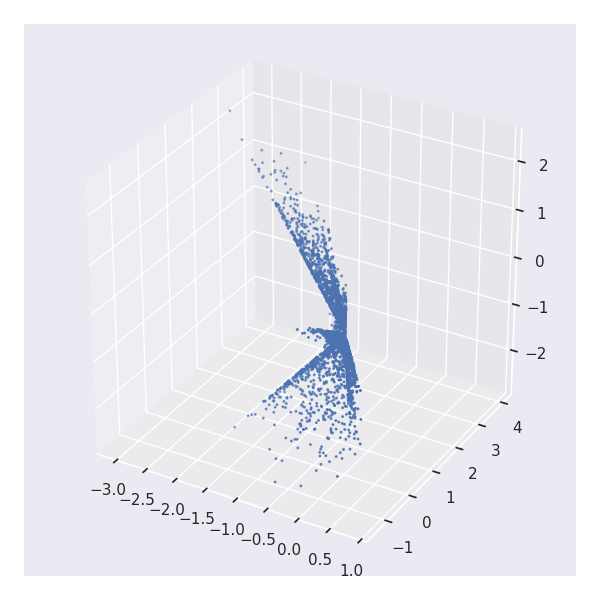}&
      \includegraphics[width=0.24\textwidth]{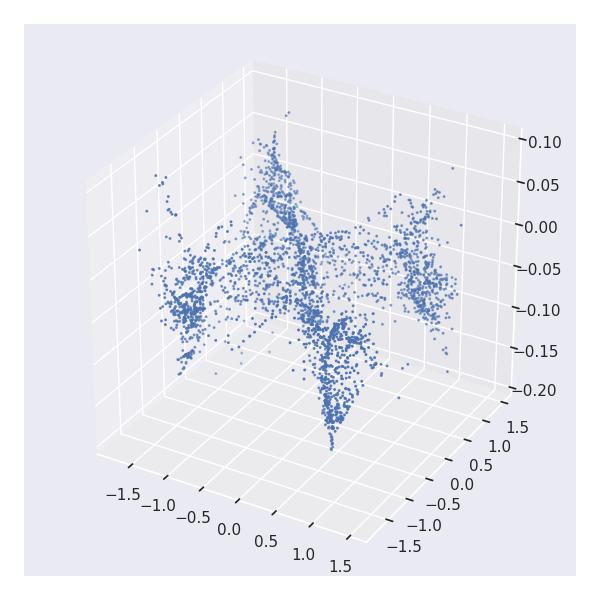}&
      \includegraphics[width=0.24\textwidth]{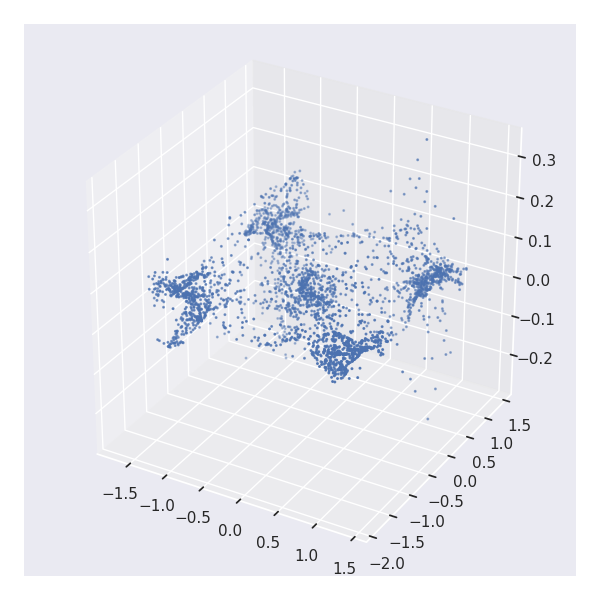}&
      \includegraphics[width=0.24\textwidth]{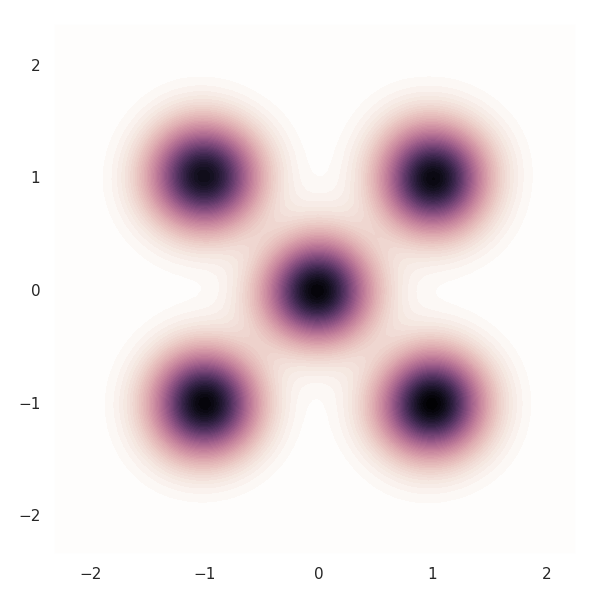}\\
         \end{tabular}}
                         
     \caption{Using $\sgw$ in a GAN loss. The three rows depicts three different examples. First row is 2D (Generator) to 2D (Target) , Second 3D to 2D. First column is initialization, second one is at $100$ Epochs, third one at $1000$. Last column depicts the target distribution.}
     \label{fig:gan}
   \end{figure}

\end{document}